\documentclass{article} 
\usepackage{iclr2026_conference,times}
\usepackage[utf8]{inputenc} 
\usepackage[T1]{fontenc}    
\usepackage[most]{tcolorbox}
\usepackage{adjustbox}

\newtcolorbox{empheqboxed}{colback=Gray!20, 
 colframe=white,
 width=\textwidth,
 sharpish corners,
 top=1mm, 
 bottom=0pt,
 left=2pt,
 right=2pt
}
\definecolor{lightbeige}{RGB}{250,243,224}

\usepackage[most]{tcolorbox}
\tcbset{
  beigenote/.style={
    enhanced,
    breakable,               
    width=\linewidth,        
    colback=lightbeige,      
    colframe=lightbeige!60!black, 
    boxrule=0.5pt,          
    arc=2pt,                
    left=8pt,right=8pt,top=6pt,bottom=6pt, 
  }
}
\newtcolorbox{beigebox}[1][]{beigenote,#1}

\usepackage[table]{xcolor}
\definecolor{panelA}{HTML}{F8F1EF} 
\definecolor{panelB}{HTML}{FFFFED} 
\newcolumntype{A}{>{\columncolor{panelA}}c}
\newcolumntype{B}{>{\columncolor{panelB}}c}

\usepackage{hyperref}       
\usepackage{url}            
\usepackage{booktabs}       
\usepackage{amsfonts}       
\usepackage{nicefrac}       
\usepackage{microtype}      
\usepackage[dvipsnames]{xcolor}         

\usepackage{amsmath,amsfonts,bm}

\def\eqref#1{equation~\ref{#1}}

\def\1{\bm{1}}

\def\rv{{\textnormal{v}}}

\def\rva{{\mathbf{a}}}
\def\rvb{{\mathbf{b}}}
\def\rvc{{\mathbf{c}}}

\def\rvr{{\mathbf{r}}}
\def\rvs{{\mathbf{s}}}

\def\rvv{{\mathbf{v}}}

\def\rvx{{\mathbf{x}}}
\def\rvy{{\mathbf{y}}}
\def\rvz{{\mathbf{z}}}

\DeclareMathAlphabet{\mathsfit}{\encodingdefault}{\sfdefault}{m}{sl}
\SetMathAlphabet{\mathsfit}{bold}{\encodingdefault}{\sfdefault}{bx}{n}

\usepackage{url}
\usepackage{lipsum}
\usepackage{pifont}
\usepackage{etoolbox}
\usepackage{lineno} 
\usepackage{microtype}
\usepackage{graphicx}
\usepackage{floatrow}
\usepackage{hyperref}
\usepackage{tikz-cd}
\usepackage{amssymb, amsmath, amsthm, amsfonts}
\usepackage{soul}
\usepackage{tikz}
\usepackage{wrapfig}
\usepackage[ruled,vlined]{algorithm2e}
\usepackage{algorithmic}
\usepackage[utf8]{inputenc} 
\usepackage[T1]{fontenc}    
\usepackage{hyperref}       
\usepackage{url}            
\usepackage{nicefrac}       
\usepackage{bbm}
\usepackage{xcolor}
\usepackage{color}
\usepackage{threeparttable}
\usepackage{booktabs}
\usepackage{makecell}
\usepackage{caption}   
\usepackage{floatrow}  
\usepackage{multibib}
\definecolor{mydarkblue}{rgb}{0,0.08,0.45}
\definecolor{myblue}{HTML}{0476D0}
\definecolor{mydarkblue2}{HTML}{385492}
\definecolor{c1}{RGB}{255,75,0}
\definecolor{c2}{RGB}{0,180,255}
\definecolor{LightCyan}{rgb}{0.88,1,1}
\definecolor{Gray}{gray}{0.9}
\usepackage{diagbox}
\usepackage[
    separate-uncertainty = true,
    multi-part-units = repeat
]{siunitx}
\usepackage{microtype}
\usepackage{booktabs} 
\usepackage{multirow}
\usepackage{pifont}
\usepackage{colortbl}
\usepackage{float}
\usepackage{paralist,tabularx}
\usepackage{array}
\newcolumntype{?}{!{\vrule width 1pt}}
\usetikzlibrary{fit,positioning}

\definecolor{customRed}{RGB}{190,110,113}
\colorlet{mylinkcolor}{violet}
\definecolor{darkred}{rgb}{0.55, 0.0, 0.0}

\colorlet{mycitecolor}{RoyalBlue}
\colorlet{myurlcolor}{RoyalPurple}
\hypersetup{
  citecolor  = mydarkblue, 
  linkcolor = darkred,
  urlcolor = myurlcolor,
  colorlinks = true
}

\definecolor{myredcolor}{RGB}{215,48,39}
\definecolor{mygreencolor}{RGB}{26,152,80}
\definecolor{goldcolor}{RGB}{255,165,0}

\makeatletter
\setbox0\hbox{$\xdef\scriptratio{\strip@pt\dimexpr
\numexpr(\sf@size*65536)/\f@size sp}$}
\newcommand{\scriptveryshortarrow}[1][3pt]{{%
    \vcenter{\hbox{\rule[\scriptratio\dimexpr-.2pt\relax]
    {\scriptratio\dimexpr#1\relax}{\scriptratio\dimexpr.4pt\relax}}}%
    \mkern-4mu\hbox{\let\f@size\sf@size\usefont{U}{lasy}{m}{n}\symbol{41}}}}
\usepackage{pifont}

\newtheorem{theorem}{Theorem}

\newtheorem{assumption}{Assumption}
\newtheorem{definition}{Definition}
\newtheorem{example}{Example}

\usepackage[textsize=tiny]{todonotes}
\usepackage{wrapfig}

\usepackage[utf8]{inputenc} 
\usepackage[T1]{fontenc}    
\usepackage{courier}
\usepackage{hyperref}       
\usepackage{url}            

\usepackage{booktabs}       
\usepackage{amsfonts}       
\usepackage{amsmath} 
\usepackage{nicefrac}       
\usepackage{wrapfig}
\usepackage{multirow}

\usepackage{tikz}
\usetikzlibrary{fit,matrix,positioning, 
decorations.pathreplacing,calc,decorations,arrows,shadows,patterns}
\usetikzlibrary{arrows,bayesnet,fit,positioning,shapes,matrix}
\newdimen\arrowsize
\pgfarrowsdeclare{arcsq}{arcsq}
{
	\arrowsize=0.2pt
	\advance\arrowsize by .5\pgflinewidth
	\pgfarrowsleftextend{-4\arrowsize-.5\pgflinewidth}
	\pgfarrowsrightextend{.5\pgflinewidth}
}
{
	\arrowsize=0.8pt
	\advance\arrowsize by .5\pgflinewidth
	\pgfsetdash{}{0pt} 
	\pgfsetroundjoin   
	\pgfsetroundcap    
	\pgfpathmoveto{\pgfpoint{0\arrowsize}{0\arrowsize}}
	\pgfpatharc{-90}{-140}{4\arrowsize}
	\pgfusepathqstroke
	\pgfpathmoveto{\pgfpointorigin}
	\pgfpatharc{90}{140}{4\arrowsize}
	\pgfusepathqstroke
}

\usepackage{amsmath}
\usepackage{amssymb}
\usepackage{mathtools}
\usepackage{amsthm}
\usepackage{textcomp}
\usepackage{booktabs}

\usepackage{titletoc}

\usepackage{xcolor,colortbl}
\definecolor{ourmethod}{gray}{0.93}
\definecolor{myredcolor}{RGB}{215,48,39}
\definecolor{mygreencolor}{RGB}{26,152,80}
\usepackage{pifont}

\definecolor{dred}{rgb}{0.8, 0.0, 0.0}

\usepackage{enumitem}

\usepackage[utf8]{inputenc} 
\usepackage[T1]{fontenc}    
\usepackage{hyperref}       
\usepackage{url}            
\usepackage{booktabs}       
\usepackage{amsfonts}       
\usepackage{nicefrac}       
\usepackage{microtype}      
\usepackage{xcolor}         

\newcommand{\mcall}{\texttt{Ada-Diffuser}}
\newcommand\Std[1]{{\scriptsize \textcolor{gray}{$\pm$ #1}}}

\title{{Ada-Diffuser}: Latent-Aware \\ Adaptive Diffusion for Decision-Making}

\author{Fan Feng$^{1,3}$ , Selena Ge$^{1}$, Minghao Fu$^{1}$, Zijian Li$^{2,3}$, Yujia Zheng$^{2}$, Zeyu Tang$^{2,4}$,\\ \textbf{Yingyao Hu}$^{5 \dagger}$, \textbf{Biwei Huang}$^{1 \dagger}$, \textbf{Kun Zhang}$^{2,3 \dagger}$\\
$^1$ University of California San Diego $^2$ Carnegie Mellon University \\
$^3$ MBZUAI, $^4$ Stanford University, $^5$ Johns Hopkins University \\
$^\dagger$ Equal Senior Authorship}

\iclrfinalcopy

\begin{document}

\maketitle

\begin{abstract}


Recent work has framed decision-making as a sequence modeling problem using generative models such as diffusion models. Although promising, these approaches often overlook latent factors that exhibit evolving dynamics, elements that are fundamental to environment transitions, reward structures, and high-level agent behavior. Explicitly modeling these hidden processes is essential for both precise dynamics modeling and effective decision-making. In this paper, we propose a unified framework that explicitly incorporates latent dynamic inference into generative decision-making from minimal yet sufficient observations. We theoretically show that under mild conditions, the latent process can be identified from small temporal blocks of observations. Building on this insight, we introduce \texttt{Ada-Diffuser}, a causal diffusion model that learns the temporal structure of observed interactions and the underlying latent dynamics simultaneously, and furthermore, leverages them for planning and control. 
With a modular design, \texttt{Ada-Diffuser} supports both planning and policy learning tasks, enabling adaptation to latent variations in dynamics, rewards, and latent actions. 
Experiments on locomotion and robotic manipulation benchmarks demonstrate its effectiveness in accurate latent inference, long-horizon planning, and adaptive policy learning\footnote{Project Page: \href{https://sites.google.com/view/ada-diffuser}{https://sites.google.com/view/ada-diffuser}.}. 

\end{abstract}
\section{Introduction}
\looseness=-1
Learning and planning in partially observable environments is a fundamental challenge in building intelligent agents~\citep{kaelbling1998planning}. Recent work on casting decision-making as a generative modeling problem, taking advantage of powerful models such as transformers~\citep{chen2021decision, zheng2022online, kong2024latent} and diffusion models~\citep{janner2022planning,chi2023diffusion, ren2025diffusion}, has achieved impressive results in a wide range of tasks. However, these methods often fail to account for hidden latent variables and their temporal dynamics, factors that are prevalent in real-world settings such as robotics~\citep{lauri2022partially}, autonomous driving~\citep{huang2024learning}, healthcare~\citep{hauskrecht2000planning, ehrmann2023making}, and economics~\citep{brero2022stackelberg}. 
Ignoring such latent processes can result in suboptimal decision-making, particularly when the observational data does not provide full coverage of the latent factors underlying the environment's dynamics~\citep{zintgraf2021varibad,xie21c,swamy2022sequence, belkhale2023data}. 

Early works address partial observability in reinforcement learning (RL) and imitation learning (IL) by encoding historical observations and actions into belief states or latent embeddings, which represent a distribution over the underlying latent state~\citep{kaelbling1998planning, hauskrecht2000value, guo2018neural, igl2018deep, liang2024dynamite, xie21c}. Policy optimization or planning is then carried out based on these inferred belief states. However, learning such representations often requires access to the historical trajectories or data from a diverse set of environments. This can be prohibitively expensive, particularly in high-dimensional state or action spaces, posing challenges for integrating these methods into modern generative decision-making models, which typically prioritize scalability. 
{Can we \textit{identify} the latent factors that govern environment dynamics and rewards, and {integrate} them into \textit{scalable} generative decision-making models to enable adaptive planning and policy learning, using only \textit{minimal observations}, while \textit{preserving theoretical guarantees}?}

In this paper, we pursue this goal by addressing two fundamental questions. First, what is the \textit{minimum} set of observations required, \textit{in principle}, to reliably identify the latent factors that govern the environment? Second, how can \textit{latent identification} be effectively incorporated into generative models (e.g., diffusion models) to enable adaptive planning and policy learning?
To answer the first question, we theoretically show that, under mild conditions, the latent factors at the time step
$t$ can be block-wise identified using only four surrounding observable measurements (i.e., state-action trajectories) within a small temporal window. This identification result implies that a small temporal block is sufficient to infer the latent factors in observational RL trajectories in an \textit{online} manner. 

Guided by the theoretical findings, we propose \mcall, a novel \textit{causal diffusion} framework with latent identification from temporal blocks, designed to model the data generation process of RL trajectories influenced by latent factors. To reflect the autoregressive nature of sequential decision making, we introduce a \textit{causal denoising schedule} that aligns the denoising steps with the underlying causal structure, drawing inspiration from recent advances in autoregressive diffusion models~\citep{ho2022video, chen2024diffusion, xie2024progressive, magi1}.
For \textit{temporal-block-wise latent identification}, during training, we propose a \textit{denoise-then-refine} procedure that iteratively alternates between denoising the observations and refining latent estimates. This enables {\mcall} to jointly learn a structured representation of latent variables and the corresponding observational distribution.
At inference time, {\mcall} generates actions and states while estimating latent variables in an online fashion. Since states and actions are conditioned on the latent factors, we employ a \textit{zig-zag sampling} scheme that alternates between sampling state-action pairs and updating latent variables, ensuring consistency between generated sequences and their underlying latent dynamics.

\mcall{} provides a unified generative framework for sequential decision-making. It is applicable to both \textit{planning} and \textit{policy learning} tasks by conditioning on different types of observations and adapting the conditional generative process accordingly. The framework is flexible and can accommodate various forms of latent, including ones that influence dynamics, rewards, or even represent high-level latent actions. Importantly, even in environments without explicitly designed latent variables, the block-wise latent identification mechanism improves generative modeling by implicitly capturing structured temporal dependencies. 

\textbf{Contributions:} (1) We establish sufficient conditions under which latent factors influencing environment dynamics and rewards can be identified from short temporal windows of RL trajectories, without requiring full trajectory access or multi-environment data. (2) We develop \mcall{}, a causal diffusion model that performs block-wise latent inference to jointly model latent contexts and observable trajectories. Unlike prior latent-augmented diffusion approaches, \mcall{} introduces a minimal-sufficient block with backward refinement for identifiable latents and uses fully autoregressive denoising with zig–zag sampling to couple inference and generation. (3) \mcall{} can be adapted to a wide range of decision-making tasks by conditioning on different types of observation. We empirically show the improved performance on a wide range of planning and control tasks, including 8 environments under 23 different settings.

\section{Background and Related Work}\label{sec:rl}
In this section, we provide background and related work on diffusion-based decision-making. Additional discussions are provided in Appendix~\ref{app:rl}, including related work on (1) learning latent belief states in POMDPs~\citep{kaelbling1998planning, hauskrecht2000value, igl2018deep, gregor2018temporal, goyal2021recurrent}, particularly in the context of transfer, meta, and nonstationary RL/IL~\citep{zintgraf2021varibad, xie21c, feng2022factored, ni2023metadiffuser, liang2024dynamite}, and (2) autoregressive diffusion models~\citep{chen2024diffusion, xie2024progressive, magi1, wu2023ar}.
  
Recent advances use diffusion models as planners and policies for both RL and IL. 
\emph{I. Diffusion Planner}:
Diffusion-based planning leverages generative models to sample future state-action trajectories from a given state, using guidance techniques~\citep{dhariwal2021diffusion, ho2022classifier} to encourage desirable properties such as high expected rewards.
Taking Denoising Diffusion Probabilistic Models (DDPM~\citep{ho2020denoising})-based approaches as an example, these methods learn a generative model over expert trajectories \( \tau = \{(\mathbf{s}_0, \mathbf{a}_0), \dots, (\mathbf{s}_T, \mathbf{a}_T)\} \) by modeling a forward-noising process:
$q(\mathbf{x}^{t} \mid \mathbf{x}^{t-1}) = \mathcal{N}(\mathbf{x}^{t}; \sqrt{\alpha_{t}} \, \mathbf{x}^{t-1}, (1 - \alpha_{t}) \mathbf{I})$,
and a parameterized denoising model $p_\theta(\mathbf{x}^{t-1} \mid \mathbf{x}^{t})$ to reverse the process. Here, the superscript $t$ denotes diffusion steps, $T$ denotes the planning horizon, $\mathbf{x}^0$ is a clean subsequence sampled from the expert trajectory $\tau$, and $\alpha_{t}$ controls the variance schedule at diffusion step $t$. During inference, trajectories are generated by starting from Gaussian noise and iteratively denoising through the learned reverse process. This generation can be optionally conditioned on the initial state or other guidance signals $\mathbf{y}$ (e.g., goals, rewards): $\hat{\tau} \sim p_\theta(\tau \mid \mathbf{s}_0, \mathbf{y})$.
\emph{II. Diffusion Policy}: In contrast to diffusion planners, Diffusion Policy methods directly parameterize the policy $\pi_\theta(a \mid s)$ using diffusion models. For example, Diffusion Policy~\citep{chi2023diffusion} uses a diffusion model to generate multi-step actions with expressive multimodal distributions. DPPO~\citep{ren2025diffusion} extends this idea by modeling a two-layer MDP structure, which enables fine-tuning of diffusion-based policies in RL settings. Another line of work uses diffusion models to parameterize the policy networks for only the single current step~\citep{wang2022diffusion, hansenestruch2023idql,chen2023boosting,  lu2023contrastive}. \mcall{} can generally accommodate both diffusion planner and policies within the same framework.

\section{Latent Identification in POMDP}
In this section, we seek to formally model the structure of the decision-making system by answering the following questions. First, where do the latent factors reside, and how do they influence the observable variables such as states, actions, and rewards? Second, can they be identified from demonstration data alone? We model the system that extends the standard MDP to include unobservable, time-varying latent variables that affect both the transition dynamics and the reward function. This model generalizes the contextual MDP by allowing the context to evolve stochastically over time. We then formalize the data generation process under this model using structural causal models (SCMs)~\citep{pearl2010causal}. Finally, we present theoretical results that characterize the minimal observational requirements for identifying the latent variables.
\vspace{-2pt}
\subsection{Latent Contextual POMDP with Time-Dependent Context}\label{sec:lcmdp}
\vspace{-2pt}
We model the latent factors using a general contextual MDP framework, where the context itself evolves over time. Formally, we define a latent time-varying contextual MDP as a tuple \( \mathcal{M} = (\mathcal{S}, \mathcal{A}, \mathcal{C}, \mathcal{T}, \mathcal{R}, \gamma) \), where \( \mathcal{S} \) is the state space, \( \mathcal{A} \) is the action space, \( \mathcal{C} \) is the latent context space, \( \mathcal{T}(\rvs_{t} \mid \rvs_{t-1}, \rva_{t-1}, \rvc_t) \) is the transition distribution, \( \mathcal{R}(\rvs_t, \rva_t, \rvc_t) \) is the reward function, and \( \gamma \in [0, 1) \) is the discount factor. The latent context \( \rvc_t \in \mathcal{C} \) follows a time-dependent (possibly stochastic) process: $\rvc_t \sim p(\rvc_t \mid \rvc_{t-1})$,
and is unobserved during training and inference. The agent only observes trajectories \( \tau = \{(\rvs_0, \rva_0), \dots, (\rvs_T, \rva_T)\} \), and infers the latent context \( \rvc_t \) from the observational data. This is naturally relevant to several MDP models, including (dynamic) hidden parameter MDPs~\citep{doshi2016hidden, perez2020generalized, xie21c}, Bayes-adaptive MDPs~\citep{martin1965some, duff2002optimal, zintgraf2021varibad}, and factored MDPs~\citep{guestrin2003efficient}. A full comparison and analysis is given in App.~\ref{sec:mdps}.

Given trajectories generated under this model, we can describe the data generation process using the SCMs. Without the loss of generality, we consider the setting where an expert policy \( \pi \) is assumed to generate the actions, as is standard in learning from demonstration data. 
The data generation process can therefore be expressed as (l.h.s. Fig.~\ref{scm}): 

\begin{figure*}[h]
    \centering
\includegraphics[width=\linewidth]{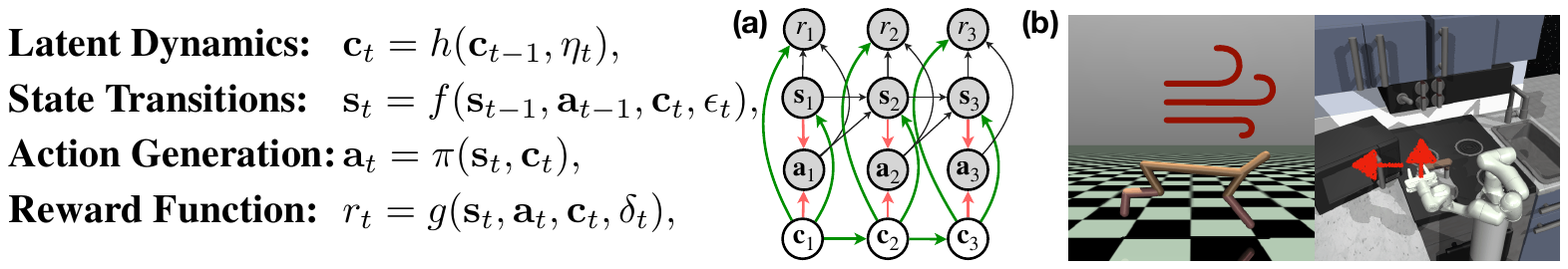}\vspace{-5pt}
    \caption{(a) SCM of the Latent Contextual POMDP. Gray/white nodes are observed/latent variables; green/red edges represent transitions driven by latents/expert policies, respectively.
(b) Examples where latents influence either dynamics or rewards (affecting optimal actions).}
    \label{scm}
\end{figure*}
where \( \eta_t \), \( \epsilon_t \), and \( \delta_t \) denote i.i.d. exogenous noise variables. Fig.~\ref{scm}(a) shows the graphical model. Fig.~\ref{scm}(b) illustrates examples where latent factors on dynamics (e.g., external wind in locomotion) and rewards (e.g., varying target objects in robot control) influence optimal decisions.
\subsection{Identifiability of Latent Factors with Minimal Measurements}
To learn accurate dynamics and make reliable decisions, it is essential that the underlying latent factors influencing the environment are identifiable with observational data. 
We present theoretical results that characterize the minimal number of consecutive observations required for the identifiability of the latent variables, under a set of mild and natural assumptions. 
\begin{assumption}[First-order MDP]\label{asp-1}
We consider the following conditions: 
\[P\left(\rvs_t, \rva_t, r_t, \rvc_t \mid \rvs_{t-1}, \rva_{t-1}, \rvc_{t-1}, \boldsymbol{\omega}_{<t-1}\right)=P\left(\rvs_t, \rva_t, r_t, \rvc_t \mid \rvs_{t-1}, \rva_{t-1}, \rvc_{t-1}\right),\]
where $\boldsymbol{\omega}_{<t-1}=\left\{\rvs_{t-2}, \ldots, \rvs_1, \rva_{t-2}, \ldots, \rva_1,  \rvc_{t-2}, \ldots, \rvc_1\right\}$.
\end{assumption}

This is naturally satisfied under our setting described in Section~\ref{sec:lcmdp}. 

\begin{assumption}[Distributional Variability]\label{asp:inj} There exist observed state and action variables $\rvx_t$ such that for any $\rvx_t \in \mathcal{X}_t$, there exists a corresponding $\rvx_{t-1} \in \mathcal{X}_{t-1}$ and a neighborhood $\mathcal{N}^r$ around $(\rvx_t, \rvx_{t-1})$ satisfying that, for all \( \rvx_{t-2} \in \mathcal{X}_{t-2} \), \( \rvx_{t-1} \in \mathcal{X}_{t-1} \), \( \rvx_t \in \mathcal{X}_t \), and \( \rvx_{t+1} \in \mathcal{X}_{t+1} \), the following conditional distribution operators are injective:  
(i) \( L_{\rvx_{t-2} \mid \rvx_{t+1}} \),  
(ii) \( L_{\rvx_{t+1} \mid \rvx_t, \rvc_t} \), and  
(iii) \( L_{\rvx_t \mid \rvx_{t-2}, \rvx_{t-1}} \),  where the conditional operator $L$ represents transformations at the distribution level, that is, how one probability distribution is pushed forward to another~\citep{dunford1988linear}.
\end{assumption}
\begin{beigebox}
\looseness=-1\textbf{Assumption 
justification}. Conceptually, the injectivity of these operator $L$ implies that different inputs induce different output distributions, thus imposing a minimal condition on distributional variability. In RL systems, this condition is naturally satisfied in most stochastic environments where transitions produce sufficient diversity across different states and actions. The assumption also aligns with the conditions in identifiability theory, particularly in works using spectral decomposition and latent variable models~\citep{hu2008instrumental, hu2012nonparametric, fu2025learning}. We further verify this empirically using MuJoCo RL trajectories with the context instantiated as time-varying wind (App.~\ref{app:verify-2}).

\end{beigebox}

\begin{assumption}[Uniqueness of Spectral Decomposition]\label{asp:spec-dec}
    For any $\rvx_t \in \mathcal{X}_t$ and any $\bar{\rvc}_t \neq \tilde{\rvc}_t \in \mathcal{C}_t$, there exists a $\rvx_{t-1} \in \mathcal{X}_{t-1}$ and corresponding neighborhood $\mathcal{N}^r$ satisfying Assumption~\ref{asp:inj} such that, for some $(\bar{\rvx}_t, \bar{\rvx}_{t-1}) \in \mathcal{N}^r$ with $\bar{\rvx}_t \neq \rvx_t$, $\bar{\rvx}_{t-1} \neq \rvx_{t-1}$:
    \begin{enumerate}[label=\roman*., leftmargin=*]
        \item $0 < k(\rvx_t, \bar{\rvx}_t, \rvx_{t-1}, \bar{\rvx}_{t-1}, \rvc_t) < C < \infty$ for any $\rvc_t \in \mathcal{C}_t$ and some constant $C$;
        \item $k(\rvx_t, \bar{\rvx}_t, \rvx_{t-1}, \bar{\rvx}_{t-1}, \bar{\rvc}_t) \neq k(\rvx_t, \bar{\rvx}_t, \rvx_{t-1}, \bar{\rvx}_{t-1}, \tilde{\rvc}_t)$, where
        \begin{equation}
            k(\rvx_t, \bar{\rvx}_t, \rvx_{t-1}, \bar{\rvx}_{t-1}, \rvc_t)
        = \frac{p_{\rvx_t \mid \rvx_{t-1}, \rvc_t}(\rvx_t \mid \rvx_{t-1}, \rvc_t) p_{\rvx_t \mid \rvx_{t-1}, \rvc_t}(\bar{\rvx}_t \mid \bar{\rvx}_{t-1}, \rvc_t)}{p_{\rvx_t \mid \rvx_{t-1}, \rvc_t}(\bar{\rvx}_t \mid \rvx_{t-1}, \rvc_t) p_{\rvx_t \mid \rvx_{t-1}, \rvc_t}(\rvx_t \mid \bar{\rvx}_{t-1}, \rvc_t)}.
        \label{eq:k}
        \end{equation}
    \end{enumerate}
\end{assumption}
\vspace{-8pt}
\begin{beigebox}
     \looseness=-1\textbf{Assumption 
justification}. 
Conceptually, Assumption~\ref{asp:spec-dec} requires that k, which captures second-order variations in transition dynamics at time \( t-1 \) and \( t \) under the latent variable \( \rvc \), yields distinct values for different $\rvc$'s. This requirement is typically met in RL, as varied latent dynamics or rewards often cause significant, observable shifts in behavior.\textit{ Crucially, this variability is precisely what motivates the need for the identification of the latent variable \( \rvc_t \), as it governs meaningful differences in learning underlying decision-making process.} We further verify this empirically using MuJoCo RL trajectories with the context instantiated as time-varying wind (App.~\ref{app:verify-3}).

\end{beigebox}

These assumptions are mild and natural. While Assumption~\ref{asp-1} is standard in RL, it can be relaxed without violating our theory (App.~\ref{app:asp1}). Assumptions~\ref{asp:inj}--\ref{asp:spec-dec} are naturally satisfied in practice, as they simply formalize that latent variables influence the dynamic, motivating why we need the identification of them. Further validation and discussion are provided in App.~\ref{app:theory_im}. {Importantly, the more strongly the context influences the dynamics (and thus the more critical it becomes to account for $c$ in decision-making), the more strongly these two assumptions are satisfied: the transition operator becomes more injective as required in Assumption~\ref{asp:inj}, and the spectral ratio $k$ becomes more separable across contexts as required in Assumption~\ref{asp:spec-dec} (See empirical validation in App.~\ref{app:verify-4}
).}
Under these assumptions, we establish an identifiability theory that characterizes the conditions under which the latent factors can be recovered, and specifies the level of identifiability that can be achieved. 
\begin{theorem}[Identifiability on Latent Factors]\label{thm:identifiability}
Under Assumptions \ref{asp-1}-\ref{asp:spec-dec}, the posterior distribution of latent factor with consecutive observations $p(\rvc_t \mid \rvx_{t-2:t+1})$ can be identifiable up to an invertible transformation on the latents $\hat{\rvc}_t=h(\mathbf{c}_t)$,
where $\hat{\rvc}_t$ is estimated latents and  $h$ is an invertible function.

\end{theorem}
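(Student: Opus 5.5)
The plan is to adapt the Hu--Schennach / Hu--Shum measurement-error argument to four consecutive blocks $\rvx_{t-2},\rvx_{t-1},\rvx_t,\rvx_{t+1}$. In this setup $\rvc_t$ is the latent, $\rvx_{t+1}$ and $\rvx_{t-2}$ act as two ``measurements'' on either side, and $\rvx_{t-1},\rvx_t$ supply the variation that produces an eigen-decomposition.

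\textbf{Step 1: conditional independence and factorization.} From Assumption~\ref{asp-1} and the SCM, $\rvx_{t+1}$ is independent of $(\rvx_{t-2},\rvx_{t-1})$ given $(\rvx_t,\rvc_t)$. The pair $(\rvx_t,\rvc_t)$ depends on the past only through $(\rvx_{t-1},\rvc_{t-1})$. Integrating out $\rvc_{t-1}$ then gives the observed joint density
\[
p(\rvx_{t+1},\rvx_t,\rvx_{t-1},\rvx_{t-2})=\int p(\rvx_{t+1}\mid \rvx_t,\rvc_t)\,p(\rvx_t\mid \rvx_{t-1},\rvc_t)\,p(\rvc_t,\rvx_{t-1},\rvx_{t-2})\,d\rvc_t .
\]
The middle factor needs care. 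I would write $p(\rvx_t,\rvc_t\mid\rvx_{t-1},\rvx_{t-2})$ and use the first-order structure to move the $\rvx_{t-2}$-dependence into the factor for $\rvc_t$. Fixing $\rvx_t$ and $\rvx_{t-1}$, this is an operator identity
\[
L_{\rvx_{t+1},\rvx_t,\rvx_{t-1},\rvx_{t-2}}=L_{\rvx_{t+1}\mid \rvx_t,\rvc_t}\,D_{\rvx_t\mid\rvx_{t-1},\rvc_t}\,L_{\rvc_t,\rvx_{t-1},\rvx_{t-2}},
\]
where $D$ is the diagonal (multiplication) operator.

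\textbf{Step 2: ratio construction.} I would evaluate this identity at the four corner points $(\rvx_t,\rvx_{t-1})$, $(\bar\rvx_t,\rvx_{t-1})$, $(\rvx_t,\bar\rvx_{t-1})$, $(\bar\rvx_t,\bar\rvx_{t-1})$ in the neighborhood $\mathcal N^r$ of Assumption~\ref{asp:inj}. The injectivity conditions (i)--(iii) let me invert the outer operators, using $L_{\rvx_{t-2}\mid\rvx_{t+1}}$ and $L_{\rvx_t\mid\rvx_{t-2},\rvx_{t-1}}$ to justify the inverses. Combining the four equations as $A\,B^{-1}\,C\,D^{-1}$ cancels $L_{\rvc_t,\rvx_{t-1},\rvx_{t-2}}$ and the differences in $L_{\rvx_{t+1}\mid\cdot}$ across $\rvx_t$. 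The result is an observable operator
\[
\mathbf{A}=L_{\rvx_{t+1}\mid\rvx_t,\rvc_t}\,D_{k}\,L_{\rvx_{t+1}\mid\rvx_t,\rvc_t}^{-1},
\]
whose eigenvalues are $k(\rvx_t,\bar\rvx_t,\rvx_{t-1},\bar\rvx_{t-1},\rvc_t)$ from \eqref{eq:k}. The hardest part is making this cancellation rigorous, because the $\rvx_{t+1}$ operator is indexed by $\rvx_t$ and so differs between $\rvx_t$ and $\bar\rvx_t$. I would first try to pair the equations so that each one is compared with the other at the same $\rvx_t$ value. The $\rvx_{t-1}$ variation then cancels the left factor inside each pair, and the cross-ratio over $\rvx_t$ absorbs what remains; this is exactly why $k$ is a second-order double ratio.

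\textbf{Step 3: uniqueness of the spectral decomposition.} Assumption~\ref{asp:spec-dec}(i) makes the eigenvalues bounded and nonzero, and (ii) makes them distinct across $\rvc_t$. The spectral theorem for bounded operators (Dunford--Schwartz, as in Hu and Schennach 2008) then makes the decomposition unique up to three indeterminacies: the scaling of eigenfunctions, the ordering, and the indexing of eigenvalues. Scaling is fixed because each eigenfunction $p(\cdot\mid\rvx_t,\rvc_t)$ must integrate to one. Ordering and indexing are fixed only up to a relabeling $\hat\rvc_t=h(\rvc_t)$, and $h$ is bijective because distinct eigenvalues correspond to distinct latents. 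So $p(\rvx_{t+1}\mid\rvx_t,\rvc_t)$ is identified up to an invertible $h$.

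\textbf{Step 4: recovering the posterior.} With $L_{\rvx_{t+1}\mid\rvx_t,\rvc_t}$ identified and injective, I would invert the factorization from Step 1 to recover $p(\rvc_t,\rvx_{t-1},\rvx_{t-2}\mid\rvx_t)$ and the diagonal term. Multiplying these gives the joint density of $(\rvc_t,\rvx_{t-2:t+1})$, and hence $p(\rvc_t\mid\rvx_{t-2:t+1})$ by Bayes' rule, again up to the same reparametrization $h$. Finally, I would check that the conditioning point $\rvx_t$ is arbitrary, so that identification holds for all $\rvx_t$ and the $h$ chosen can be made consistent across $\rvx_t$. For this consistency I would appeal to continuity of $k$ within $\mathcal N^r$.
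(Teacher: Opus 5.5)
Your plan follows the paper's proof: the same factorization $L_{\rvx_{t+1},\rvx_t,\rvx_{t-1},\rvx_{t-2}}=L_{\rvx_{t+1}\mid\rvx_t,\rvc_t}D_{\rvx_t\mid\rvx_{t-1},\rvc_t}L_{\rvc_t,\rvx_{t-1},\rvx_{t-2}}$, the same four corner points, and the same Dunford--Schwartz uniqueness step, with normalization fixing the scale and a relabeling giving $h$. However, the step you single out as the crux is resolved incorrectly, in both of the forms you state it.

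Write $A,B,C,D$ for the observable operators at the corners in the order you list them: $(\rvx_t,\rvx_{t-1})$, $(\bar\rvx_t,\rvx_{t-1})$, $(\rvx_t,\bar\rvx_{t-1})$, $(\bar\rvx_t,\bar\rvx_{t-1})$. Then $AB^{-1}$ and $CD^{-1}$ each do cancel the right factor, but
\begin{align*}
AB^{-1}CD^{-1}&=L_{\rvx_{t+1}\mid\rvx_t,\rvc_t}D_{\rvx_t\mid\rvx_{t-1},\rvc_t}D^{-1}_{\bar\rvx_t\mid\rvx_{t-1},\rvc_t}\bigl(L^{-1}_{\rvx_{t+1}\mid\bar\rvx_t,\rvc_t}L_{\rvx_{t+1}\mid\rvx_t,\rvc_t}\bigr)\\
&\qquad\times D_{\rvx_t\mid\bar\rvx_{t-1},\rvc_t}D^{-1}_{\bar\rvx_t\mid\bar\rvx_{t-1},\rvc_t}L^{-1}_{\rvx_{t+1}\mid\bar\rvx_t,\rvc_t}.
\end{align*}
The bracketed term is not the identity, so nothing gets diagonalized.

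Your proposed repair points the wrong way. Pairing equations with the same $\rvx_t$, so that the $\rvx_{t-1}$ variation cancels the left factor, means multiplying by inverses from the left. That gives $(C^{-1}A)(D^{-1}B)^{-1}=L^{-1}_{\rvc_t,\bar\rvx_{t-1},\rvx_{t-2}}D_k\,L_{\rvc_t,\bar\rvx_{t-1},\rvx_{t-2}}$, where $D_k$ is multiplication by $k(\rvx_t,\bar\rvx_t,\rvx_{t-1},\bar\rvx_{t-1},\rvc_t)$. The eigenvalues are still $k$, but the operator is conjugated by the $\rvc_t$--$\rvx_{t-2}$ factor. So its eigenfunctions are not the densities $p(\cdot\mid\rvx_t,\rvc_t)$, the integrate-to-one normalization in your Step 3 does not apply, and you never reach the operator $\mathbf{A}$ you display. (A mirror-image argument along these lines, after passing to adjoints, could identify $p(\rvx_{t-2}\mid\rvc_t,\bar\rvx_{t-1})$ instead, but that is a different proof from the one you outline.)

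The correct combination is exactly the paper's $\mathbf{M}\mathbf{N}$. Pair by common $\rvx_{t-1}$, using that $L_{\rvc_t,\rvx_{t-1},\rvx_{t-2}}$ does not depend on $\rvx_t$, and reverse the orientation of the second ratio:
\begin{align*}
\mathbf{M}=AB^{-1}&=L_{\rvx_{t+1}\mid\rvx_t,\rvc_t}D_{\rvx_t\mid\rvx_{t-1},\rvc_t}D^{-1}_{\bar\rvx_t\mid\rvx_{t-1},\rvc_t}L^{-1}_{\rvx_{t+1}\mid\bar\rvx_t,\rvc_t},\\
\mathbf{N}=DC^{-1}&=L_{\rvx_{t+1}\mid\bar\rvx_t,\rvc_t}D_{\bar\rvx_t\mid\bar\rvx_{t-1},\rvc_t}D^{-1}_{\rvx_t\mid\bar\rvx_{t-1},\rvc_t}L^{-1}_{\rvx_{t+1}\mid\rvx_t,\rvc_t}.
\end{align*}
Then $\mathbf{M}\mathbf{N}$ telescopes through $L^{-1}_{\rvx_{t+1}\mid\bar\rvx_t,\rvc_t}L_{\rvx_{t+1}\mid\bar\rvx_t,\rvc_t}=I$ to $L_{\rvx_{t+1}\mid\rvx_t,\rvc_t}D_kL^{-1}_{\rvx_{t+1}\mid\rvx_t,\rvc_t}$. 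The inverses required are:
\begin{itemize}
\item those of the observable operators $B$ and $C$, from Assumption~\ref{asp:inj} with Lemma~1 of Hu and Schennach;
\item those of the $L_{\rvx_{t+1}\mid\cdot,\rvc_t}$ operators;
\item those of diagonal operators whose entries are positive by Assumption~\ref{asp:spec-dec}(i).
\end{itemize}
With this correction the rest of your plan goes through as in the paper.

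Your Step 4 is more explicit than the paper, which stops at $\hat\rvc_t=h(\rvc_t)$. You need not separate the diagonal factor: $L^{-1}_{\rvx_{t+1}\mid\rvx_t,\rvc_t}L_{\rvx_{t+1},\rvx_t,\rvx_{t-1},\rvx_{t-2}}=D_{\rvx_t\mid\rvx_{t-1},\rvc_t}L_{\rvc_t,\rvx_{t-1},\rvx_{t-2}}$ already has kernel $p(\rvx_t,\rvc_t,\rvx_{t-1},\rvx_{t-2})$.

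The paper does not address making $h$ consistent across $\rvx_t$ at all. If you want that, continuity of $k$ inside $\mathcal N^r$ is not enough, because each $\rvx_t$ gets its own auxiliary points and its own decomposition. You would have to link the labelings through the eigenfunctions themselves, for example with an ordering normalization as in Hu and Shum.
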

The proof is in App.~\ref{proof_identi}. 
Theorem~\ref{thm:identifiability} indicates that \textbf{a short temporal window of observations (with future frame at $t+1$)} contains \textit{sufficient} information to \textit{recover the posterior distribution over the true latent factors} (up to an invertible transformation) in an online manner, without requiring access to the full trajectory. This form of identifiability is standard in representation learning and is sufficient for  downstream tasks such as dynamics modeling, planning, and control. Any policy or dynamics model that conditions on $\hat{c}_t$ can implicitly compose with $h^{-1}$ without loss of expressiveness. We further discuss the implications of this finding in greater detail in App.~\ref{app:theory_im}.





    

\section{Latent-Aware Adaptive Diffusion Planner and Policy}
Building on Theorem~\ref{thm:identifiability}, we introduce the {\mcall} framework for learning and planning with latent identification. As illustrated in Fig.~\ref{fig_pipeline_overall}, {\mcall} models the trajectory generation process via two modules:  
(1) \textbf{latent factor identification block}, which estimates the sequence of latent variables from the observable trajectories; and 
(2) \textbf{causal diffusion model}, which learns the causal generative process of RL trajectories and explicitly infers latent context. Guided by the theoretical findings in Theorem~\ref{thm:identifiability} and the generative process (Sec~\ref{sec:mdps}), we use autoregressive denoising for temporal dependencies and a backward-refinement step over a minimal--sufficient block, designed via a tailored noise schedule and zig–zag sampling, to recover the latent posterior in an online manner. 


\begin{figure*}[t]
    \centering
\includegraphics[width=.9\linewidth]{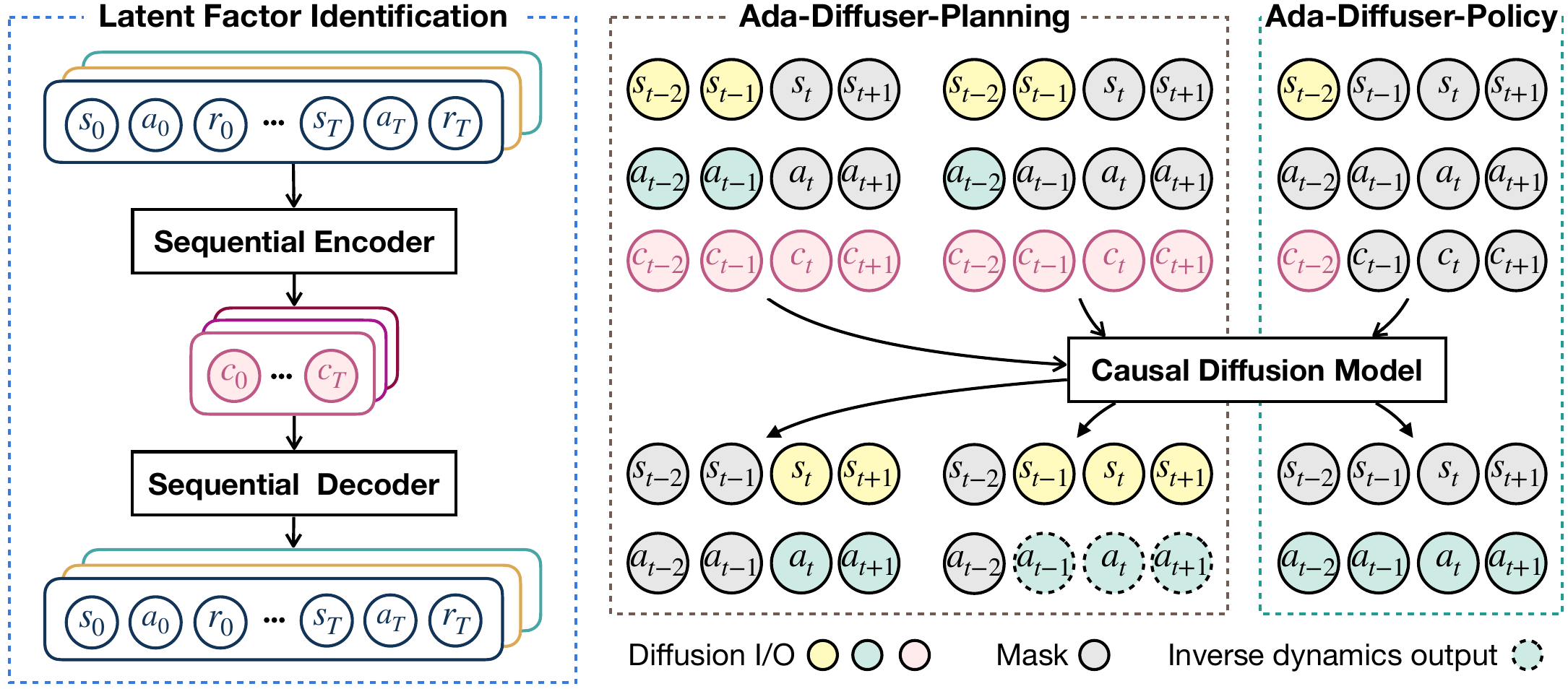}
    \caption{Overview of the {\mcall} framework. The modular design consists of two main stages: latent context identification (Stage 1, Section~\ref{sec:stage1}), followed by a causal diffusion model (Stage 2, Section~\ref{sec: causal diffusion process}) that models the generative structure of the trajectories. The learned model is then used for planning or policy learning conditioned on the inferred latent context.} 
    \label{fig_pipeline_overall}
\end{figure*}

In this section, we first present a general formulation of conditional diffusion modeling with latent variables. We then describe the two modules of {\mcall} in detail (Fig.~\ref{fig_pipeline_overall}). The complete algorithmic pseudocode of the training and inference procedures are given in App.~\ref{app:algorithm}. 
\subsection{Latent-Augmented Diffusion Model for Planning and Policy Learning}
\label{sec:augmented diffusion} Without loss of generality, we denote the observable trajectory as $\boldsymbol{\tau}_x$, which may correspond to a state-action sequence $\boldsymbol{\tau}_{sa}$ or a state-only sequence $\boldsymbol{\tau}_s$, depending on the task setting. To incorporate latent structure, we augment the observable trajectory with the estimated latent context, yielding the full trajectory representation \( \boldsymbol{\tau} = [\boldsymbol{\tau}_x, \boldsymbol{\tau}_c] \), where \( \boldsymbol{\tau}_c \) denotes the inferred sequence of latent variables.

We train a conditional diffusion model to generate trajectories conditioned on desired attributes $\boldsymbol{y}(\boldsymbol{\tau})$ (e.g., reward or goal specification) and the identified $\rvc$. The denoising model $\epsilon_\theta$ is trained to predict the noise added during the forward diffusion process via the objective:
    $\mathcal{L}_{\text{diff}} = \mathbb{E}_{\boldsymbol{\tau}^0, \boldsymbol{y}, t, \boldsymbol{\epsilon}} \left[ \left\| \epsilon_\theta(\boldsymbol{\tau}^t, t, \boldsymbol{y}(\boldsymbol{\tau}), \rvc) - \boldsymbol{\epsilon} \right\|^2 \right]$,
where \( \boldsymbol{\tau}^0 \) is a clean trajectory sample, \( \boldsymbol{\epsilon} \sim \mathcal{N}(0, \mathbf{I}) \), and the noisy trajectory at diffusion step \( t \) is constructed as:
$\boldsymbol{\tau}^t = \sqrt{\bar{\alpha}_t} \boldsymbol{\tau}^0 + \sqrt{1 - \bar{\alpha}_t} \boldsymbol{\epsilon}$, where $\bar{\alpha}_t$  denotes the cumulative product of the forward noise schedule. Here, the superscript $t$ indexes diffusion steps, and should not be confused with the environment time step indices within the trajectory.

\mcall{} can flexibly adapt to generate different components of the trajectory depending on the task. In the \textit{\textbf{planning}}  setting, the model generates full trajectories \( \boldsymbol{\tau} = \{\rvx_t, \rvx_{t+1}, \ldots, \rvx_{t+T_p}\} \), where \( T_p \) denotes the planning horizon. Here, \( \rvx_t \) may have two cases: (i) \( \rvx_t = \{\rvs_t, \rva_t\} \), when both states and actions are generated, (ii) \( \rvx_t = \{\rvs_t\} \), when only states are generated. In the latter case, we train an inverse dynamics model (IDM)~\citep{ajay2023is} to infer the corresponding actions from state transitions. In the \textit{\textbf{policy learning}} setting, the model generates only actions, i.e., \( \boldsymbol{\tau} = \{\rva_{t+1}, \rva_{t+2}, \ldots, \rva_{t+T_a}\} \), where \( T_a \) is the action generation horizon. While multi-step action generation methods (e.g., DP~\citep{chi2023diffusion}) can also be viewed as a form of planning~\citep{zhu2023diffusion}, for generality, we categorize such settings under the policy framework. \mcall\texttt{-Policy} accommodates both variants: multi-step action generation (\( T_a > 1 \)), as in DP, and single-step decision-making (\( T_a = 1 \)), as in IDQL~\citep{hansenestruch2023idql}.


\subsection{Stage 1: Offline Latent Factor Identification}\label{sec:stage1} 
Based on Theorem~\ref{thm:identifiability}, we structure the latent inference process around \textit{temporal blocks}, using short segments of trajectories to identify the latent context at each time step. We adopt a variational inference framework~\citep{kingma2013auto} in which the latent variable $\rvc_t$ is inferred block-wise. 
That is, the prior distribution is conditioned on the latent variable from the previous step and the in-block history, while the posterior additionally incorporates future observations. Specifically, given a trajectory block $t-T_x:t + 1$, where $T_x$ is the block size, we have prior
$ p_\phi(\rvc_t \mid \rvc_{t-1})$, and  posterior $ q_\psi(\rvc_t \mid \rvx_{t-T_x:t+1})$,
where $\rvx$ denotes the observed variables and may correspond to $\{\rvs\}$, $\{\rvs, \rva\}$, or $\{\rvs, \rva, r\}$. 
We then optimize the evidence lower bound (ELBO) of the observed trajectories:
\begin{equation*}
\mathcal{L}_{\text{ELBO}, t} = \mathbb{E}_{q_\psi(\rvc_t \mid \rvx_{t-T_x:t+1})} \left[ -\log p_\theta(\rvx_t \mid \rvx_{t-1}, \rvc_t) \right] + D_{\mathrm{KL}}\left( q_\psi(\rvc_t \mid \rvx_{t-T_x:t+1}) \,\|\, p_\phi(\rvc_t \mid \rvc_{t-1}) \right).
\end{equation*}
Here, the reconstruction term, \( -\log p_\theta(\rvx_t \mid \rvx_{t-1}, \rvc_t) \) is instantiated based on the available observation modalities. Specifically, (i) when only states are observed, the model reconstructs \( \rvs_t \) conditioned on \( (\rvs_{t-1}, \rvc_t) \); and (ii) when rewards are available, the model also reconstructs \( r_t \) from \( (\rvs_t, \rva_t, \rvc_t) \).  The stage is learned through a sequential encoder and decoder (l.h.s., Fig.~\ref{fig_pipeline_overall}). 

\subsection{Stage 2: Causal Diffusion Model}\label{sec: causal diffusion process}
We propose a \textbf{causal diffusion model} for learning the generative process described in Sec.\ref{sec:lcmdp}. By ``causal,'' we refer to the modeling of the true underlying data generation process, which incorporates two key desiderata: 
(1) the \emph{autoregressive process} inherent in temporal sequential RL trajectories; and  
(2) the \emph{latent factor process}, capturing the causal influence of the unobserved context variables \( \rvc_t \) on the observations (e.g., \( \rvx_t=[\rvs_t, \rva_t, r_t] \)). Thus, unlike prior diffusion-based RL methods and latent-augmented variants (Sec.~\ref{sec:rl}; see Table~\ref{tab:baseline_cmp} for a comparison), our approach incorporates the following design choices.

\textbf{Autoregressive Denoising } 
To model the autoregressive structure of trajectory generation, and following the recent advances in autoregressive diffusion~\citep{ chen2024diffusion, xie2024progressive, wu2023ar}, we introduce a \textbf{causal denoising schedule}. Under this mechanism, each time step within a local temporal block is assigned a denoising schedule that depends both on its temporal distance from the conditioning anchor and on the inferred latent variables. This reflects the intuition that later time steps exhibit higher uncertainty. Specifically, for a trajectory of  length \( T \), we assign monotonically increasing noise levels \( \{k_1, \ldots, k_T\} \), sampled linearly as $k_i = \frac{i}{T}K$ where $i \in \{1, \ldots, T\}$ and \( K \) denotes the maximum diffusion step.

Given the inferred latent context \( \hat{\mathbf{c}}_{0:T} \), the model performs autoregressive denoising over the block in \( T \) steps. The overall denoising process is defined as:
\vspace{-5pt}
\begin{equation}
p_\theta\left(\mathbf{x}_0^{0}, \ldots, \mathbf{x}_{T-1}^{0} \mid \mathbf{x}_0^{k_1}, \ldots, \mathbf{x}_{T-1}^{k_T}, \hat{\mathbf{c}}_{0:T} \right),
\end{equation}
\vspace{-5pt}
where \( \mathbf{x}_i^{k_i} \) denotes the noisy observation at time step \( i \), and \( \mathbf{x}_i^{0} \) is the clean, denoised output.

Specifically, the first denoising step is:
$
p_\theta({\mathbf{x}_{{\color{darkred}{0}}}^{{\color{darkred}{0}}}}, \mathbf{x}_1^{{\color{mydarkblue}{k_1}}}, \ldots, \mathbf{x}_{T-1}^{{\color{mydarkblue}{k_{T-1}}}} \mid \mathbf{x}_0^{k_1}, \ldots, \mathbf{x}_{T-1}^{k_T}, \hat{\mathbf{c}}_{0:T}),$
where the first observation $\mathbf{x}_0$ has been fully denoised and other observations are partially denoised, followed by the second step:
$
p_\theta({\mathbf{x}_{{\color{darkred}{1}}}^{{\color{darkred}{0}}}}, \mathbf{x}_2^{{\color{mydarkblue}{k_1}}}, \ldots, \mathbf{x}_{T-1}^{{\color{mydarkblue}{k_{T-2}}}}  \mid \mathbf{x}_0^{0}, \mathbf{x}_1^{k_1}, \ldots, \mathbf{x}_{T-1}^{k_{T-1}}, \hat{\mathbf{c}}_{0:T} )$,
and finally until all observations are denoised:
$p_\theta({\mathbf{x}_{{\color{darkred}{T-1}}}^{{\color{darkred}{0}}}} \mid \mathbf{x}_0^{0}, \ldots, \mathbf{x}_{T-2}^{0}, \mathbf{x}_{T-1}^{k_1}, \hat{\mathbf{c}}_{0:T}).$


\textbf{Denoise-and-refine Mechanism }
\begin{wrapfigure}[12]{R}{0.2\textwidth}
  \centering
  \vspace{-11pt}  \includegraphics[width=0.9\textwidth]{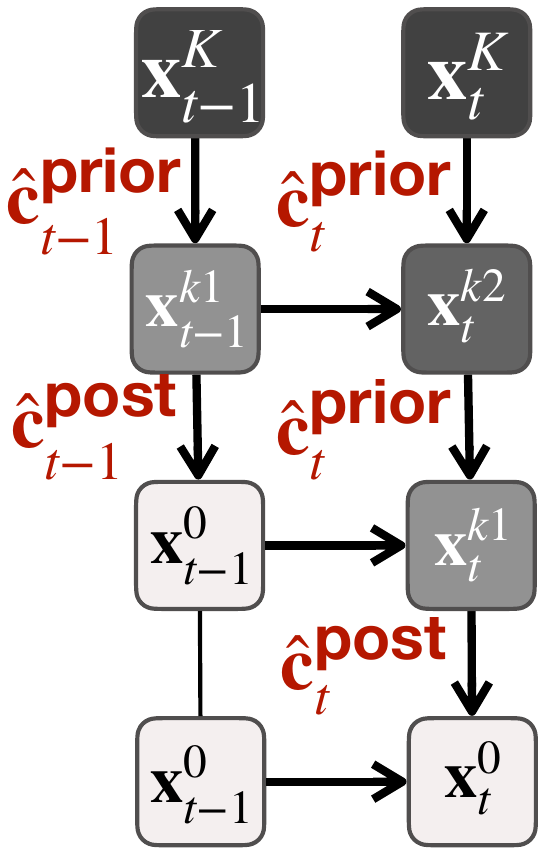}
  \vspace{-10pt}
  \caption{zig-zag sampling (2 steps).}
  \label{fig:inference}
\end{wrapfigure}
Theorem~\ref{thm:identifiability} indicates that both historical and future observations are required for recovering the latents. However, these future observations are not accessible during online inference, which results in a mismatch between identifiability requirements and available information.
Hence, guided by this insight with preserving the causal structure of the generative process, we propose a novel \textit{denoise-and-refine mechanism} that alternates between denoising the observable sequences and refining the latent estimates, and is applied consistently during both training and inference to ensure high-quality latent context recovery in an online manner. We introduce how we implement this during training and inference.

 \emph{\textcolor{darkred}{\textbf{Training}}:} Given a noisy input $\mathbf{x}_t^{k_t}$ with noise level $k_t$, we first sample an initial latent context from the prior: $\hat{\mathbf{c}}_t^{\text{prior}} \sim p_\phi(\mathbf{c}_t \mid \mathbf{c}_{t-1} )$,
and use it to denoise the observation:
$\hat{\mathbf{x}}_t^{(0)} = \epsilon_\theta(\mathbf{x}_t^{k_t}, k_t, \hat{\mathbf{c}}_t^{\text{prior}})$.
Then we infer the latent using the posterior network, conditioned on a broader temporal window including future observations (accessible in offline data):
$
\hat{\mathbf{c}}_t^{\text{post}} \sim q_\psi(\mathbf{c}_t \mid \mathbf{x}_{t-k:t+1})$,
and obtain a \textbf{refined} denoised prediction:
$\hat{\mathbf{x}}_t^{(0)\prime} = \epsilon_\theta(\mathbf{x}_t^{k_t}, k_t, \hat{\mathbf{c}}_t^{\text{post}})$.

We have two reconstruction losses: one from the prior-sampled latent, $\mathcal{L}_{\text{prior}} = \| \hat{\mathbf{x}}_t^{(0)} - \mathbf{x}_t^0 \|^2$, and one from the posterior-sampled latent,
$\mathcal{L}_{\text{post}} = \| \hat{\mathbf{x}}_t^{(0)\prime} - \mathbf{x}_t^0 \|^2$. 
To encourage the posterior latent to produce better reconstructions, we introduce a \textbf{contrastive improvement loss}: $\mathcal{L}_{\text{rel}}=\operatorname{softplus} \left( \log \mathcal{L}_{\text{post}}-\log \operatorname{sg}\big(\mathcal{L}_{\text{prior}}\big)+m\right)$, where $\operatorname{sg}(\cdot)$ denotes stop-gradient, $\operatorname{softplus}(u)=\log(1+e^{u})$, and $m\ge 0$ is a margin hyperparameter.
The final objective for this denoise-and-refine step is:
$\mathcal{L}_{\text{d-r}} = \mathcal{L}_{\text{post}} + \lambda_{\text{prior}} \mathcal{L}_{\text{prior}} + \lambda_{\text{rel}} \mathcal{L}_{\text{rel}}$,
where \( \lambda_{\text{prior}} \) and \( \lambda_{\text{rel}} \) are weighting coefficients. {$\mathcal{L}_{\text{diff}}$ updates only $\theta$, $\mathcal{L}_{\text{post}}$ updates only $\psi$, $\mathcal{L}_{\text{prior}}$ updates only $\phi$, and $\mathcal{L}_{\text{rel}}$ updates both $\phi$ and $\psi$}. 

 \emph{\textcolor{darkred}{\textbf{Inference}}:}
During inference, future observations are not available, which prevents direct use of the posterior network for latent inference. To address this, we adopt a \textit{zig-zag sampling strategy}\footnote{Note on terminology: our use of ``zig--zag'' is purely descriptive, and there is no connection between the proposed sampling and \citet{bai2024zigzag}.} that combines autoregressive denoising with latent refinement. Specifically, we first sample the entire trajectory by applying the forward diffusion process with the maximum noise level \( K \). We then perform autoregressive denoising across time.

For each time step $t$, we begin by denoising $\mathbf{x}_t^{K}$ to an intermediate noise level $k_1$ using $\hat{\mathbf{c}}_t$ sampled from the prior: $\hat{\mathbf{c}}_t^{\text{prior}} \sim p_\phi(\mathbf{c}_t \mid\mathbf{c}_{t-1})$. We then obtain updated $\hat{\mathbf{c}}_t$ from the posterior latent distribution
$
\hat{\mathbf{c}}_t^{\text{post}} \sim q_\psi(\mathbf{c}_t \mid \mathbf{x}^0_{t-k:t-1}, \mathbf{x}_t^{k_1}, \mathbf{x}_{t+1}^{k_2})$,
which is conditioned on the denoised history, the intermediate step with noise level $k_1$, and the next step with noise level $k_2$. We then use $\hat{\mathbf{c}}_t^{\text{post}}$ as the input to further denoise  $\mathbf{x}_t^{k_1}$ to $\mathbf{x}_t^{0}$. An illustration of the zig-zag inference process is provided in Fig. \ref{fig:inference}\footnote{A larger illustration with $4$ steps are given in App. Fig.~\ref{fig:app_inference}.}.



In summary, \mcall{} leverages autoregressive noise scheduling to reflect temporal structure, integrates latent context identification by the \textit{denoise-and-refine} mechanism, and employs \textit{zig-zag sampling} for online latent inference. This framework accommodates a wide range of scenarios, including latent dynamics/rewards, learning from action-free data with latent actions, and both state- and image-based environments. All variants share the same core, with task-specific modifications to the input/output only. Details of these architectural and variations are in App.~\ref{app:method}.

\vspace{-8pt}
\section{Experiments}\label{sec:exp}
\vspace{-8pt}
We aim to answer the following questions in the evaluation:
(1) \emph{Latent Identification}: How well can {\mcall} capture latent factors in the environment?
(2) \emph{Learning with Latent Factors}: How effective is {\mcall} in planning and control when learning with the latent context on dynamics and reward? And can {\mcall} infer latent actions from action-free demonstrations?
(3) \emph{Learning with Environments w/o Explicit Latents}: In environments without explicit latent factors, can modeling latent processes still bring performance gains?
(4) \emph{Ablation Studies}: What is the impact of key design choices in the framework?

\subsection{Settings} 
\textbf{Benchmarks} We consider a diverse set of benchmarks, including Mujoco-based locomotion tasks (\texttt{Cheetah}, \texttt{Ant}, \texttt{Walker}), a robot navigation task (\texttt{Maze2D}), and a robot arm control task (\texttt{Franka-Kitchen})~\citep{gupta2019relay}, all from the D4RL benchmark suite~\citep{fu2020d4rl}. We also consider robotic manipulation tasks from \texttt{RobotMimic}~\citep{robomimic2021} and \texttt{LIBERO-10}~\citep{liu2024libero}. A detailed description and illustration of these environments is provided in App.~\ref{app:env}.
We introduce latent factors affecting both dynamics (\( \rvc^s \)) and reward functions (\( \rvc^r \)) in the Cheetah and Ant environments, considering two types of variations: episodic changes (E) and fine-grained, time-varying step-wise changes (S). The specific change functions for each setting are detailed in App.~\ref{app:cf}. For evaluating latent action modeling, we follow the setup from LDP~\citep{xie2025latent}, using action-free, pixel-based demonstrations from the LIBERO benchmark~\citep{liu2024libero}. 
{We use our framework to learn the inverse dynamics model to infer the latent actions (details are in App.~\ref{app:latent_action}).}
In total, we evaluate on 8 environments with 23 settings.

\begin{figure*}[t]
    \centering
\includegraphics[width=.9\linewidth]{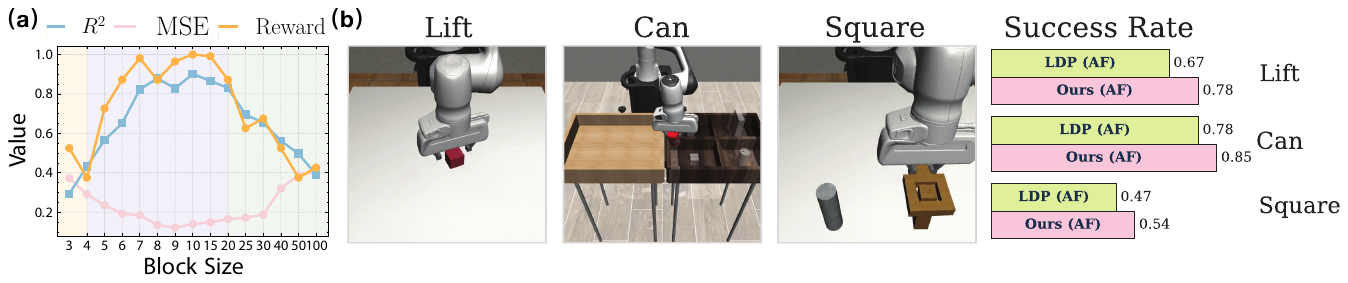}
    \caption{(a). {Identification Results (i.e., Linear Probing MSE, \( R^2 \)) and normalized rewards on the Cheetah environment with time-varying wind as the latent factor, evaluated across different block sizes.} (b). Results (i.e., average success rate) on planning with action-free demonstrations on 
     Robomimic benchmark. "AF" denotes Action-free. }
     \vspace{-8pt}
    \label{fig:identi}
\end{figure*}

\textbf{Baselines }
We compare {\mcall} with a diverse set of baselines for fair and comprehensive evaluation. 
\emph{(1) Vanilla diffusion models}: For planning, we consider  Diffuser~\citep{janner2022planning} and DD~\citep{ajay2022conditional}. For policy learning, we include DP and IDQL~\citep{hansenestruch2023idql}. We also evaluate LDCQ~\citep{venkatraman2024reasoning}, which learns a latent skill space and optimizes a value function conditioned on both states and latent skills.
\emph{(2) Latent context modeling}: We include MetaDiffuser~\citep{ni2023metadiffuser} that learns contextual representations from multiple environments. We also consider using LILAC~\citep{xie21c} and DynaMITE~\citep{liang2024dynamite} which models nonstationarity in RL through latent context learning using belief states. For a fair comparison, we integrate their context modules into diffusion planners and policies as plug-in components (detailed analysis in App.~\ref{app:sec_baseline_ns}).
\emph{(3) Latent action modeling}: We compare with LDP~\citep{xie2025latent} with action-free demonstrations for planning. In total, we compare with 9 baselines across these settings.

\textbf{Architecture Choices }(Details are in App.~\ref{app:detail_arch})
For latent factor identification, we use GRU~\citep{cho2014learning} embedding with MLP layers as both prior and posterior encoders to produce Gaussian distribution over latents. For decoders, we use MLP layers. For planning and policy learning, we use UNet~\citep{ronneberger2015u} or Transformers~\citep{vaswani2017attention} as denoising networks and use MLPs to learn the IDM. We use VAE~\citep{kingma2013auto} for the visual encoders.

\subsection{Results and Analysis} 
\textbf{Results on Latent Identification }To verify our identification theory, we evaluate model performance under different block sizes that contain varying amounts of temporal context. We include settings where all blocks have sufficient observations, as well as a challenging case with insufficient observations (i.e., without access to future observations).
To quantify the quality of the learned latent representations, we adopt linear probing and the coefficient of determination $R^2$ as the evaluation metric. The results, together with normalized results, are shown in Fig.~\ref{fig:identi}(a). Similarly, we also provide the clustering result in App. Fig.~\ref{fig:app-cluster}. The yellow region indicates settings with insufficient observations, resulting in lower identification results. The purple region corresponds to sufficient observations and yields relatively strong performance, and the green region reflects larger block sizes, which lead to degraded results due to redundant information or inherent difficulty for optimization. Notably, the reward is positively associated with the accuracy of latent identification, validating the importance of identifying latent factors in RL trajectories. 


\begin{figure*}[t]
    \centering
\includegraphics[width=.9\linewidth]{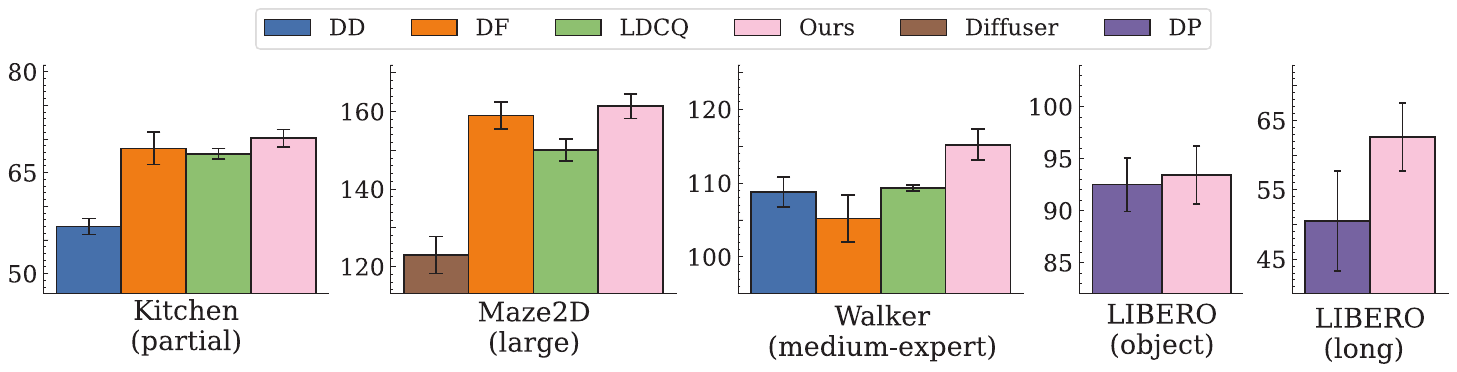}
    \caption{{Results on environments without explicitly designed latent factors.}  Complete results are provided in App. Table~\ref{tab:non_latent_res_1_app}--\ref{tab:non_latent_res_4_app}.
}
\vspace{-8pt}
\label{fig:non_latent}
\end{figure*}

\textbf{Results on Decision-making} We consider three groups based on the kind of latent factors.

\emph{\textcolor{darkred}{\textbf{> Group I: Latent factors on dynamics and reward}}:} Table~\ref{tab:latent_res_1} presents the results of learning under latent factors that affect dynamics and rewards in locomotion tasks. To ensure a fair comparison, we implement autoregressive variants of DynaMITE and LILAC using the DF backbone. Results for the DP-backbone counterparts are provided in App. Tables~\ref{tab:latent_res_1_app}, which are consistently worse than DF.
Additional results, including using DP as backbones ({\mcall{-policy}}), oracle variants and meta-learned versions of {\mcall} that use ground-truth latents as input, are provided in App. Table~\ref{tab:latent_res_1_app}--\ref{tab:latent_res_2_app}. From the results, we observe that {\mcall} consistently achieves the best performance, with a significant margin over all baselines. In particular, it outperforms Diffusion planners and policies even when those models are enhanced with latent context modules such as DynaMITE and LILAC (pink area), which are most comparable to our setting. Furthermore, {\mcall} outperforms DF, showing the effectiveness of our framework.
\begin{table*}[t]
\caption{{Results (5 seeds) on {\mcall}\texttt{-Planner} with latent factors that affects dynamics and rewards.}
$\rvc^s$ and $\rvc^r$ indicate the changes on dynamics and reward, $E$ and $S$ represent the episodic and time-step changes.
All results are averaged over 5 random seeds.}
\label{tab:latent_res_1}
\begin{center}
\small
\renewcommand{\arraystretch}{0.8}
\setlength{\tabcolsep}{3pt}
\begin{tabular}{l|BBAAAc}
\toprule
\rowcolor{white}
\textbf{Environment}
& \makecell[c]{Diffuser}
& \makecell[c]{DF}
& \makecell[c]{DF\\ + DynaMITE}
& \makecell[c]{DF\\ + LILAC}
& \makecell[c]{MetaDiffuser}
& \makecell[c]{Ours} \\
\midrule
Cheetah-Wind-E ($\rvc^s$) &
-120.4 \Std{12.7} &
-105.8 \Std{9.6} &
-82.3 \Std{8.2} &
-91.5 \Std{7.8} &
-95.3 \Std{7.4} &
$\textbf{-68.9}$\Std{7.6} \\
Cheetah-Wind-S ($\rvc^s$) &
-148.5 \Std{9.8} &
-102.0 \Std{10.2} &
-87.2 \Std{10.4} &
-96.7 \Std{9.5} &
-105.6 \Std{14.5} &
\textbf{-73.5}\Std{8.7} \\
\midrule
Cheetah-Vel-E ($\rvc^r$) &
-102.4 \Std{18.2} &
-85.6 \Std{18.3} &
-60.2 \Std{10.8} &
-67.8 \Std{11.0} &
-62.6 \Std{11.1} &
\textbf{-45.8}\Std{9.5} \\
Ant-Dir-E ($\rvc^r$) &
188.6 \Std{39.2} &
195.4 \Std{47.0} &
266.7 \Std{28.1} &
233.6 \Std{31.9} &
229.4 \Std{32.6} &
\textbf{285.3}\Std{24.5} \\
\bottomrule
\end{tabular}
\vspace{-3mm}
\end{center}
\end{table*}

\emph{\textcolor{darkred}{\textbf{> Group II: Latent Actions}}:} Following \citet{xie2025latent}, we consider learning from action-free demonstration data, where actions are treated as latent factors to be inferred. We adopt the same setup as in \citep{xie2025latent}, using a pre-trained visual encoder obtained via a VAE to learn the latent space from pixel observations. We then train a latent planner and an IDM using a diffusion-based approach. Unlike prior work, our diffusion-based latent planner additionally incorporates latent factors $\rvc$ to model latent context. Importantly, we train only the planner using additional action-free demonstrations. Detailed training procedures are provided in App.~\ref{app:latent_action}.  Results on several tasks in \texttt{Robomimic} benchmark show that we can bring improvements on all tasks via modeling the latent process supplementary to the latent planner in \citep{xie2025latent}. Here, the IDM is trained solely on expert demonstrations. Complete results are provided in App. Table~\ref{tab:app:actionfree}.
 
\emph{\textcolor{darkred}{\textbf{> Group III: Environments w/o Explicitly Designed Latents}}:} Crucially, in this scenario, the latent variable \( \rvc \) effectively serves as a form of Bayesian filtering over the observed trajectories, capturing the inherent stochasticity in the data (a more detailed discussion in App.~\ref{app:sec-bs}). Such variability commonly arises from system noise, expert action noise, or high-level unobserved factors. The results, shown in Fig.~\ref{fig:non_latent} (full results provided in App. Table~\ref{tab:non_latent_res_1_app}--\ref{tab:non_latent_res_4_app}), support this interpretation. Even in environments without explicitly designed latent contexts, incorporating latent modeling allows {\mcall} to achieve performance that is comparable to or better than these baselines. {By recovering the latent variables that capture stochasticity, nonstationarity, or unobserved structure in the offline trajectories, the model can produce rollouts that better match the underlying dynamics, even when the demonstrations are imperfect.}
These findings suggest that our framework can consistently capture implicit latent process in the data, improving both trajectory modeling and planning. 
\begin{table*}[t]
\centering
\vspace{-3mm}
\caption{Ablations on Cheetah-Wind-S (planner) and LIBERO (DP-policy).}
\label{tab:ab}
\small
\renewcommand{\arraystretch}{0.82}
\setlength{\tabcolsep}{5pt}

\begin{tabular}{l|c|cccccc}
\toprule
\textbf{Latent Design} 
& Orig.
& w/o latents
& Freeze 
& 0.5$\times$
& 2$\times$
& 4$\times$
& 6$\times$ \\
\midrule
Cheetah ($\rvc^s$)
& \textbf{-73.5}
& -103.5 & -110.4 & -85.2 & -77.6 & -89.5 & -102.4 \\
LIBERO
& \textbf{93.4}
& 89.3 & 90.2 & 90.9 & 89.4 & 87.6 & 85.0 \\
\bottomrule
\end{tabular}

\begin{tabular}{l|c|cccc}
\toprule
\textbf{Diffusion Design}
& Orig.
& w/o refine
& w/o zigzag
& same NS
& random NS  \\
\\[-8pt] 
\midrule
Cheetah ($\rvc^s$)
& \textbf{-73.5}
& -82.0 & -91.6 & -89.7 & -84.6 \\
LIBERO
& \textbf{93.4}
& 83.9 & 91.4 & 85.2 & 88.5 \\
\bottomrule
\end{tabular}
\vspace{-3mm}
\end{table*}

\subsection{Ablation Studies}
We conduct ablation studies to evaluate the contributions of key components in our framework. For \textit{\textbf{latent factor identification}}, Fig.~\ref{fig:identi}(a) shows the effect of different temporal block sizes, illustrating the benefit of incorporating future observations during inference. {Here, we also consider ablations where (i) the entire latent identification module is removed, (ii) the latent identification network is frozen after the first $10\%$ of training steps, and (iii) different numbers of latent updates are used.}
For the \textit{\textbf{causal diffusion model}}, we examine the impact of the following design choices:  
(i) removing the refinement step (\textit{w/o refine});  
(ii) removing zig-zag sampling (\textit{w/o zig-zag});  
(iii) replacing the causal noise schedule with a fixed noise level across time steps (vanilla diffusion) or with random noise scaling as in DF~\citep{chen2024diffusion} (\textit{same NS, random NS}).
The results in Table~\ref{tab:ab} demonstrate the effectiveness of these modules in our framework in both settings: with and without explicit latent factors. Specifically, For the \textbf{latent identification ablations}, we find that the latent variables play a critical role. In particular, freezing the latent module makes the model perform poorly, because the latent context follows a temporal process and must continue adapting during training. Varying the latent dimensionality within a moderate range (about 0.5×–2×) does not significantly change performance, but using overly large latent dimensions (e.g., 4×–6×) degrades results, likely due to redundant capacity and harder optimization.

{In terms of causal diffusion, 
for refinement and zig–zag, we hypothesize the gains come from reducing posterior mismatch.
We therefore run a latent probing test on Cheetah with changing wind and report linear-probe MSE across variants; \mcall{} with both refinement and zig–zag attains the lowest error (Table~\ref{tab:probing}}; Details are in App.~\ref{app:refine_zigzag_ablation}).
Removing backward refinement yields the largest degradation (0.18 $\to$ 0.28), consistent with the role of refinement in letting future evidence within a block update the latent posterior and reduce temporal lag. Disabling zig--zag also harms accuracy (0.18 $\to$ 0.23), suggesting that alternating conditioning helps align the denoising trajectory with the latent dynamics rather than purely following the forward temporal pass. Moreover, the gap between our full model (0.18) and the oracle that has access to true futures (0.12) is small, verifying that the predicted future is already sufficiently informative for reliable latent inference in practice.

Additional ablations are provided in App.~\ref{app:full_ab}, including full results, comparisons of alternative noise schedules beyond linear (App.~\ref{app:noise_ablation}), sweeps over temporal block length (App.~\ref{ab_bl}), and analyses of long-horizon planning (App.~\ref{ab_lh}). Notably, we show that our method introduces no significant computational overhead in terms of training runtime and inference latency (App.~\ref{app:compute}, Table~\ref{tab:compute_overhead}-\ref{tab:picard_speed}).


\section{Conclusions}
We demonstrate that identifying latent factors from sequential observations is critical for effective decision-making. We provide theoretical results that establish conditions under which latent variables can be identified using small temporal blocks of observations. This insight enables a principled integration of latent identification into a diffusion-based generative framework, allowing us to capture the underlying causal process while maintaining scalability. \mcall{} is broadly applicable to a variety of settings, including planning and control tasks with or without explicit latent structure, and even action-free demonstrations. Results across diverse benchmarks show substantial improvements, validating the effectiveness of our method not only in environments with designed latent factors but also in general settings where latent structure is implicit but influential. 



\section*{Acknowledgement} We would like to acknowledge the support from NSF Award No.~2229881, AI Institute for Societal Decision Making (AI-SDM), the National Institutes of Health (NIH) under Contract R01HL159805, and grants from Quris AI, Florin Court Capital, MBZUAI-WIS Joint Program, and the Al Deira Causal Education project.

\bibliography{ref.bib}

\begin{thebibliography}{98}
\providecommand{\natexlab}[1]{#1}
\providecommand{\url}[1]{\texttt{#1}}
\expandafter\ifx\csname urlstyle\endcsname\relax
  \providecommand{\doi}[1]{doi: #1}\else
  \providecommand{\doi}{doi: \begingroup \urlstyle{rm}\Url}\fi

\bibitem[Ajay et~al.(2022)Ajay, Du, Gupta, Tenenbaum, Jaakkola, and Agrawal]{ajay2022conditional}
Anurag Ajay, Yilun Du, Abhi Gupta, Joshua Tenenbaum, Tommi Jaakkola, and Pulkit Agrawal.
\newblock Is conditional generative modeling all you need for decision-making?
\newblock \emph{arXiv preprint arXiv:2211.15657}, 2022.

\bibitem[Ajay et~al.(2023{\natexlab{a}})Ajay, Du, Gupta, Tenenbaum, Jaakkola, and Agrawal]{ajay2023is}
Anurag Ajay, Yilun Du, Abhi Gupta, Joshua~B. Tenenbaum, Tommi~S. Jaakkola, and Pulkit Agrawal.
\newblock Is conditional generative modeling all you need for decision making?
\newblock In \emph{The Eleventh International Conference on Learning Representations}, 2023{\natexlab{a}}.
\newblock URL \url{https://openreview.net/forum?id=sP1fo2K9DFG}.

\bibitem[Ajay et~al.(2023{\natexlab{b}})Ajay, Han, Du, Li, Gupta, Jaakkola, Tenenbaum, Kaelbling, Srivastava, and Agrawal]{ajay2023compositional}
Anurag Ajay, Seungwook Han, Yilun Du, Shaung Li, Abhi Gupta, Tommi Jaakkola, Josh Tenenbaum, Leslie Kaelbling, Akash Srivastava, and Pulkit Agrawal.
\newblock Compositional foundation models for hierarchical planning.
\newblock \emph{arXiv preprint arXiv:2309.08587}, 2023{\natexlab{b}}.

\bibitem[Bai et~al.(2024)Bai, Shao, Zhou, Qi, Xu, Xiong, and Xie]{bai2024zigzag}
Lichen Bai, Shitong Shao, Zikai Zhou, Zipeng Qi, Zhiqiang Xu, Haoyi Xiong, and Zeke Xie.
\newblock Zigzag diffusion sampling: Diffusion models can self-improve via self-reflection.
\newblock \emph{arXiv preprint arXiv:2412.10891}, 2024.

\bibitem[Belkhale et~al.(2023)Belkhale, Cui, and Sadigh]{belkhale2023data}
Suneel Belkhale, Yuchen Cui, and Dorsa Sadigh.
\newblock Data quality in imitation learning.
\newblock \emph{Advances in neural information processing systems}, 36:\penalty0 80375--80395, 2023.

\bibitem[Blattmann et~al.(2023)Blattmann, Dockhorn, Kulal, Mendelevitch, Kilian, Lorenz, Levi, English, Voleti, Letts, et~al.]{blattmann2023stable}
Andreas Blattmann, Tim Dockhorn, Sumith Kulal, Daniel Mendelevitch, Maciej Kilian, Dominik Lorenz, Yam Levi, Zion English, Vikram Voleti, Adam Letts, et~al.
\newblock Stable video diffusion: Scaling latent video diffusion models to large datasets.
\newblock \emph{arXiv preprint arXiv:2311.15127}, 2023.

\bibitem[Brero et~al.(2022)Brero, Eden, Chakrabarti, Gerstgrasser, Greenwald, Li, and Parkes]{brero2022stackelberg}
Gianluca Brero, Alon Eden, Darshan Chakrabarti, Matthias Gerstgrasser, Amy Greenwald, Vincent Li, and David~C Parkes.
\newblock Stackelberg pomdp: A reinforcement learning approach for economic design.
\newblock \emph{arXiv preprint arXiv:2210.03852}, 2022.

\bibitem[Brockman et~al.(2016)Brockman, Cheung, Pettersson, Schneider, Schulman, Tang, and Zaremba]{brockman2016openai}
Greg Brockman, Vicki Cheung, Ludwig Pettersson, Jonas Schneider, John Schulman, Jie Tang, and Wojciech Zaremba.
\newblock Openai gym.
\newblock \emph{arXiv preprint arXiv:1606.01540}, 2016.

\bibitem[Cao et~al.(2025)Cao, Feng, Fang, Dong, Yang, Huo, and Gao]{cao2025towards}
Hongye Cao, Fan Feng, Meng Fang, Shaokang Dong, Tianpei Yang, Jing Huo, and Yang Gao.
\newblock Towards empowerment gain through causal structure learning in model-based rl.
\newblock \emph{arXiv preprint arXiv:2502.10077}, 2025.

\bibitem[Carroll et~al.(2010)Carroll, Chen, and Hu]{carroll2010identification}
Raymond~J Carroll, Xiaohong Chen, and Yingyao Hu.
\newblock Identification and estimation of nonlinear models using two samples with nonclassical measurement errors.
\newblock \emph{Journal of nonparametric statistics}, 22\penalty0 (4):\penalty0 379--399, 2010.

\bibitem[Chen et~al.(2024)Chen, Mart{\'\i}~Mons{\'o}, Du, Simchowitz, Tedrake, and Sitzmann]{chen2024diffusion}
Boyuan Chen, Diego Mart{\'\i}~Mons{\'o}, Yilun Du, Max Simchowitz, Russ Tedrake, and Vincent Sitzmann.
\newblock Diffusion forcing: Next-token prediction meets full-sequence diffusion.
\newblock \emph{Advances in Neural Information Processing Systems}, 37:\penalty0 24081--24125, 2024.

\bibitem[Chen et~al.(2021)Chen, Lu, Rajeswaran, Lee, Grover, Laskin, Abbeel, Srinivas, and Mordatch]{chen2021decision}
Lili Chen, Kevin Lu, Aravind Rajeswaran, Kimin Lee, Aditya Grover, Misha Laskin, Pieter Abbeel, Aravind Srinivas, and Igor Mordatch.
\newblock Decision transformer: Reinforcement learning via sequence modeling.
\newblock \emph{Advances in neural information processing systems}, 34:\penalty0 15084--15097, 2021.

\bibitem[Chen et~al.(2023)Chen, Li, and Zhao]{chen2023boosting}
Yuhui Chen, Haoran Li, and Dongbin Zhao.
\newblock Boosting continuous control with consistency policy.
\newblock \emph{arXiv preprint arXiv:2310.06343}, 2023.

\bibitem[Chen et~al.(2003)]{chen2003bayesian}
Zhe Chen et~al.
\newblock Bayesian filtering: From kalman filters to particle filters, and beyond.
\newblock \emph{Statistics}, 182\penalty0 (1):\penalty0 1--69, 2003.

\bibitem[Chi et~al.(2023)Chi, Xu, Feng, Cousineau, Du, Burchfiel, Tedrake, and Song]{chi2023diffusion}
Cheng Chi, Zhenjia Xu, Siyuan Feng, Eric Cousineau, Yilun Du, Benjamin Burchfiel, Russ Tedrake, and Shuran Song.
\newblock Diffusion policy: Visuomotor policy learning via action diffusion.
\newblock \emph{The International Journal of Robotics Research}, pp.\  02783649241273668, 2023.

\bibitem[Cho et~al.(2014)Cho, Van~Merri{\"e}nboer, Gulcehre, Bahdanau, Bougares, Schwenk, and Bengio]{cho2014learning}
Kyunghyun Cho, Bart Van~Merri{\"e}nboer, Caglar Gulcehre, Dzmitry Bahdanau, Fethi Bougares, Holger Schwenk, and Yoshua Bengio.
\newblock Learning phrase representations using rnn encoder--decoder for statistical machine translation.
\newblock In \emph{Proceedings of the 2014 Conference on Empirical Methods in Natural Language Processing (EMNLP)}, pp.\  1724--1734, 2014.

\bibitem[Dhariwal \& Nichol(2021)Dhariwal and Nichol]{dhariwal2021diffusion}
Prafulla Dhariwal and Alexander Nichol.
\newblock Diffusion models beat gans on image synthesis.
\newblock \emph{Advances in neural information processing systems}, 34:\penalty0 8780--8794, 2021.

\bibitem[Dong et~al.(2024)Dong, Yuan, HAO, Ni, Mu, ZHENG, Hu, Lv, Fan, and Hu]{dong2024aligndiff}
Zibin Dong, Yifu Yuan, Jianye HAO, Fei Ni, Yao Mu, YAN ZHENG, Yujing Hu, Tangjie Lv, Changjie Fan, and Zhipeng Hu.
\newblock Aligndiff: Aligning diverse human preferences via behavior-customisable diffusion model.
\newblock In \emph{The Twelfth International Conference on Learning Representations}, 2024.
\newblock URL \url{https://openreview.net/forum?id=bxfKIYfHyx}.

\bibitem[Doshi-Velez \& Konidaris(2016)Doshi-Velez and Konidaris]{doshi2016hidden}
Finale Doshi-Velez and George Konidaris.
\newblock Hidden parameter markov decision processes: A semiparametric regression approach for discovering latent task parametrizations.
\newblock In \emph{IJCAI: proceedings of the conference}, volume 2016, pp.\  1432, 2016.

\bibitem[Duff(2002)]{duff2002optimal}
Michael~O'Gordon Duff.
\newblock \emph{Optimal Learning: Computational procedures for Bayes-adaptive Markov decision processes}.
\newblock University of Massachusetts Amherst, 2002.

\bibitem[Dunford \& Schwartz(1971)Dunford and Schwartz]{dunford1988linear}
Nelson Dunford and Jacob~T. Schwartz.
\newblock \emph{Linear Operators}.
\newblock John Wiley \& Sons, New York, 1971.

\bibitem[Ehrmann et~al.(2023)Ehrmann, Joshi, Goodfellow, Mazwi, and Eytan]{ehrmann2023making}
Daniel~E Ehrmann, Shalmali Joshi, Sebastian~D Goodfellow, Mjaye~L Mazwi, and Danny Eytan.
\newblock Making machine learning matter to clinicians: model actionability in medical decision-making.
\newblock \emph{NPJ Digital Medicine}, 6\penalty0 (1):\penalty0 7, 2023.

\bibitem[Feng \& Magliacane(2023)Feng and Magliacane]{feng2023learning}
Fan Feng and Sara Magliacane.
\newblock Learning dynamic attribute-factored world models for efficient multi-object reinforcement learning.
\newblock \emph{Advances in Neural Information Processing Systems}, 36:\penalty0 19117--19144, 2023.

\bibitem[Feng et~al.(2022)Feng, Huang, Zhang, and Magliacane]{feng2022factored}
Fan Feng, Biwei Huang, Kun Zhang, and Sara Magliacane.
\newblock Factored adaptation for non-stationary reinforcement learning.
\newblock In \emph{Advances in Neural Information Processing Systems (NeurIPS)}, 2022.

\bibitem[Feng et~al.(2026)Feng, Lippe, and Magliacane]{feng2026learning}
Fan Feng, Phillip Lippe, and Sara Magliacane.
\newblock Learning interactive world model for object-centric reinforcement learning.
\newblock \emph{Advances in Neural Information Processing Systems}, 38:\penalty0 89827--89862, 2026.

\bibitem[Fu et~al.(2020)Fu, Kumar, Nachum, Tucker, and Levine]{fu2020d4rl}
Justin Fu, Aviral Kumar, Ofir Nachum, George Tucker, and Sergey Levine.
\newblock D4rl: Datasets for deep data-driven reinforcement learning.
\newblock \emph{arXiv preprint arXiv:2004.07219}, 2020.

\bibitem[Fu et~al.(2025)Fu, Huang, Li, Zheng, Ng, Chen, Hu, and Zhang]{fu2025learning}
Minghao Fu, Biwei Huang, Zijian Li, Yujia Zheng, Ignavier Ng, Guangyi Chen, Yingyao Hu, and Kun Zhang.
\newblock Learning general causal structures with hidden dynamic process for climate analysis.
\newblock \emph{arXiv preprint arXiv:2501.12500}, 2025.

\bibitem[Gangwani et~al.(2020)Gangwani, Lehman, Liu, and Peng]{gangwani2020learning}
Tanmay Gangwani, Joel Lehman, Qiang Liu, and Jian Peng.
\newblock Learning belief representations for imitation learning in pomdps.
\newblock In \emph{uncertainty in artificial intelligence}, pp.\  1061--1071. PMLR, 2020.

\bibitem[Gao et~al.(2024{\natexlab{a}})Gao, Xie, Xiao, Finn, and Sadigh]{gao2024efficient}
Jensen Gao, Annie Xie, Ted Xiao, Chelsea Finn, and Dorsa Sadigh.
\newblock Efficient data collection for robotic manipulation via compositional generalization.
\newblock \emph{arXiv preprint arXiv:2403.05110}, 2024{\natexlab{a}}.

\bibitem[Gao et~al.(2024{\natexlab{b}})Gao, Shi, Zhang, Wang, and Xiao]{gao2024vid}
Kaifeng Gao, Jiaxin Shi, Hanwang Zhang, Chunping Wang, and Jun Xiao.
\newblock Vid-gpt: Introducing gpt-style autoregressive generation in video diffusion models.
\newblock \emph{arXiv preprint arXiv:2406.10981}, 2024{\natexlab{b}}.

\bibitem[Goyal et~al.(2021)Goyal, Lamb, Hoffmann, Sodhani, Levine, Bengio, and Sch{\"o}lkopf]{goyal2021recurrent}
Anirudh Goyal, Alex Lamb, Jordan Hoffmann, Shagun Sodhani, Sergey Levine, Yoshua Bengio, and Bernhard Sch{\"o}lkopf.
\newblock Recurrent independent mechanisms.
\newblock In \emph{International Conference on Learning Representations (ICLR)}, 2021.
\newblock URL \url{https://openreview.net/forum?id=mLcmdlEUxy-}.

\bibitem[Gregor et~al.(2018)Gregor, Papamakarios, Besse, Buesing, and Weber]{gregor2018temporal}
Karol Gregor, George Papamakarios, Frederic Besse, Lars Buesing, and Theophane Weber.
\newblock Temporal difference variational auto-encoder.
\newblock \emph{arXiv preprint arXiv:1806.03107}, 2018.

\bibitem[Guestrin et~al.(2003)Guestrin, Koller, Parr, and Venkataraman]{guestrin2003efficient}
Carlos Guestrin, Daphne Koller, Ronald Parr, and Shobha Venkataraman.
\newblock Efficient solution algorithms for factored mdps.
\newblock \emph{Journal of Artificial Intelligence Research}, 19:\penalty0 399--468, 2003.

\bibitem[Guo et~al.(2018)Guo, Azar, Piot, Pires, and Munos]{guo2018neural}
Zhaohan~Daniel Guo, Mohammad~Gheshlaghi Azar, Bilal Piot, Bernardo~A Pires, and R{\'e}mi Munos.
\newblock Neural predictive belief representations.
\newblock \emph{arXiv preprint arXiv:1811.06407}, 2018.

\bibitem[Gupta et~al.(2020)Gupta, Kumar, Lynch, Levine, and Hausman]{gupta2019relay}
Abhishek Gupta, Vikash Kumar, Corey Lynch, Sergey Levine, and Karol Hausman.
\newblock Relay policy learning: Solving long-horizon tasks via imitation and reinforcement learning.
\newblock In \emph{Proceedings of the Conference on Robot Learning (CoRL)}. PMLR, 2020.

\bibitem[Hallak et~al.(2015)Hallak, Di~Castro, and Mannor]{hallak2015contextual}
Assaf Hallak, Dotan Di~Castro, and Shie Mannor.
\newblock Contextual markov decision processes.
\newblock \emph{arXiv preprint arXiv:1502.02259}, 2015.

\bibitem[Hansen et~al.(2022)Hansen, Wang, and Su]{hansen2022temporal}
Nicklas Hansen, Xiaolong Wang, and Hao Su.
\newblock Temporal difference learning for model predictive control.
\newblock \emph{arXiv preprint arXiv:2203.04955}, 2022.

\bibitem[Hansen-Estruch et~al.(2023)Hansen-Estruch, Kostrikov, Janner, Kuba, and Levine]{hansenestruch2023idql}
Philippe Hansen-Estruch, Ilya Kostrikov, Michael Janner, Jakub~Grudzien Kuba, and Sergey Levine.
\newblock Idql: Implicit q-learning as an actor-critic method with diffusion policies, 2023.

\bibitem[Hauskrecht(2000)]{hauskrecht2000value}
Milos Hauskrecht.
\newblock Value-function approximations for partially observable markov decision processes.
\newblock \emph{Journal of artificial intelligence research}, 13:\penalty0 33--94, 2000.

\bibitem[Hauskrecht \& Fraser(2000)Hauskrecht and Fraser]{hauskrecht2000planning}
Milos Hauskrecht and Hamish Fraser.
\newblock Planning treatment of ischemic heart disease with partially observable markov decision processes.
\newblock \emph{Artificial intelligence in medicine}, 18\penalty0 (3):\penalty0 221--244, 2000.

\bibitem[He et~al.(2023)He, Bai, Xu, Yang, Zhang, Wang, Zhao, and Li]{he2023diffusion}
Haoran He, Chenjia Bai, Kang Xu, Zhuoran Yang, Weinan Zhang, Dong Wang, Bin Zhao, and Xuelong Li.
\newblock Diffusion model is an effective planner and data synthesizer for multi-task reinforcement learning.
\newblock \emph{Advances in neural information processing systems}, 36:\penalty0 64896--64917, 2023.

\bibitem[Hejna et~al.(2024)Hejna, Bhateja, Jiang, Pertsch, and Sadigh]{hejna2024re}
Joey Hejna, Chethan Bhateja, Yichen Jiang, Karl Pertsch, and Dorsa Sadigh.
\newblock Re-mix: Optimizing data mixtures for large scale imitation learning.
\newblock \emph{arXiv preprint arXiv:2408.14037}, 2024.

\bibitem[Ho \& Salimans(2022)Ho and Salimans]{ho2022classifier}
Jonathan Ho and Tim Salimans.
\newblock Classifier-free diffusion guidance.
\newblock \emph{arXiv preprint arXiv:2207.12598}, 2022.

\bibitem[Ho et~al.(2020)Ho, Jain, and Abbeel]{ho2020denoising}
Jonathan Ho, Ajay Jain, and Pieter Abbeel.
\newblock Denoising diffusion probabilistic models.
\newblock \emph{Advances in neural information processing systems}, 33:\penalty0 6840--6851, 2020.

\bibitem[Ho et~al.(2022)Ho, Salimans, Gritsenko, Chan, Norouzi, and Fleet]{ho2022video}
Jonathan Ho, Tim Salimans, Alexey Gritsenko, William Chan, Mohammad Norouzi, and David~J Fleet.
\newblock Video diffusion models.
\newblock \emph{Advances in Neural Information Processing Systems}, 35:\penalty0 8633--8646, 2022.

\bibitem[Hu \& Schennach(2008)Hu and Schennach]{hu2008instrumental}
Yingyao Hu and Susanne~M Schennach.
\newblock Instrumental variable treatment of nonclassical measurement error models.
\newblock \emph{Econometrica}, 76\penalty0 (1):\penalty0 195--216, 2008.

\bibitem[Hu \& Shum(2012)Hu and Shum]{hu2012nonparametric}
Yingyao Hu and Matthew Shum.
\newblock Nonparametric identification of dynamic models with unobserved state variables.
\newblock \emph{Journal of Econometrics}, 171\penalty0 (1):\penalty0 32--44, 2012.

\bibitem[Huang et~al.(2021)Huang, Feng, Lu, Magliacane, and Zhang]{huang2021adarl}
Biwei Huang, Fan Feng, Chaochao Lu, Sara Magliacane, and Kun Zhang.
\newblock Adarl: What, where, and how to adapt in transfer reinforcement learning.
\newblock \emph{arXiv preprint arXiv:2107.02729}, 2021.

\bibitem[Huang et~al.(2024)Huang, Tang, Lv, Tomizuka, and Zhan]{huang2024learning}
Zhiyu Huang, Chen Tang, Chen Lv, Masayoshi Tomizuka, and Wei Zhan.
\newblock Learning online belief prediction for efficient pomdp planning in autonomous driving.
\newblock \emph{IEEE Robotics and Automation Letters}, 2024.

\bibitem[Hussein et~al.(2017)Hussein, Gaber, Elyan, and Jayne]{hussein2017imitation}
Ahmed Hussein, Mohamed~Medhat Gaber, Eyad Elyan, and Chrisina Jayne.
\newblock Imitation learning: A survey of learning methods.
\newblock \emph{ACM Computing Surveys (CSUR)}, 50\penalty0 (2):\penalty0 1--35, 2017.

\bibitem[Igl et~al.(2018)Igl, Zintgraf, Le, Wood, and Whiteson]{igl2018deep}
Maximilian Igl, Luisa Zintgraf, Tuan~Anh Le, Frank Wood, and Shimon Whiteson.
\newblock Deep variational reinforcement learning for pomdps.
\newblock In \emph{International conference on machine learning}, pp.\  2117--2126. PMLR, 2018.

\bibitem[Janner et~al.(2022)Janner, Du, Tenenbaum, and Levine]{janner2022planning}
Michael Janner, Yilun Du, Joshua~B Tenenbaum, and Sergey Levine.
\newblock Planning with diffusion for flexible behavior synthesis.
\newblock \emph{arXiv preprint arXiv:2205.09991}, 2022.

\bibitem[Jiang et~al.(2023)Jiang, Cornman, Park, Sapp, Zhou, Anguelov, et~al.]{jiang2023motiondiffuser}
Chiyu Jiang, Andre Cornman, Cheolho Park, Benjamin Sapp, Yin Zhou, Dragomir Anguelov, et~al.
\newblock Motiondiffuser: Controllable multi-agent motion prediction using diffusion.
\newblock In \emph{Proceedings of the IEEE/CVF conference on computer vision and pattern recognition}, pp.\  9644--9653, 2023.

\bibitem[Kaelbling et~al.(1998)Kaelbling, Littman, and Cassandra]{kaelbling1998planning}
Leslie~Pack Kaelbling, Michael~L Littman, and Anthony~R Cassandra.
\newblock Planning and acting in partially observable stochastic domains.
\newblock \emph{Artificial intelligence}, 101\penalty0 (1-2):\penalty0 99--134, 1998.

\bibitem[Kingma(2014)]{kingma2014adam}
Diederik~P Kingma.
\newblock Adam: A method for stochastic optimization.
\newblock \emph{arXiv preprint arXiv:1412.6980}, 2014.

\bibitem[Kingma \& Welling(2014)Kingma and Welling]{kingma2013auto}
Diederik~P Kingma and Max Welling.
\newblock Auto-encoding variational bayes.
\newblock \emph{The International Conference on Learning Representations (ICLR)}, 2014.

\bibitem[Kong et~al.(2024)Kong, Xu, Zhao, Pang, Xie, Lizarraga, Huang, Xie, and Wu]{kong2024latent}
Deqian Kong, Dehong Xu, Minglu Zhao, Bo~Pang, Jianwen Xie, Andrew Lizarraga, Yuhao Huang, Sirui Xie, and Ying~Nian Wu.
\newblock Latent plan transformer for trajectory abstraction: Planning as latent space inference.
\newblock \emph{Advances in Neural Information Processing Systems}, 37:\penalty0 123379--123401, 2024.

\bibitem[Lauri et~al.(2022)Lauri, Hsu, and Pajarinen]{lauri2022partially}
Mikko Lauri, David Hsu, and Joni Pajarinen.
\newblock Partially observable markov decision processes in robotics: A survey.
\newblock \emph{IEEE Transactions on Robotics}, 39\penalty0 (1):\penalty0 21--40, 2022.

\bibitem[Li et~al.(2025)Li, Gao, Sadigh, and Song]{li2025unified}
Shuang Li, Yihuai Gao, Dorsa Sadigh, and Shuran Song.
\newblock Unified video action model.
\newblock \emph{arXiv preprint arXiv:2503.00200}, 2025.

\bibitem[Li(2024)]{li2024efficient}
Wenhao Li.
\newblock Efficient planning with latent diffusion.
\newblock In \emph{The Twelfth International Conference on Learning Representations}, 2024.
\newblock URL \url{https://openreview.net/forum?id=btpgDo4u4j}.

\bibitem[Liang et~al.(2024{\natexlab{a}})Liang, Tennenholtz, Hsu, Chow, B{\i}y{\i}k, and Boutilier]{liang2024dynamite}
Anthony Liang, Guy Tennenholtz, Chih-wei Hsu, Yinlam Chow, Erdem B{\i}y{\i}k, and Craig Boutilier.
\newblock Dynamite-rl: A dynamic model for improved temporal meta-reinforcement learning.
\newblock \emph{arXiv preprint arXiv:2402.15957}, 2024{\natexlab{a}}.

\bibitem[Liang et~al.(2023)Liang, Mu, Ding, Ni, Tomizuka, and Luo]{liang2023adaptdiffuser}
Zhixuan Liang, Yao Mu, Mingyu Ding, Fei Ni, Masayoshi Tomizuka, and Ping Luo.
\newblock Adaptdiffuser: Diffusion models as adaptive self-evolving planners.
\newblock \emph{arXiv preprint arXiv:2302.01877}, 2023.

\bibitem[Liang et~al.(2024{\natexlab{b}})Liang, Mu, Ma, Tomizuka, Ding, and Luo]{liang2024skilldiffuser}
Zhixuan Liang, Yao Mu, Hengbo Ma, Masayoshi Tomizuka, Mingyu Ding, and Ping Luo.
\newblock Skilldiffuser: Interpretable hierarchical planning via skill abstractions in diffusion-based task execution.
\newblock In \emph{Proceedings of the IEEE/CVF Conference on Computer Vision and Pattern Recognition}, pp.\  16467--16476, 2024{\natexlab{b}}.

\bibitem[Liu et~al.(2023)Liu, Zhu, Gao, Feng, Liu, Zhu, and Stone]{liu2024libero}
Bo~Liu, Yifeng Zhu, Chongkai Gao, Yihao Feng, Qiang Liu, Yuke Zhu, and Peter Stone.
\newblock Libero: Benchmarking knowledge transfer for lifelong robot learning.
\newblock \emph{Advances in Neural Information Processing Systems (NeurIPS)}, 36, 2023.

\bibitem[Lu et~al.(2023)Lu, Chen, Chen, Su, Li, and Zhu]{lu2023contrastive}
Cheng Lu, Huayu Chen, Jianfei Chen, Hang Su, Chongxuan Li, and Jun Zhu.
\newblock Contrastive energy prediction for exact energy-guided diffusion sampling in offline reinforcement learning.
\newblock In \emph{International Conference on Machine Learning}, pp.\  22825--22855. PMLR, 2023.

\bibitem[Mandlekar et~al.(2021)Mandlekar, Xu, Wong, Nasiriany, Wang, Kulkarni, Fei-Fei, Savarese, Zhu, and Mart\'{i}n-Mart\'{i}n]{robomimic2021}
Ajay Mandlekar, Danfei Xu, Josiah Wong, Soroush Nasiriany, Chen Wang, Rohun Kulkarni, Li~Fei-Fei, Silvio Savarese, Yuke Zhu, and Roberto Mart\'{i}n-Mart\'{i}n.
\newblock What matters in learning from offline human demonstrations for robot manipulation.
\newblock In \emph{Conference on Robot Learning (CoRL)}, 2021.

\bibitem[Martin(1965)]{martin1965some}
James~John Martin.
\newblock \emph{Some Bayesian decision problems in a Markov chain.}
\newblock PhD thesis, Massachusetts Institute of Technology, 1965.

\bibitem[Nguyen et~al.(2021)Nguyen, Do, and Carneiro]{nguyen2021probabilistic}
Cuong~C Nguyen, Thanh-Toan Do, and Gustavo Carneiro.
\newblock Probabilistic task modelling for meta-learning.
\newblock In \emph{Uncertainty in Artificial Intelligence}, pp.\  781--791. PMLR, 2021.

\bibitem[Ni et~al.(2023)Ni, Hao, Mu, Yuan, Zheng, Wang, and Liang]{ni2023metadiffuser}
Fei Ni, Jianye Hao, Yao Mu, Yifu Yuan, Yan Zheng, Bin Wang, and Zhixuan Liang.
\newblock Metadiffuser: Diffusion model as conditional planner for offline meta-rl.
\newblock In \emph{International Conference on Machine Learning}, pp.\  26087--26105. PMLR, 2023.

\bibitem[Pearl(2010)]{pearl2010causal}
Judea Pearl.
\newblock Causal inference.
\newblock \emph{Causality: objectives and assessment}, pp.\  39--58, 2010.

\bibitem[Perez et~al.(2020)Perez, Such, and Karaletsos]{perez2020generalized}
Christian Perez, Felipe~Petroski Such, and Theofanis Karaletsos.
\newblock Generalized hidden parameter mdps: Transferable model-based rl in a handful of trials.
\newblock In \emph{Proceedings of the AAAI Conference on Artificial Intelligence}, volume~34, pp.\  5403--5411, 2020.

\bibitem[Pertsch et~al.(2021)Pertsch, Lee, and Lim]{pertsch2021accelerating}
Karl Pertsch, Youngwoon Lee, and Joseph Lim.
\newblock Accelerating reinforcement learning with learned skill priors.
\newblock In \emph{Conference on robot learning}, pp.\  188--204. PMLR, 2021.

\bibitem[Pomerleau(1991)]{pomerleau1991efficient}
Dean~A Pomerleau.
\newblock Efficient training of artificial neural networks for autonomous navigation.
\newblock \emph{Neural computation}, 3\penalty0 (1):\penalty0 88--97, 1991.

\bibitem[Rabiner \& Juang(1986)Rabiner and Juang]{rabiner1986introduction}
Lawrence Rabiner and Biinghwang Juang.
\newblock An introduction to hidden markov models.
\newblock \emph{ieee assp magazine}, 3\penalty0 (1):\penalty0 4--16, 1986.

\bibitem[Rakelly et~al.(2019)Rakelly, Zhou, Finn, Levine, and Quillen]{rakelly2019efficient}
Kate Rakelly, Aurick Zhou, Chelsea Finn, Sergey Levine, and Deirdre Quillen.
\newblock Efficient off-policy meta-reinforcement learning via probabilistic context variables.
\newblock In \emph{International conference on machine learning}, pp.\  5331--5340. PMLR, 2019.

\bibitem[Ren et~al.(2025)Ren, Lidard, Ankile, Simeonov, Agrawal, Majumdar, Burchfiel, Dai, and Simchowitz]{ren2025diffusion}
Allen~Z. Ren, Justin Lidard, Lars~Lien Ankile, Anthony Simeonov, Pulkit Agrawal, Anirudha Majumdar, Benjamin Burchfiel, Hongkai Dai, and Max Simchowitz.
\newblock Diffusion policy policy optimization.
\newblock In \emph{The Thirteenth International Conference on Learning Representations}, 2025.
\newblock URL \url{https://openreview.net/forum?id=mEpqHvbD2h}.

\bibitem[Ronneberger et~al.(2015)Ronneberger, Fischer, and Brox]{ronneberger2015u}
Olaf Ronneberger, Philipp Fischer, and Thomas Brox.
\newblock U-net: Convolutional networks for biomedical image segmentation.
\newblock In \emph{Medical image computing and computer-assisted intervention--MICCAI 2015: 18th international conference, Munich, Germany, October 5-9, 2015, proceedings, part III 18}, pp.\  234--241. Springer, 2015.

\bibitem[Sand-AI(2025)]{magi1}
Sand-AI.
\newblock Magi-1: Autoregressive video generation at scale, 2025.
\newblock URL \url{https://static.magi.world/static/files/MAGI_1.pdf}.

\bibitem[Shih et~al.(2023)Shih, Belkhale, Ermon, Sadigh, and Anari]{shih2023parallel}
Andy Shih, Suneel Belkhale, Stefano Ermon, Dorsa Sadigh, and Nima Anari.
\newblock Parallel sampling of diffusion models.
\newblock \emph{Advances in Neural Information Processing Systems}, 36:\penalty0 4263--4276, 2023.

\bibitem[Sodhani et~al.(2021)Sodhani, Zhang, and Pineau]{sodhani2021multi}
Shagun Sodhani, Amy Zhang, and Joelle Pineau.
\newblock Multi-task reinforcement learning with context-based representations.
\newblock In \emph{International Conference on Machine Learning}, pp.\  9767--9779. PMLR, 2021.

\bibitem[Sutton et~al.(1998)Sutton, Barto, et~al.]{sutton1998reinforcement}
Richard~S Sutton, Andrew~G Barto, et~al.
\newblock \emph{Reinforcement learning: An introduction}, volume~1.
\newblock MIT press Cambridge, 1998.

\bibitem[Swamy et~al.(2022)Swamy, Choudhury, Bagnell, and Wu]{swamy2022sequence}
Gokul Swamy, Sanjiban Choudhury, J~Bagnell, and Steven~Z Wu.
\newblock Sequence model imitation learning with unobserved contexts.
\newblock \emph{Advances in Neural Information Processing Systems}, 35:\penalty0 17665--17676, 2022.

\bibitem[Vaswani et~al.(2017)Vaswani, Shazeer, Parmar, Uszkoreit, Jones, Gomez, Kaiser, and Polosukhin]{vaswani2017attention}
Ashish Vaswani, Noam Shazeer, Niki Parmar, Jakob Uszkoreit, Llion Jones, Aidan~N Gomez, {\L}ukasz Kaiser, and Illia Polosukhin.
\newblock Attention is all you need.
\newblock \emph{Advances in neural information processing systems}, 30, 2017.

\bibitem[Venkatraman et~al.(2024)Venkatraman, Khaitan, Akella, Dolan, Schneider, and Berseth]{venkatraman2024reasoning}
Siddarth Venkatraman, Shivesh Khaitan, Ravi~Tej Akella, John Dolan, Jeff Schneider, and Glen Berseth.
\newblock Reasoning with latent diffusion in offline reinforcement learning.
\newblock In \emph{The Twelfth International Conference on Learning Representations}, 2024.
\newblock URL \url{https://openreview.net/forum?id=tGQirjzddO}.

\bibitem[Wang \& Huang(2025)Wang and Huang]{wang2025modeling}
Xinyue Wang and Biwei Huang.
\newblock Modeling unseen environments with language-guided composable causal components in reinforcement learning.
\newblock \emph{arXiv preprint arXiv:2505.08361}, 2025.

\bibitem[Wang et~al.(2022)Wang, Hunt, and Zhou]{wang2022diffusion}
Zhendong Wang, Jonathan~J Hunt, and Mingyuan Zhou.
\newblock Diffusion policies as an expressive policy class for offline reinforcement learning.
\newblock \emph{arXiv preprint arXiv:2208.06193}, 2022.

\bibitem[Wu et~al.(2023)Wu, Fan, Liu, Zheng, Gong, Jiao, Li, Guo, Duan, and Chen]{wu2023ar}
Tong Wu, Zhihao Fan, Xiao Liu, Hai-Tao Zheng, Yeyun Gong, Jian Jiao, Juntao Li, Jian Guo, Nan Duan, and Weizhu Chen.
\newblock Ar-diffusion: Auto-regressive diffusion model for text generation.
\newblock \emph{Advances in Neural Information Processing Systems}, 36:\penalty0 39957--39974, 2023.

\bibitem[Xiao et~al.(2025)Xiao, Wang, Gan, Hasani, Lechner, and Rus]{xiao2025safediffuser}
Wei Xiao, Tsun-Hsuan Wang, Chuang Gan, Ramin Hasani, Mathias Lechner, and Daniela Rus.
\newblock Safediffuser: Safe planning with diffusion probabilistic models.
\newblock In \emph{The Thirteenth International Conference on Learning Representations}, 2025.
\newblock URL \url{https://openreview.net/forum?id=ig2wk7kK9J}.

\bibitem[Xie et~al.(2025)Xie, Rybkin, Sadigh, and Finn]{xie2025latent}
Amber Xie, Oleh Rybkin, Dorsa Sadigh, and Chelsea Finn.
\newblock Latent diffusion planning for imitation learning.
\newblock \emph{International Conference on Machine Learning (ICML)}, 2025.

\bibitem[Xie et~al.(2021)Xie, Harrison, and Finn]{xie21c}
Annie Xie, James Harrison, and Chelsea Finn.
\newblock Deep reinforcement learning amidst continual structured non-stationarity.
\newblock In Marina Meila and Tong Zhang (eds.), \emph{Proceedings of the 38th International Conference on Machine Learning}, volume 139 of \emph{Proceedings of Machine Learning Research}, pp.\  11393--11403. PMLR, 18--24 Jul 2021.

\bibitem[Xie et~al.(2024{\natexlab{a}})Xie, Lee, Xiao, and Finn]{xie2024decomposing}
Annie Xie, Lisa Lee, Ted Xiao, and Chelsea Finn.
\newblock Decomposing the generalization gap in imitation learning for visual robotic manipulation.
\newblock In \emph{2024 IEEE International Conference on Robotics and Automation (ICRA)}, pp.\  3153--3160. IEEE, 2024{\natexlab{a}}.

\bibitem[Xie et~al.(2024{\natexlab{b}})Xie, Xu, Hong, Tan, Liu, Liu, Kaufman, and Zhou]{xie2024progressive}
Desai Xie, Zhan Xu, Yicong Hong, Hao Tan, Difan Liu, Feng Liu, Arie Kaufman, and Yang Zhou.
\newblock Progressive autoregressive video diffusion models.
\newblock \emph{arXiv preprint arXiv:2410.08151}, 2024{\natexlab{b}}.

\bibitem[Zeng et~al.(2023)Zeng, Zhang, Wang, and Sun]{zeng2023goal}
Zilai Zeng, Ce~Zhang, Shijie Wang, and Chen Sun.
\newblock Goal-conditioned predictive coding for offline reinforcement learning.
\newblock \emph{Advances in Neural Information Processing Systems}, 36:\penalty0 25528--25548, 2023.

\bibitem[Zheng et~al.(2022)Zheng, Zhang, and Grover]{zheng2022online}
Qinqing Zheng, Amy Zhang, and Aditya Grover.
\newblock Online decision transformer.
\newblock In \emph{international conference on machine learning}, pp.\  27042--27059. PMLR, 2022.

\bibitem[Zheng et~al.(2024)Zheng, Peng, Yang, Shen, Li, Liu, Zhou, Li, and You]{zheng2024open}
Zangwei Zheng, Xiangyu Peng, Tianji Yang, Chenhui Shen, Shenggui Li, Hongxin Liu, Yukun Zhou, Tianyi Li, and Yang You.
\newblock Open-sora: Democratizing efficient video production for all.
\newblock \emph{arXiv preprint arXiv:2412.20404}, 2024.

\bibitem[Zhu et~al.(2025)Zhu, Yu, Feng, Burchfiel, Shah, and Gupta]{zhu2025unified}
Chuning Zhu, Raymond Yu, Siyuan Feng, Benjamin Burchfiel, Paarth Shah, and Abhishek Gupta.
\newblock Unified world models: Coupling video and action diffusion for pretraining on large robotic datasets.
\newblock \emph{arXiv preprint arXiv:2504.02792}, 2025.

\bibitem[Zhu et~al.(2023)Zhu, Zhao, He, Zhong, Zhang, Guo, Chen, and Zhang]{zhu2023diffusion}
Zhengbang Zhu, Hanye Zhao, Haoran He, Yichao Zhong, Shenyu Zhang, Haoquan Guo, Tingting Chen, and Weinan Zhang.
\newblock Diffusion models for reinforcement learning: A survey.
\newblock \emph{arXiv preprint arXiv:2311.01223}, 2023.

\bibitem[Zintgraf et~al.(2021)Zintgraf, Schulze, Lu, Feng, Igl, Shiarlis, Gal, Hofmann, and Whiteson]{zintgraf2021varibad}
Luisa Zintgraf, Sebastian Schulze, Cong Lu, Leo Feng, Maximilian Igl, Kyriacos Shiarlis, Yarin Gal, Katja Hofmann, and Shimon Whiteson.
\newblock Varibad: Variational bayes-adaptive deep rl via meta-learning.
\newblock \emph{Journal of Machine Learning Research}, 22\penalty0 (289):\penalty0 1--39, 2021.

\end{thebibliography}
\bibliographystyle{iclr2026_conference}

\newpage

\appendix
\begin{center}
\textit{\Large Appendix}    
\end{center}

\vspace{4mm}
\hrule height .5pt
\startcontents[sections]
\printcontents[sections]{l}{1}{\setcounter{tocdepth}{3}}
\vskip 8mm
\hrule height .5pt
\vskip 10mm
\startcontents
\section{Discussions and Overview}
In this section, we expand on the design and motivation behind \mcall{}, including the rationale for modeling latent factors in decision-making, key architectural choices, and additional analysis of the experimental results presented in Section~\ref{sec:exp}. We then provide an overview of the remaining contents of this appendix.

\subsection{Broader Impact} Our work aims to identify and leverage latent processes in generative decision-making, with applications in real-world domains such as robotics and healthcare. While these tasks may entail potential societal risks, we do not believe any specific concerns need to be highlighted here. Instead, by uncovering and modeling the underlying hidden processes, our approach promotes greater transparency in decision-making, which can ultimately lead to more reliable and trustworthy outcomes.

\subsection{Limitations and Future Work} One current limitation is that this work focuses primarily on theoretical formulation and algorithmic development. Although we evaluate on a variety of established benchmarks, real-world deployment, such as in self-driving, aerial drones, and physical robotics, remains an important direction for future work.
\subsection{Discussions on the Core Idea}\label{app-discussions}

\begin{empheqboxed}
     \looseness=-1 \textbf{\textcolor{darkred}{Q1: On Latent Modeling.}} \textit{Why is it necessary to model latent processes when we already have access to a large amount of demonstration data?}
\end{empheqboxed}

In many decision-making systems, there exist unobservable variables that influence both the dynamics and the reward structure. More generally, these latent variables often evolve over time. Such scenarios are common in real-world settings, for example, in robotic control, system dynamics can be affected by external forces (e.g., wind, friction), or by varying user demands (e.g., different target positions). In these cases, learning an optimal policy requires conditioning on the latent factors, especially when they are non-stationary or when transferring to new domains. Prior work has demonstrated the importance of latent variable modeling in both reinforcement learning (RL) and imitation learning (IL)~\citep{zintgraf2021varibad, liang2024dynamite, nguyen2021probabilistic, rakelly2019efficient, ni2023metadiffuser, xie21c}. 

Even with access to large demonstration datasets, it remains difficult to ensure sufficient coverage over the full space of environmental or task-specific latent factors relevant to decision-making. This limitation has been widely acknowledged in recent efforts focused on analyzing data quality and designing data collection protocols to promote generalization~\citep{belkhale2023data, xie2024decomposing, hejna2024re, gao2024efficient}.
However, most of these works target fixed or task-specific latent variables. In contrast, we consider a more general setting where latent factors evolve over time and are not predefined. Our framework provides theoretical guarantees for identifying such latent variables from partial observations and seamlessly integrates this identification process into diffusion models, enabling scalability across complex decision-making tasks.

\begin{empheqboxed}
     \looseness=-1 \textbf{\textcolor{darkred}{Q2: On the Scenarios w/o Explicit Latents.}} \textit{What does latent modeling represent when no explicit latent factors are defined, and why can it still benefit decision-making?}
\end{empheqboxed}

First, \textbf{Latent stochasticity is always present} (in real-world systems). Even in settings where all task-relevant observations are available, e.g., in locomotion tasks where full physical state information is provided, or in robotic manipulation with access to both proprioceptive and visual inputs, there may still exist underlying processes that are not directly observed. These include domain-specific factors such as external forces (e.g., wind) or dynamically changing task goals (e.g., target positions), which can be viewed as implicit latent variables. Hence, it is crucial to infer and condition on these latent factors

In the extreme case where such factors are also \textit{fully observed}, latent modeling can still offer significant benefits. Specifically, it can capture residual stochasticity present in the environment or demonstration data, serving to explain variability not accounted for by observable features. As shown in our formulation: $s_t=f\left(s_{t-1}, a_{t-1}, \epsilon_t\right), \quad r_t=g\left(s_t, a_t, \delta_t\right)$, the residual stochasticity ($\epsilon, \delta$) can be interpreted as implicit latent variables (sometimes can be time-correlated) influencing transitions and rewards. The model can then identify meaningful structure from irrelevant or noisy variations, for instance, filtering out visual background artifacts that are not predictive of dynamics or optimal actions. In this sense, the learning framework is conceptually similar to Bayesian Filtering

Moreover, \textbf{partial observability and attribution gaps exist even in clean data}. Even in environments with consistent near-deterministic demonstrations, the agent often lacks access to the full set of latent causal factors or attributes that influence behavior. Specifically, many systems exhibit structured yet unobserved variability (e.g., task goals, preferences, intentions), and modeling this variability with latent variables improves generalization.
\begin{empheqboxed}
     \looseness=-1 \textbf{\textcolor{darkred}{Q3: On the Identification Theory.}} \textit{What does the identification theory establish, and how does it inform algorithm design?}
\end{empheqboxed}

The identification theory (Theorem~\ref{thm:identifiability}) establishes that the distribution over latent variables can be provably recovered from observable trajectories using only a small temporal window, specifically, a small temporal block of four time steps. This provides a general non-parametric theoretical guarantee that latent factors can be identified without requiring strong inductive biases or restrictive assumptions on the model class or functional form.

This “four-step” result has direct implications for algorithm design. It suggests that latent identification can be effectively performed using a short temporal block, which aligns naturally with block-wise generative modeling approaches such as diffusion models. These models operate over segments or chunks of data, and our theoretical results justify using local temporal blocks to infer latent variables in a principled and scalable manner.

\subsection{Discussions on the Theoretical Assumptions and Results} \label{app:theory_im}

\begin{empheqboxed}
     \looseness=-1 \textbf{\textcolor{darkred}{Q4: On the Assumptions.}} \textit{What do Assumption~\ref{asp:inj} (Distributional Variability) and Assumption~\ref{asp:spec-dec} (Uniqueness of Spectral Decomposition) mean, and why are they considered mild?}
\end{empheqboxed}

We expand on the intuition and practical relevance of these two assumptions below.

\textit{Distributional Variability} (Assumption~\ref{asp:inj}) refers to the requirement that the conditional distributions 
\begin{equation*}
    p(\rvx_{t-2} \mid \rvx_{t+1}), \quad p(\rvx_{t+1} \mid \rvx_t, \rvc_t), \quad \text{and} \quad p(\rvx_t \mid \rvx_{t-2}, \rvx_{t-1})
\end{equation*}
are sufficiently sensitive to variations in their input. That is, for different input pairs within a local neighborhood, the output distributions differ meaningfully, ensuring the system exhibits enough variability for identification. This assumption aligns with real-world decision-making settings (e.g., locomotion or robotic manipulation), where changes in inputs such as physical state, control policy, or reward function lead to observable changes in output distributions. 

\textit{Uniqueness of Spectral Decomposition} (Assumption~\ref{asp:spec-dec}) builds on this by ensuring that changes in the latent variable \( \rvc_t \) induce distinct influences on the transition dynamics, specifically on the mapping from \( \rvx_{t-1} \) to \( \rvx_t \). To formalize this, we consider the operator \( k \):
\begin{equation}
    k(\rvx_t, \bar{\rvx}_t, \rvx_{t-1}, \bar{\rvx}_{t-1}, \rvc_t)
        = \frac{p(\rvx_t \mid \rvx_{t-1}, \rvc_t) \cdot p(\bar{\rvx}_t \mid \bar{\rvx}_{t-1}, \rvc_t)}{p(\bar{\rvx}_t \mid \rvx_{t-1}, \rvc_t) \cdot p(\rvx_t \mid \bar{\rvx}_{t-1}, \rvc_t)},
\end{equation}
which separates into two multiplicative components:
\begin{align}
    k_1 &= \frac{p(\rvx_t \mid \rvx_{t-1}, \rvc_t)}{p(\rvx_t \mid \bar{\rvx}_{t-1}, \rvc_t)}, \\
    k_2 &= \frac{p(\bar{\rvx}_t \mid \bar{\rvx}_{t-1}, \rvc_t)}{p(\bar{\rvx}_t \mid \rvx_{t-1}, \rvc_t)}.
\end{align}
Here, $k_1$ and $k_2$ measure how changes in historical inputs affect the transition distribution at the current time step. The assumption requires that for any two distinct values of $\rvc_t$, the corresponding operator $k$ is different, indicating that the latent variable has a sufficiently strong influence on the system dynamics.

Since $\bar{\rvx}$ is in the neighborhood of $\rvx$, this formulation effectively captures second-order changes in the transition dynamics with respect to the latent variable $\rvc_t$. This reflects many real-world RL systems, where even unobservable latent factors (e.g., wind speed or goal target) cause noticeable and structured changes in transition behavior over time, for instance, by considering velocity as states.

In summary, these two assumptions are not only theoretically necessary for identification, but also naturally hold in many RL and control systems. They justify \textit{the need to explicitly model and identify latent variables}, as such variables often induce meaningful and structured changes in both dynamics and optimal decision-making behavior.

\begin{empheqboxed}
     \looseness=-1 \textbf{\textcolor{darkred}{Q5: On the Identification of Posterior Distribution and up to the Invertible Function (Theorem~\ref{thm:identifiability}).}} \textit{Why do we aim to identify the posterior distribution over latent variables, and what is the role of the invertible function \( h \) between the estimated and true latents?}
\end{empheqboxed}

Theorem~\ref{thm:identifiability} establishes that the posterior distribution over latent factors given surrounding observations, $p(\rvc_t \mid \rvx_{t-2:t+1})$, is identifiable up to an invertible transformation. That is, the estimated latent $\hat{\rvc}_t$ satisfies $\hat{\rvc}_t = h(\rvc_t)$ for some invertible function $h$.

This form of identifiability is sufficient for downstream tasks such as dynamics modeling, planning, and control. Specifically, the learned dynamics or policy can be composed with $h^{-1}$ without loss of expressiveness or utility. Since we only need to condition on the inferred latent $\hat{\rvc}_t$ to perform these tasks, any invertible transformation of the latent space preserves the representational capacity required for decision-making. In other words, although we may not recover the true latent variable $\rvc_t$ exactly, the recovered representation $\hat{\rvc}_t$ contains the same information and can be used equivalently in practice.

Therefore, identifying the posterior distribution (up to an invertible transformation) is both theoretically meaningful and practically sufficient for learning accurate dynamics models and optimal policies.

\subsection{Discussions on the Model Design}
\begin{empheqboxed}
     \looseness=-1 \textbf{\textcolor{darkred}{Q6: On Different Settings (Planning and Policy).}} \textit{How is \mcall{} applied to both planning and policy learning settings?}
\end{empheqboxed}

\mcall{} is designed as a unified and generic framework that accommodates different types of inputs \( \rvx \) (e.g., states, state-action pairs) and outputs (e.g., actions, trajectories, or state sequences). This flexibility allows it to support a wide range of planning and policy learning paradigms. We summarize four representative settings below:

\begin{itemize}
    \item \textbf{Planning with state-action generation:} The model generates both states and actions, with latent variables influencing dynamics or rewards. This setting aligns with prior work such as Diffuser~\citep{janner2022planning}.
    
    \item \textbf{Planning with state-only generation:} The model generates future states, and an inverse dynamics model is used to recover the corresponding actions. This setup follows Decision Diffuser~\citep{ajay2023is}.
    
    \item \textbf{Planning from action-free demonstrations:} Only state sequences are available, and latent variables are assumed to capture high-level behaviors or skills. This setting extends latent diffusion planning~\citep{xie2025latent}.
    
    \item \textbf{Policy learning:} The model generates actions conditioned on the current or recent history of states. This includes multi-step action generation (as in Diffusion Policy~\citep{chi2023diffusion}) and one-step action generation (as in Implicit Diffusion Q-Learning, IDQL~\citep{hansen2022temporal}). In both cases, latent factors may affect the underlying dynamics or rewards.
\end{itemize}

These diverse settings demonstrate the universality of our framework and highlight that uncovering latent structure is a broadly applicable and critical problem in generative decision-making.
\begin{empheqboxed}
     \looseness=-1 \textbf{\textcolor{darkred}{Q7: On the Latent Identification.}} \textit{How is Stage 1 (Latent Identification) trained, and does it introduce additional computational overhead?}
\end{empheqboxed}
In Stage 1, we train the latent identification module using an offline dataset, as commonly done in offline RL and imitation learning tasks. Specifically, we employ a lightweight variational autoencoder (VAE) to optimize the ELBO defined in Section~\ref{sec:stage1}. Empirically, this stage introduces minimal computational overhead (Appendix~\ref{app:compute}). We further provide an ablation study in Appendix~\ref{app:compute} showing the impact of the number of training samples on the effectiveness of the latent identification module.

\begin{empheqboxed}
     \looseness=-1 \textbf{\textcolor{darkred}{Q8: On the Temporal Block Design.}} \textit{How does this reflect Theorem~\ref{thm:identifiability}, and why do we not use exactly four steps in practice?}
\end{empheqboxed}

Our approach reflects the theoretical result in Theorem~\ref{thm:identifiability} by identifying latent variables using small temporal blocks in both Stage 1 and Stage 2. In Stage 1, we segment trajectories into local blocks and optimize the ELBO to learn the posterior over latent variables. In Stage 2, we apply block-wise refinement to improve the posterior estimates using both past and one-step future observations, making a more accurate identification than using the prior alone.

While Theorem~\ref{thm:identifiability} shows that four consecutive time steps are sufficient for identifiability in principle, we do not strictly limit the block size to four in practice. Empirically, we find that using slightly larger blocks (typically between 6 and 20 steps) leads to more stable optimization and better performance. Our ablations in Appendix~\ref{app:full_ab} show that without access to future observations, identifiability degrades, aligning with the theory.

We treat the "four-step" condition not as a strict architectural constraint but as a theoretical justification (sufficient condition) for using small temporal blocks. The optimal number of steps in practice may vary depending on data properties, task complexity, and model capacity.

\begin{empheqboxed}
     \looseness=-1 \textbf{\textcolor{darkred}{Q9: On the Refinement Step.}} \textit{Why is the refinement step necessary, how does it work, and does it introduce additional computational overhead?}
\end{empheqboxed}

The refinement step is motivated by the identification theory, which suggests that incorporating the current and future observations (other than only using historical ones) allows the model to infer a more informative posterior over latent variables than relying on the prior alone. This posterior refinement helps the model better capture latent dynamics by leveraging richer temporal context.

During training, the refinement step encourages the model to extract meaningful information from the posterior. Since Stage 1 optimizes the ELBO, the learned prior is already aligned with the posterior to some extent. This prevents the prior from collapsing into a trivial solution. The refinement step builds on this by using the pre-trained prior while further improving inference through contrastive learning between prior and posterior samples.

Importantly, this procedure does not introduce significant computational overhead. As shown in Appendix~\ref{app:full_ab}, the refinement uses the same denoising network with different latent inputs (\( \rvc \)) and adds only a lightweight contrastive loss, making it efficient in practice.

\subsection{Overview} In this appendix, we first present the theoretical analysis in Section~\ref{app:sec_b}, including the proof of Theorem~\ref{thm:identifiability} and accompanying discussion, followed by the ELBO derivation for \mcall{}. In Section~\ref{sec:mdps}, we provide an in-depth analysis of different types of MDPs and their interconnections. Section~\ref{app:method} details the full \mcall{} algorithm, model architectures, and its relation to Bayesian filtering. Section~\ref{app:rl} expands on related work, covering diffusion-based decision-making, latent state estimation via belief learning, and autoregressive diffusion models.
Finally, Sections~\ref{app:env}, \ref{app: detailed_baselines}, \ref{app:method}, and \ref{app:full_results} provide additional details on benchmarks, baseline implementations, and complete experimental results.

\section{Theory}\label{app:sec_b}

\subsection{Notation List}
We summarize the key notations used throughout the paper in Table~\ref{tab:notation}, including variables for observed and latent states, temporal indices, and relevant mappings. These notations are used consistently in our theoretical analysis and algorithmic framework.
\begin{table}[htp!]
\centering
\resizebox{\textwidth}{!}{
\begin{tabular}{cll}
\toprule
\textbf{Index} & \textbf{Explanation} & \textbf{Support} \\
\midrule
$\rvx_t$ & $[\rvs_t, \rva_t]$, observed trajectories including state and action at time step $t$ & $\mathcal{X}_t \subseteq \mathbb{R}^{d_a + d_s}$ \\
$d_x$ & dimension of observed variables & $d_a + d_s$ \\
$\rvs_t$ & state variable at time $t$ & $\rvs_t \in \mathcal{S}_t$ \\
$\rva_t$ & action variable at time $t$ & $\rva_t \in \mathcal{A}_t$ \\
$r_t$ & reward received at time $t$ & $r_t \in \mathbb{R}$ \\
$\rvc_t$ & latent context variable at time $t$ & $\rvc_t \in \mathcal{C}_t$ \\
$\tau$ & trajectory sequence of $(\rvs_t, \rva_t)$ & $\{(\rvs_0, \rva_0), \dots, (\rvs_T, \rva_T)\}$ \\
$\boldsymbol{\tau}_x$ & observable trajectory (states or state-actions) & $\boldsymbol{\tau}_{sa}$ or $\boldsymbol{\tau}_s$ \\
$\boldsymbol{\tau}_c$ & sequence of latent contexts & $\{\rvc_0, \dots, \rvc_T\}$ \\
$\boldsymbol{\tau}$ & augmented trajectory with context & $[\boldsymbol{\tau}_x, \boldsymbol{\tau}_c]$ \\
\midrule
\textbf{Function} & & \\
\midrule
$\mathcal{T}$ & transition dynamics conditioned on $\rvc_t$ & $\mathcal{T}(\rvs_t \mid \rvs_{t-1}, \rva_{t-1}, \rvc_t)$ \\
$\mathcal{R}$ & reward function conditioned on state, action, and context & $\mathcal{R}(\rvs_t, \rva_t, \rvc_t)$ \\
$\pi^E$ & expert policy used for generating demonstrations & $\pi^E(\rvs_t, \rvc_t)$ \\
$q_\psi$ & variational posterior for latent inference & $q_\psi(\rvc_t \mid \rvx_{t-T_x:t+1})$ \\
$p_\phi$ & latent prior distribution & $p_\phi(\rvc_t \mid \rvc_{t-1})$ \\
$p_\theta$ & generative model for transitions & $p_\theta(\rvx_t \mid \rvx_{t-1}, \rvc_t)$ \\
$\epsilon_\theta$ & denoising network in diffusion process & $\epsilon_\theta(\cdot)$ \\
\midrule
\textbf{Symbol} & & \\
\midrule
$\eta_t, \epsilon_t, \delta_t$ & exogenous noise in latent dynamics, state transitions, and reward & i.i.d. samples from noise distributions \\
$L_{a \mid b}$ & distribution operator from $b$ to $a$ & defined in \citet{dunford1988linear} \\
$k(\cdot)$ & ratio of joint probabilities used in uniqueness assumption & defined in Eq.~\ref{eq:k} \\
$\bar{\alpha}_t$ & cumulative noise schedule in diffusion & product of forward noise factors \\
$K$ & maximum number of diffusion steps & $K \in \mathbb{N}$ \\
$T_p, T_a$ & planning and action generation horizons & $T_p, T_a \in \mathbb{N}$ \\
$T_x$ & temporal block size for latent inference & $T_x \in \mathbb{N}$ \\
\bottomrule
\end{tabular}
}
\caption{List of notations, explanations, and corresponding definitions.}
\label{tab:notation}
\end{table}

Also, we formally define the operators used in the following.
\begin{definition}[\textbf{Linear Operator}~\citep{dunford1988linear}] \label{def:linear-op}
Let \(\rva\) and \(\rvb\) be random variables with supports \(\mathcal{A}\) and \(\mathcal{B}\), respectively. The linear operator \(L_{\rvb|\rva}\) is defined as a mapping from a probability function \(p_{\rva} \in \mathcal{F}(\mathcal{A})\) to a probability function \(p_{\rvb} \in \mathcal{F}(\mathcal{B})\), given by
\begin{equation}
    \small
    \mathcal{F}(\mathcal{A}) \rightarrow \mathcal{F}(\mathcal{B}): \quad p_{\rvb} = L_{\rvb|\rva} \circ p_{\rva} = \int_{\mathcal{A}} p_{\rvb|\rva}(\cdot \mid \rva) \, p_{\rva}(\rva) \, d\rva.
\end{equation}
\end{definition}
Intuitively, this operator characterizes the transformation of probability distributions induced by the conditional distribution \(p_{\rvb|\rva}\). It provides a general representation of distributional change from \(\rva\) to \(\rvb\), without imposing any parametric assumptions on the underlying distributions.

\begin{definition}[\textbf{Diagonal Operator}] \label{def:diag-op}
Let \(\mathbf{a}\) and \(\mathbf{b}\) be random variables with associated density functions \(p_{\mathbf{a}}\) and \(p_{\mathbf{b}}\) defined on supports \(\mathcal{A}\) and \(\mathcal{B}\), respectively. For a fixed value \(\mathbf{b} \in \mathcal{B}\), the diagonal operator \(D_{\mathbf{b} \mid \mathbf{a}}\) is defined as a linear operator that maps a density function \(p_{\mathbf{a}} \in \mathcal{F}(\mathcal{A})\) to a function in \(\mathcal{F}(\mathcal{A})\) via pointwise multiplication:
\begin{equation}
    D_{\mathbf{b} \mid \mathbf{a}} \circ p_{\mathbf{a}} = p_{\mathbf{b} \mid \mathbf{a}}(\mathbf{b} \mid \cdot) \cdot p_{\mathbf{a}},
\end{equation}
where \(D_{\mathbf{b} \mid \mathbf{a}} = p_{\mathbf{b} \mid \mathbf{a}}(\mathbf{b} \mid \cdot)\) acts as a multiplication operator indexed by \(\mathbf{b}\).
\end{definition}

\subsection{Proof of Theorem~\ref{thm:identifiability}}
\label{proof_identi}
\begin{proof}
By the definition of data generation process (Fig.~\ref{scm}), the observed density is represented by:
\begin{equation}
\begin{split}
& p_{\rvx_{t+1}, \rvx_t, \rvx_{t-1}, \rvx_{t-2}} \notag\\
=& \int_{\mathcal{C}_t} \int_{\mathcal{C}_{t-1}} p_{\rvx_{t+1}, \rvx_t, \rvc_t, \rvc_{t-1}, \rvx_{t-1}, \rvx_{t-2}} \, d\rvc_t d\rvc_{t-1} \notag \\
=& \int_{\mathcal{C}_t} \int_{\mathcal{C}_{t-1}} p_{\rvx_{t+1} \mid \rvx_t, \rvx_{t-1}, \rvx_{t-2}, \rvc_t, \rvc_{t-1}} p_{\rvx_t, \rvc_t \mid \rvx_{t-1}, \rvx_{t-2}, \rvc_{t-1}} p_{\rvc_{t-1} , \rvx_{t-1}, \rvx_{t-2}} \, d\rvc_t d\rvc_{t-1} \notag \\ 
=& \int_{\mathcal{C}_t} \int_{\mathcal{C}_{t-1}} p_{\rvx_{t+1} \mid \rvx_t, \rvc_t} p_{\rvx_t, \rvc_t \mid \rvx_{t-1}, \rvc_{t-1}} p_{\rvc_{t-1}, \rvx_{t-1}, \rvx_{t-2}} \, d\rvc_t d\rvc_{t-1} \notag \\
=& \int_{\mathcal{C}_t} \int_{\mathcal{C}_{t-1}} p_{\rvx_{t+1} \mid \rvx_t, \rvc_t} p_{\rvx_t \mid \rvx_{t-1}, \rvc_t, \rvc_{t-1}} p_{\rvc_t \mid \rvx_{t-1}, \rvx_{t-2}, \rvc_{t-1}} p_{\rvx_{t-1}, \rvx_{t-2}, \rvc_{t-1}} \, d\rvc_t d\rvc_{t-1}. \\ 
=& \int_{\mathcal{C}_t} \int_{\mathcal{C}_{t-1}} p_{\rvx_{t+1} \mid \rvx_t, \rvc_t} p_{\rvx_t \mid \rvx_{t-1}, \rvc_t, \rvc_{t-1}} p_{\rvc_t, \rvx_{t-1}, \rvx_{t-2}, \rvc_{t-1}} d\rvc_t d\rvc_{t-1}.  \label{eq:joint-density}
\end{split}
\end{equation}
Then, the property of Markov process presents conditional independence, organized as follows:
\begin{align}
p_{\rvx_{t+1}, \rvx_t, \rvx_{t-1}, \rvx_{t-2}} 
&= \int_{\mathcal{C}_t} p_{\rvx_{t+1} \mid \rvx_t, \rvc_t} p_{\rvx_t \mid \rvx_{t-1}, \rvc_t} \left( \int_{\mathcal{C}_{t-1}} p_{\rvc_t, \rvc_{t-1}, \rvx_{t-1}, \rvx_{t-2}} \, d\rvc_{t-1} \right) d\rvc_t \notag \\
&= \int_{\mathcal{C}_t} p_{\rvx_{t+1} \mid \rvx_t, \rvc_t} p_{\rvx_t \mid \rvx_{t-1}, \rvc_t} p_{\rvc_t, \rvx_{t-1}, \rvx_{t-2}} \, d\rvc_t. \label{eq:factorized}
\end{align}
Eq.~\ref{eq:factorized} can be denoted in terms of operators: given values of $(\rvx_t, \rvx_{t-1}) \in \mathcal{X}_t \times \mathcal{X}_{t-1}$, Eq.~\ref{eq:factorized} is
\begin{equation}  \label{equ:lemma1}
L_{\rvx_{t+1}, \rvx_t, \rvx_{t-1}, \rvx_{t-2}} 
= L_{\rvx_{t+1} \mid \rvx_t, \rvc_t} D_{\rvx_t \mid \rvx_{t-1}, \rvc_t} L_{\rvc_t, \rvx_{t-1}, \rvx_{t-2}}.
\end{equation}

Notably, Eq.~\ref{equ:lemma1} is the operator representation of the observed density function in 4 measurements. 

Furthermore, the structure of Markov process implies the following two equalities:
\begin{align}
p_{\rvx_{t+1}, \rvx_t, \rvx_{t-1}, \rvx_{t-2}} 
&= \int_{\mathcal{C}_t} p_{\rvx_{t+1} \mid \rvx_t, \rvc_t} p_{\rvx_t, \rvc_t, \rvx_{t-1}, \rvx_{t-2}} \, d\rvc_t, \notag \\
p_{\rvx_t, \rvc_t, \rvx_{t-1}, \rvx_{t-2}} 
&= \int_{\mathcal{C}_{t-1}} p_{\rvx_t, \rvc_t \mid \rvx_{t-1}, \rvc_{t-1}} p_{\rvc_{t-1}, \rvx_{t-1}, \rvx_{t-2}} \, d\rvc_{t-1}. \label{eq:lemma2_density}
\end{align}

For any fixed $(\rvx_t, \rvx_{t-1}) \in \mathcal{X}_t \times \mathcal{X}_{t-1}$, we notate Eq.~\ref{eq:lemma2_density} in terms of operators as follows:
\begin{align}
L_{\rvx_{t+1}, \rvx_t, \rvx_{t-1}, \rvx_{t-2}} 
&= L_{\rvx_{t+1} \mid \rvx_t, \rvc_t} L_{\rvx_t, \rvc_t, \rvx_{t-1}, \rvx_{t-2}}, \notag \\
L_{\rvx_t, \rvc_t, \rvx_{t-1}, \rvx_{t-2}} 
&= L_{\rvx_t, \rvc_t \mid \rvx_{t-1}, \rvc_{t-1}} L_{\rvc_{t-1}, \rvx_{t-1}, \rvx_{t-2}}. \label{eq:lemma2_operators}
\end{align}

Substituting the second line in Eq.~\ref{eq:lemma2_operators} into R.H.S. of the first equation, we obtain
\begin{align}
L_{\rvx_{t+1}, \rvx_t, \rvx_{t-1}, \rvx_{t-2}} 
&= L_{\rvx_{t+1} \mid \rvx_t, \rvc_t} L_{\rvx_t, \rvc_t \mid \rvx_{t-1}, \rvc_{t-1}} L_{\rvc_{t-1}, \rvx_{t-1}, \rvx_{t-2}} \notag \\
\Leftrightarrow L_{\rvx_t, \rvc_t \mid \rvx_{t-1}, \rvc_{t-1}} L_{\rvc_{t-1}, \rvx_{t-1}, \rvx_{t-2}} 
&= L_{\rvx_{t+1} \mid \rvx_t, \rvc_t}^{-1} L_{\rvx_{t+1}, \rvx_t, \rvx_{t-1}, \rvx_{t-2}}. \label{eq:lemma2_step1}
\end{align}

The second line above uses Assumption~\ref{asp:inj} that $L_{\rvx_{t+1} \mid \rvx_t, \rvc_t}^{-1}$ is injective. Next, we show how to eliminate $L_{\rvc_{t-1}, \rvx_{t-1}, \rvx_{t-2}}$ from the above. Consider 3 measurements $\{ \rvx_t, \rvx_{t-1}, \rvx_{t-2} \}$, we have
\begin{align}
p_{\rvx_t, \rvx_{t-1}, \rvx_{t-2}} 
= \int_{\mathcal{C}_{t-1}} p_{\rvx_t \mid \rvx_{t-1}, \rvc_{t-1}} p_{\rvc_{t-1}, \rvx_{t-1}, \rvx_{t-2}} \, d\rvc_{t-1}, \label{eq:lemma2_fvt}
\end{align}
which, in operator notation (for fixed $\rvx_{t-1}$), is denoted as
\begin{align}
 L_{\rvx_t, \rvx_{t-1}, \rvx_{t-2}} 
&= L_{\rvx_t \mid \rvx_{t-1}, \rvc_{t-1}} L_{\rvc_{t-1}, \rvx_{t-1}, \rvx_{t-2}}, \notag \\
\Rightarrow \ \ \  L_{\rvc_{t-1}, \rvx_{t-1}, \rvx_{t-2}} 
&= L_{\rvx_t \mid \rvx_{t-1}, \rvc_{t-1}}^{-1} L_{\rvx_t, \rvx_{t-1}, \rvx_{t-2}}. \label{eq:lemma2_lx}
\end{align}

The R.H.S. applies Assumption~\ref{asp:inj}. Hence, substituting the above into Eq.~\ref{eq:lemma2_step1}, we obtain:
\begin{align}
&& L_{\rvx_t, \rvc_t \mid \rvx_{t-1}, \rvc_{t-1}} L_{\rvx_t \mid \rvx_{t-1}, \rvc_{t-1}}^{-1} L_{\rvx_t, \rvx_{t-1}, \rvx_{t-2}} 
&= L_{\rvx_{t+1} \mid \rvx_t, \rvc_t}^{-1} L_{\rvx_{t+1}, \rvx_t, \rvx_{t-1}, \rvx_{t-2}} \notag \\
\Rightarrow && L_{\rvx_t, \rvc_t \mid \rvx_{t-1}, \rvc_{t-1}}
= L_{\rvx_{t+1} \mid \rvx_t, \rvc_t}^{-1} L_{\rvx_{t+1}, \rvx_t, \rvx_{t-1}, \rvx_{t-2}} & L_{\rvx_t, \rvx_{t-1}, \rvx_{t-2}}^{-1} L_{\rvx_t, \rvx_{t-1}, \rvc_{t-1}}. \label{eq:lemma2_final}
\end{align}

The second line applies Assumption~\ref{asp:inj} to post-multiply by $L_{\rvx_t, \rvx_{t-1}, \rvx_{t-2}}^{-1}$, while in the third line, we postmultiply both sides by $L_{\rvx_t \mid \rvx_{t-1}, \rvc_{t-1}}$. 

For all $\rvx_t$, choose a $\rvx_{t-1}$ and a neighborhood $\mathcal{N}^r$ around $(\rvx_t, \rvx_{t-1})$ to satisfy Assumption~\ref{asp:inj}, and pick a $(\bar{\rvx}_t, \bar{\rvx}_{t-1})$ within the neighborhood $\mathcal{N}^r$. Because $(\bar{\rvx}_t, \bar{\rvx}_{t-1}) \in \mathcal{N}^r$, we also know that $(\rvx_t, \bar{\rvx}_{t-1})$, $(\bar{\rvx}_t, \rvx_{t-1}) \in \mathcal{N}^r$. The joint distribution of of observations can be represented by Eq.~\ref{equ:lemma1}:
\begin{equation}
L_{\rvx_{t+1}, \rvx_t, \rvx_{t-1}, \rvx_{t-2}} = L_{\rvx_{t+1} \mid \rvx_t, \rvc_t} D_{\rvx_t \mid \rvx_{t-1}, \rvc_t} L_{\rvc_t, \rvx_{t-1}, \rvx_{t-2}}.
\end{equation}
The first term on the R.H.S., $L_{\rvx_{t+1} \mid \rvx_t, \rvc_t}$, does not depend on $\rvx_{t-1}$, and the last term $L_{\rvc_t, \rvx_{t-1}, \rvx_{t-2}}$ does not depend on $\rvx_t$. This feature suggests that, by evaluating Eq.~\ref{equ:lemma1} at the four pairs of points $(\rvx_t, \rvx_{t-1})$, $(\bar{\rvx}_t, \rvx_{t-1})$, $(\rvx_t, \bar{\rvx}_{t-1})$, $(\bar{\rvx}_t, \bar{\rvx}_{t-1})$, each pair of equations will share the same operator representation in common. Specifically:
\begin{align}
L_{\rvx_{t+1}, \rvx_t, \rvx_{t-1}, \rvx_{t-2}} &= L_{\rvx_{t+1} \mid \rvx_t, \rvc_t} D_{\rvx_t \mid \rvx_{t-1}, \rvc_t} L_{\rvc_t, \rvx_{t-1}, \rvx_{t-2}}, \label{eq:36} \\
L_{\rvx_{t+1}, \bar{\rvx}_t, \rvx_{t-1}, \rvx_{t-2}} &= L_{\rvx_{t+1} \mid \bar{\rvx}_t, \rvc_t} D_{\bar{\rvx}_t \mid \rvx_{t-1}, \rvc_t} L_{\rvc_t, \rvx_{t-1}, \rvx_{t-2}}, \label{eq:37} \\
L_{\rvx_{t+1}, \rvx_t, \bar{\rvx}_{t-1}, \rvx_{t-2}} &= L_{\rvx_{t+1} \mid \rvx_t, \rvc_t} D_{\rvx_t \mid \bar{\rvx}_{t-1}, \rvc_t} L_{\rvc_t, \bar{\rvx}_{t-1}, \rvx_{t-2}}, \label{eq:38} \\
L_{\rvx_{t+1}, \bar{\rvx}_t, \bar{\rvx}_{t-1}, \rvx_{t-2}} &= L_{\rvx_{t+1} \mid \bar{\rvx}_t, \rvc_t} D_{\bar{\rvx}_t \mid \bar{\rvx}_{t-1}, \rvc_t} L_{\rvc_t, \bar{\rvx}_{t-1}, \rvx_{t-2}}. \label{eq:39}
\end{align}

Assumption~\ref{asp:inj} implies that $L_{\rvx_{t+1} \mid \bar{\rvx}_t, \rvc_t}$ is injective. Moreover, Assumption~\ref{asp:spec-dec} implies $p_{\rvx_t \mid \rvx_{t-1}, \rvc_t}(\rvx_t \mid \rvx_{t-1}, \rvc_t) > 0$ for all $\rvc_t$, so that $D_{\bar{\rvx}_t \mid \rvx_{t-1}, \rvc_t}$ is invertible. We can then solve for $L_{\rvc_t, \rvx_{t-1}, \rvx_{t-2}}$ from Eq.~\ref{eq:37} as
\begin{equation}
D_{\bar{\rvx}_t \mid \rvx_{t-1}, \rvc_t}^{-1} L_{\rvx_{t+1} \mid \bar{\rvx}_t, \rvc_t}^{-1} L_{\rvx_{t+1}, \bar{\rvx}_t, \rvx_{t-1}, \rvx_{t-2}} = L_{\rvc_t, \rvx_{t-1}, \rvx_{t-2}}. \label{eq:40}
\end{equation}

Plugging this expression into Eq.~\ref{eq:36} leads to
\begin{align}
L_{\rvx_{t+1}, \rvx_t, \rvx_{t-1}, \rvx_{t-2}} 
&= L_{\rvx_{t+1} \mid \rvx_t, \rvc_t} D_{\rvx_t \mid \rvx_{t-1}, \rvc_t} D_{\bar{\rvx}_t \mid \rvx_{t-1}, \rvc_t}^{-1} L_{\rvx_{t+1} \mid \bar{\rvx}_t, \rvc_t}^{-1} L_{\rvx_{t+1}, \bar{\rvx}_t, \rvx_{t-1}, \rvx_{t-2}}. \label{eq:41}
\end{align}

At this point, we have decomposed the observable joint operator and expressed it in terms of latent-conditioned transitions, enabling spectral analysis for identifying latent structure.

Lemma 1 of \citep{hu2008instrumental} shows that, given the injectivity of $L_{\rvx_{t-2}, \bar{\rvx}_{t-1}, \rvx_t, \rvx_{t+1}}$ as in Assumption~\ref{asp:inj}, we can postmultiply by $L_{\rvx_{t+1}, \rvx_t, \rvx_{t-1}, \rvx_{t-2}}^{-1}$ to obtain:
\begin{equation}
\mathbf{M} \equiv L_{\rvx_{t+1}, \rvx_t, \rvx_{t-1}, \rvx_{t-2}} L_{\rvx_{t+1}, \rvx_t, \rvx_{t-1}, \rvx_{t-2}}^{-1} = L_{\rvx_{t+1} \mid \rvx_t, \rvc_t} D_{\rvx_t \mid \rvx_{t-1}, \rvc_t} D_{\bar{\rvx}_t \mid \rvx_{t-1}, \rvc_t}^{-1} L_{\rvx_{t+1} \mid \bar{\rvx}_t, \rvc_t}^{-1}. \label{eq:42}
\end{equation}

Similarly, manipulations of Eq.~\ref{eq:38} and \ref{eq:39} lead to
\begin{equation}
\mathbf{N} \equiv L_{\rvx_{t+1}, \bar{\rvx}_t, \rvx_{t-1}, \rvx_{t-2}} L_{\rvx_{t+1}, \rvx_t, \bar{\rvx}_{t-1}, \rvx_{t-2}}^{-1} = L_{\rvx_{t+1} \mid \bar{\rvx}_t, \rvc_t} D_{\bar{\rvx}_t \mid \bar{\rvx}_{t-1}, \rvc_t} D_{\rvx_t \mid \bar{\rvx}_{t-1}, \rvc_t}^{-1} L_{\rvx_{t+1} \mid \rvx_t, \rvc_t}^{-1}. \label{eq:43}
\end{equation}

Assumption~\ref{asp:inj} guarantees that, for any $\rvx_t$, $(\bar{\rvx}_t, \rvx_{t-1}, \bar{\rvx}_{t-1})$ exist so that Eq.~\ref{eq:42} and Eq.~\ref{eq:43} are valid operations. Finally, we postmultiply Eq.~\ref{eq:42} by Eq.~\ref{eq:43} to obtain:
\begin{align}
\mathbf{MN} 
&= L_{\rvx_{t+1} \mid \rvx_t, \rvc_t} D_{\rvx_t \mid \rvx_{t-1}, \rvc_t} D_{\bar{\rvx}_t \mid \rvx_{t-1}, \rvc_t}^{-1} \left( L_{\rvx_{t+1} \mid \bar{\rvx}_t, \rvc_t} L_{\rvx_{t+1} \mid \bar{\rvx}_t, \rvc_t} \right) \times D_{\bar{\rvx}_t \mid \bar{\rvx}_{t-1}, \rvc_t} D_{\rvx_t \mid \bar{\rvx}_{t-1}, \rvc_t}^{-1} L_{\rvx_{t+1} \mid \rvx_t, \rvc_t}^{-1} \notag \\
&= L_{\rvx_{t+1} \mid \rvx_t, \rvc_t} \left( D_{\rvx_t \mid \rvx_{t-1}, \rvc_t} D_{\bar{\rvx}_t \mid \rvx_{t-1}, \rvc_t}^{-1} D_{\bar{\rvx}_t \mid \bar{\rvx}_{t-1}, \rvc_t} D_{\rvx_t \mid \bar{\rvx}_{t-1}, \rvc_t}^{-1} \right) L_{\rvx_{t+1} \mid \rvx_t, \rvc_t}^{-1} \notag \\
&\equiv L_{\rvx_{t+1} \mid \rvx_t, \rvc_t} D_{\rvx_t, \bar{\rvx}_t, \rvx_{t-1}, \bar{\rvx}_{t-1}, \rvc_t} L_{\rvx_{t+1} \mid \rvx_t, \rvc_t}^{-1}, 
\end{align}
where
\begin{align}
\left( D_{\rvx_t, \bar{\rvx}_t, \rvx_{t-1}, \bar{\rvx}_{t-1}, \rvc_t} h \right)(\rvc_t) 
&= \left( D_{\rvx_t \mid \rvx_{t-1}, \rvc_t} D_{\bar{\rvx}_t \mid \rvx_{t-1}, \rvc_t}^{-1} D_{\bar{\rvx}_t \mid \bar{\rvx}_{t-1}, \rvc_t} D_{\rvx_t \mid \bar{\rvx}_{t-1}, \rvc_t}^{-1} h \right)(\rvc_t) \notag \\
&= \frac{p_{\rvx_t \mid \rvx_{t-1}, \rvc_t}(\rvx_t \mid \rvx_{t-1}, \rvc_t) p_{\rvx_t \mid \rvx_{t-1}, \rvc_t}(\bar{\rvx}_t \mid \bar{\rvx}_{t-1}, \rvc_t)}{p_{\rvx_t \mid \rvx_{t-1}, \rvc_t}(\bar{\rvx}_t \mid \rvx_{t-1}, \rvc_t) p_{\rvx_t \mid \rvx_{t-1}, \rvc_t}(\rvx_t \mid \bar{\rvx}_{t-1}, \rvc_t)} h(\rvc_t) \notag \\
&\equiv k(\rvx_t, \bar{\rvx}_t, \rvx_{t-1}, \bar{\rvx}_{t-1}, \rvc_t) h(\rvc_t). \label{eq:45}
\end{align}

This equation implies that the observed operator $\mathbf{MN}$ on the L.H.S. of Eq.~\ref{eq:44} has an inherent eigenvalue–eigenfunction decomposition, with the eigenvalues corresponding to the function $k(\rvx_t, \bar{\rvx}_t, \rvx_{t-1}, \bar{\rvx}_{t-1}, \rvc_t)$ and the eigenfunctions corresponding to the density $p_{\rvx_{t+1} \mid \rvx_t, \rvc_t}(\cdot \mid \rvx_t, \rvc_t)$.

The decomposition in Eq.~\ref{eq:44} is similar to the decomposition in nonparametric identification~\citep{hu2008instrumental,carroll2010identification}. First, Assumption~\ref{asp:spec-dec} ensures this decomposition is unique. Second, the operator $\mathbf{MN}$ on the L.H.S. has the same spectrum as the diagonal operator $D_{\rvx_t, \bar{\rvx}_t, \rvx_{t-1}, \bar{\rvx}_{t-1}, \rvc_t}$. Assumption~\ref{asp:spec-dec} guarantees that the spectrum of the diagonal operator is bounded. Since an operator is bounded by the largest element of its spectrum, Assumption~\ref{asp:spec-dec} also implies that the operator $\mathbf{MN}$ is bounded, whence we can apply Theorem XV.4.3.5 from \citep{dunford1988linear} to show the uniqueness of the spectral decomposition of bounded linear operators:
\begin{equation}\label{eq:44}
    L_{\rvx_{t+1} \mid \rvx_t, \rvc_t} = C L_{\rvx_{t+1} \mid \rvx_t, \rvc_t} P^{-1}. \quad
    D_{\rvx_t, \bar{\rvx}_t, \rvx_{t-1}, \bar{\rvx}_{t-1}, \rvc_t} = P D_{\rvx_t, \bar{\rvx}_t, \rvx_{t-1}, \bar{\rvx}_{t-1}, \rvc_t} P^{-1}
\end{equation}
where $C$ is a scalar accounting for scaling indeterminacy and $P$ is a permutation on the order of elements in $D_{\hat{\rvx}_t|\hat{\rvc}_t}$, as discussed in~\citep{dunford1988linear}. These forms of indeterminacy are analogous to those in eigendecomposition, which can be viewed as a finite-dimensional special case. 

We will show why the uniqueness of spectral decomposition is informative for identifications. First,  
\begin{equation}
    \int_{\hat{\mathcal{X}}_{t+1}} p_{\hat{\rvx}_{t+1} | \hat{\rvx}_t, \hat{\rvc}_t} \, d\hat{\rvx}_{t+1} = 1
\end{equation}
must hold for every $\hat{\rvc}_t$ due to normalizing condition, one only solution is to set $C=1$.

Second, Assumption~\ref{asp:spec-dec} implies that Eq.~\ref{eq:44} imply that the eigenvalues $k(\rvx_t, \bar{\rvx}_t, \rvx_{t-1}, \bar{\rvx}_{t-1}, \rvc_t)$ are distinct for different values $\rvc_t$. If several $\rvc_t$ yield identical eigenvalues, the associated eigenfunctions cannot be uniquely identified, as any linear combination of them remains valid.
Therefore, for each $\rvx_t$, one can choose $\bar{\rvx}t, \rvx{t-1}, \bar{\rvx}_{t-1}$ such that the eigenvalues differ for all $\rvc_t$.



Ultimately, the unorder of eigenvalues/eigenfunctions is left. The operator, \( L_{\rvx_{t+1} \mid \rvx_t, \rvc_t } \), corresponding to the set $\{ p_{\rvx_{t+1} \mid \rvx_t, \rvc_t }(\cdot \mid \rvx_t, \rvc_t ) \}$ for all $\rvx_t, \rvc_t $, admits a unique solution (orderibng ambiguity of eigendecomposition only changes the entry position):
\begin{equation}
\{ p_{\rvx_{t+1} \mid \rvx_t, \rvc_t }(\cdot \mid \rvx_t, \rvc_t ) \} = \{ p_{\rvx_{t+1} \mid \hat{\rvx}_t, \hat{\rvc}_t }(\rvx_{t+1} \mid \hat{\rvx}_t, \hat{\rvc}_t ) \}, \quad \text{ for all } \rvx_t, \rvc_t , \hat{\rvx}_t, \hat{\rvc}_t      
\end{equation}
Due to the set is unorder, the only way to match the R.H.S. with the L.H.S. in a consistent order is to exchange the conditioning variables, that is, 
\begin{equation}
\label{eq:lik-match}
\begin{aligned}
    &\{ p_{\rvx_{t+1} \mid \rvx_t, \rvc_t }(\cdot \mid \rvx_t^{(1)}, \rvc_t ^{(1)}),\,
    p_{\rvx_{t+1} \mid \rvx_t, \rvc_t }(\cdot \mid \rvx_t^{(2)}, \rvc_t ^{(2)}),\, \ldots \} \\
    &= \{ p_{\rvx_{t+1} \mid \hat{\rvx}_t, \hat{\rvc}_t }(\cdot \mid \hat{\rvx}_t^{(1)}, \hat{\rvc}_t ^{(1)}),\,
    p_{\rvx_{t+1} \mid \hat{\rvx}_t, \hat{\rvc}_t }(\cdot \mid \hat{\rvx}_t^{(2)}, \hat{\rvc}_t ^{(2)}),\, \ldots \} 
\end{aligned}
\end{equation}

\begin{equation*}
\label{eq:per-sam}
\begin{aligned}
    \Rightarrow\ && 
    &[ p_{\rvx_{t+1} \mid \rvx_t, \rvc_t }(\cdot \mid \rvx_t^{(\pi(1))}, \rvc_t ^{(\pi(1))}),\,
    p_{\rvx_{t+1} \mid \rvx_t, \rvc_t }(\cdot \mid \rvx_t^{(\pi(2))}, \rvc_t ^{(\pi(2))}),\, \ldots ] \\
    && &= [ p_{\rvx_{t+1} \mid \hat{\rvx}_t, \hat{\rvc}_t }(\cdot \mid \hat{\rvx}_t^{(\pi(1))}, \hat{\rvc}_t ^{(\pi(1))}),\,
    p_{\rvx_{t+1} \mid \hat{\rvx}_t, \hat{\rvc}_t }(\cdot \mid \hat{\rvx}_t^{(\pi(2))}, \hat{\rvc}_t ^{(\pi(2))}),\, \ldots ]
\end{aligned}
\end{equation*}
where superscript $(\cdot)$ denotes the index of the conditioning variables $[\rvx_t, \rvc_t]$, and $\pi$ is reindexing the conditioning variables. We use a relabeling map \(H\) to represent its corresponding value mapping:
\begin{equation}
    p_{\rvx_{t+1} \mid \rvx_t, \rvc_t }(\cdot \mid H(\rvx_t, \rvc_t )) = p_{\rvx_{t+1} \mid \hat{\rvx}_t, \hat{\rvc}_t }(\cdot \mid \hat{\rvx}_t, \hat{\rvc}_t), \quad \text{ for all } \rvx_t, \rvc_t , \hat{\rvx}_t, \hat{\rvc}_t  \label{eq:supp-match}
\end{equation}
By Assumption~\ref{asp:spec-dec}, different \(\rvc_{t}\) corresponds to different \( p_{\rvx_{t+1} \mid \rvx_t, \rvc_t }(\cdot \mid H(\rvx_t, \rvc_t )) \), which indicates that there is no repeated element in $\{ p_{\rvx_{t+1} \mid \rvx_t, \rvc_t }(\cdot \mid H(\rvx_t, \rvc_t)) \}$ and $\{ p_{\rvx_{t+1} \mid \hat{\rvx}_t, \hat{\rvc}_t }(\cdot \mid \hat{\rvx}_t, \hat{\rvc}_t) \}$. Such uniqueness ensure that the relabelling map $H$ is one-to-one.

Furthermore, Assumption~\ref{asp:spec-dec} implies that \(p_{\rvx_{t+1},  \mid \rvx_t, \rvc_t}(\cdot \mid H(\rvx_t, \rvc_t))\) corresponds a unique \( H(\rvx_t, \rvc_t) \). The same holds for the $p_{\rvx_{t+1} \mid \hat{\rvx}_t, \hat{\rvc}_t }(\cdot \mid \hat{\rvx}_t, \hat{\rvc}_t)$, implying that
\begin{equation}
    p_{\rvx_{t+1} \mid \rvx_t, \rvc_t }(\cdot \mid H(\rvx_t, \rvc_t )) = p_{\rvx_{t+1} \mid \hat{\rvx}_t, \hat{\rvc}_t }(\cdot \mid \hat{\rvx}_t, \hat{\rvc}_t ) \implies \hat{\rvx}_t, \hat{\rvc}_t  = H(\rvx_t, \rvc_t ) \label{eq:z=H(z)}
\end{equation}
Since the observation \(\rvx_t\) is known and suppose \(\hat{\rvx}_t = \rvx_t\), this relationship indeed represents an invertible transformation between \(\hat{\rvc}_t\) and \(\rvc_t\) as 
\begin{equation}
    \hat{\rvc}_t = h(\rvc_t).
\end{equation}
which ensures that $p(\rvc_t \mid \rvx_{t-2:t+1})$ can be identifiable up to an invertible transformation on the latent variables $\hat{\rvc}_t=h(\mathbf{c}_t)$
\end{proof}


\subsection{Theory-Algorithm Alignment}

Here, we provide a more detailed description of our theoretical foundations, model design guidance, and algorithmic implementation. This complements the high-level summary in the main paper and offers additional context about the technical depth behind our contributions.

\subsubsection{Non-Parametric Identifiability Theory}

We establish this non-parametric identifiability result that gives sufficient conditions for recovering latent contexts from reinforcement learning (RL) trajectories using short temporal blocks. Each block includes a small number of future steps, which allows the model to reason about both the immediate past and the near future.  
Formally, we prove that under mild and broadly applicable assumptions, the latent context $\rvc_t$ driving the generative process of the observed states and actions $(\rvx_t,\rva_t)$ can be recovered up to an equivalence class. This identifiability guarantee is important because it shows that latent-aware planning can be theoretically justified even when the environment contains unobserved factors or task-dependent variations.

\subsubsection{Model Design Guidance}
The theoretical result directly guides the design of our causal diffusion model.
To leverage Theorem~\ref{thm:identifiability}, the diffusion model must not only generate trajectories but also recover the true latent factors. Concretely, the model must:
\begin{enumerate}
    \item capture \textbf{temporal dependencies} across short blocks of states and actions;
    \item jointly model \textbf{observable and latent variables};
    \item enforce \textbf{conditions for identifiability}, ensuring that the latent $\rvc_t$ can be isolated from the observed sequence.
\end{enumerate}
These design requirements inform our noise schedule and the coupling of autoregressive denoising with latent refinement.

This provides guidance for the algorithm design:
\paragraph{Autoregressive Denoising.}
We model temporal dependencies over both observable and latent variables using an autoregressive diffusion process. At each step, $\rvx_t$ is denoised while conditioning on partially denoised past states and inferred latent variables from a short temporal block (Section~\ref{sec:augmented diffusion}). This schedule results in a structured temporal-latent modeling process that better preserves long-range dependencies.

\paragraph{Backward Refinement.}
To explicitly identify latent contexts whose posterior depends on future observations (e.g., $x_{t+1}$, guided by Theorem~1), we introduce a backward refinement step. At the second-to-last denoising stage, we refine a partial state $x_t^{k_1}$ using the initial estimate of $\hat{c}_t$ sampled from the prior and $x_{t+1}$ as additional evidence. The refined $\hat{c}_t$ is then used to produce the final denoised state $x_t^{0}$. During training, this backward refinement is enforced to satisfy the identifiability conditions. At inference time (zig-zag sampling), we substitute $x_{t+1}$ with a predicted estimate to maintain efficiency.

\paragraph{Unification.}
The autoregressive denoising and backward refinement are integrated into a single noise schedule, enabling joint modeling of temporal dependencies and latent variables. Our implementation follows a four-step refinement scheme but can be extended to more steps if needed. Notably, despite the additional refinement, the method remains computationally efficient (see Appendix~\ref{app:compute}). Further acceleration is possible via Picard iteration, which parallelizes refinement steps and reduces inference runtime by about $25\%$.

\subsubsection{Discussion on Assumptions}

\subsubsection{Relaxing Assumption~\ref{asp-1} (beyond first–order Markov) } \label{app:asp1}
We can relax the first–order Markov assumption to an \(n\)-order Markov structure with delayed/cumulative influences without altering the core identifiability argument. Suppose the generative process satisfies
\[
p(\rvx_{t+1}\mid \rvx_{1:t},\rva_{1:t},\rvc_{1:t}) \;=\; 
p\big(\rv(x_{t+1}\mid \rvx_{t:t-n+1},\rva_{t:t-n+1},\rvc_{t:t-n+1}\big),
\]
and that the conditioning sets across \emph{non overlapping} lags exhibit block–wise conditional independence (the same separation conditions used in Theorem~1). Then there exists a finite window of observations whose statistics identify the contemporaneous block \([\rvc_t,\rvx_t]\) up to an invertible reparameterization.

\paragraph{Concrete identification statement.}
Let \(W_t \!=\! \big(\rvx_{t-2n:t+2n}\big)\) denote a \(4n{+}1\)-length observation window.\footnote{Any window of length at least \(4n{+}1\) suffices; we state one concrete choice for clarity.} 
Assume:
(i) time direction is known (so \([\rvc_t,\rvx_t]\to[\rvc_{t+1}\rv,x_{t+1}]\) is oriented);
(ii) the variability (support) conditions from Theorem~1 hold for the \(n\)-lag blocks; and
(iii) block–wise independence across non–overlapping lags is satisfied.
Then there exists an invertible map \(H\) such that
\[
[\rvc_t,\rvx_t] \;=\; H\!\big(W_t\big),
\]
so \(\rvc_t\) (and \(\rvx_t\)) are identifiable up to an invertible transformation from a finite window of observations. 

\paragraph{Illustration for \(n{=}2\).}
When \(n{=}2\), block–wise separations allow identification of the joint variables
\(
[\rvc_t,\rvc_{t+1},\rvx_t,\rvx_{t+1}]
\)
from \(\rvx_{t-4:t+3}\) (length \(8{+}1\)). Knowing the temporal direction disambiguates \([\rvc_t,\rvx_t]\) from \([\rvc_{t+1},\rvx_{t+1}]\). Because \(\rvx_t\) is observed, we obtain \(\rvc_t=h(\rvx_{t-4:t+3})\) for some invertible \(h\), and thus the contemporaneous pair \([\rvc_t,\rvx_t]\) is identified.

\paragraph{Connection to delayed/cumulative rewards.}
Delayed and cumulative effects fit naturally in the \(n\)-order view. For a delay \(\ell\),
\[
\rvr_{t+\ell} \;=\; \rho\big(\rvx_{t+\ell},\rva_{t+\ell},\rvc_t\big)
\quad\text{(delayed effect),}
\]
while cumulative influence over a horizon \(L\) can be written as
\[
\rvr_{t+k} \;=\; \rho_k\big(\rvx_{t+k},\rva_{t+k},\rvc_t\big), \qquad k=0,\dots,L-1,
\]
both of which are encompassed by the \(n\)-order Markov factorization above. Our cheetah variants instantiate these with, e.g.,
\(
r_{t+\ell}=-\lVert \rvv_{t+\ell}-\rvc_t\rVert^2
\)
(delayed) and
\(
r_{t+k}=-\lVert \rvv_{t+k}-\rvc_t\rVert^2
\)
(cumulative), where \(\rvv_t\) denotes speed; the identification results remain valid.

\paragraph{Results.}
We evaluate identification under delayed and cumulative latent effects in the Cheetah environment using observation windows of length $6,8,10,$ and $20$. In all cases, linear probes recover the latent with high accuracy, and performance improves monotonically with longer context. For \emph{delayed} effects, probing accuracy rises from $0.81$ to $0.91$ and $R^2$ from $0.72$ to $0.86$ as block size increases from $6$ to $20$. For \emph{cumulative} effects, probing accuracy increases from $0.84$ to $0.93$ and $R^2$ from $0.75$ to $0.88$ over the same range. These results confirm that (i) the latent $c_t$ is behaviorally consequential in non–first-order settings and (ii) moderate temporal context suffices for accurate recovery, supporting our relaxed $n$-order Markov analysis.

\begin{table}[t]
\centering
\caption{Identification under delayed and cumulative latent effects. Larger is better.}
\label{tab:delayed_cumulative}
\setlength{\tabcolsep}{8pt}
\begin{tabular}{lcccc}
\toprule
\textbf{Latent Type} & \textbf{Block size} & \textbf{Probing Acc} & $\boldsymbol{R^2}$ \\
\midrule
Delayed    & 6  & 0.81 & 0.72 \\
Delayed    & 8  & 0.85 & 0.78 \\
Delayed    & 10 & 0.88 & 0.81 \\
Delayed    & 20 & 0.91 & 0.86 \\
\midrule
Cumulative & 6  & 0.84 & 0.75 \\
Cumulative & 8  & 0.87 & 0.79 \\
Cumulative & 10 & 0.89 & 0.83 \\
Cumulative & 20 & 0.93 & 0.88 \\
\bottomrule
\end{tabular}
\end{table}
\subsubsection{Cases in Assumption~\ref{asp:inj}.} \label{app:inj-example}
The assumption of the injectivity of a linear operator is commonly employed in the nonparametric identification~\citep{hu2008instrumental,carroll2010identification, hu2012nonparametric}. Intuitively, it means that different input distributions of a linear operator correspond to different output distributions of that operator. For a better understanding, we provide several examples in ~\citet{fu2025learning} that describe the mapping from $p_{a} \Rightarrow p_{b}$, where $a$ and $b$ are random variables:

\begin{example}[{Invertible}]
\label{equ:example_inverse}
    $b = g(a)$, where $g$ is an invertible function.
\end{example}

\begin{example}[{Additive}]
\label{equ:additive_transfromation}
    $b = a + \epsilon$, where $p(\epsilon)$ must not vanish everywhere after the Fourier transform. 
\end{example}

\begin{example}[{Nonlinear Additive}]
\label{equ:additive_transfromation2}
    $b = g(a) + \epsilon$, where conditions from \textbf{Examples} \ref{equ:example_inverse}-\ref{equ:additive_transfromation} are required.
\end{example}

\begin{example}
    [{Post-nonlinear}] $b = g_1(g_2(a) + \epsilon)$, a post-nonlinear model with invertible nonlinear functions $g_1, g_2$, combining the assumptions in \textbf{Examples \ref{equ:example_inverse}-\ref{equ:additive_transfromation2}}.
\end{example}

\begin{example}
    [{Nonlinear with Exponential Family}]
    $b = g(a, \epsilon)$, where the joint distribution $p(a, b)$ follows an exponential family.
\end{example}

\begin{example}
[{Nonparametric}]
    $b = g(a, \epsilon)$, a general nonlinear formulation. Certain deviations from the nonlinear additive model (\textbf{Example} \ref{equ:additive_transfromation2}), e.g., polynomial perturbations, can still be tractable.
\end{example}

\subsection{ELBO}

\newcommand{\valpha}{\bar{\alpha}} 
\newcommand{\Lprior}{\mathcal{L}_{\text{prior}}}
\newcommand{\Lpost}{\mathcal{L}_{\text{post}}}
\newcommand{\Lsimple}{\mathcal{L}_{\text{simple}}}

In this section, we provide analysis on the $\rvx^0$-prediction Mean Squared Error (MSE) loss objectives used in the Denoise-and-Refine Mechanism of \mcall{}. Our main argument establishes that minimizing the reconstruction losses $\Lprior$ and $\Lpost$ corresponds to optimizing an ELBO on the conditional log-likelihood of the clean observation $\rvx_t^0$, given a noisy observation $\rvx_t^k$ and an inferred latent context $\rvc_t$.

Let $\rvx_t^0 \sim q(\rvx_t^0)$ be a clean data sample from the true data distribution at sequence time step $t$. Let $\rvc_t$ be the inferred latent context relevant to $\rvx_t^0$.

The forward diffusion process gradually adds Gaussian noise to $\rvx_t^0$ over $K$ diffusion steps:
$$
q(\rvx_t^k | \rvx_t^{k-1}) = \mathcal{N}(\rvx_t^k; \sqrt{\alpha_k} \rvx_t^{k-1}, (1-\alpha_k) \mathbf{I})
$$
for $k \in \{1, ..., K\}$, where $\alpha_k \in (0,1)$ are predefined noise schedule parameters. This process allows sampling $\rvx_t^k$ directly from $\rvx_t^0$:
$$
\rvx_t^k = \sqrt{\valpha_k} \rvx_t^0 + \sqrt{1-\valpha_k} \boldsymbol{\epsilon}, \quad \text{where } \boldsymbol{\epsilon} \sim \mathcal{N}(0,\mathbf{I}) \text{, and } \valpha_k = \prod_{i=1}^k \alpha_i.
$$
The reverse process $p_{\theta}(\rvx_t^{k-1} | \rvx_t^k, \rvc_t)$ that parameterized by $\theta$ aims to denoise $\rvx_t^k$ to $\rvx_t^{k-1}$ conditioned on $\rvc_t$.

The derivation of the ELBO for diffusion models is standard following DDPM related derivations~\citep{ho2020denoising, chen2024diffusion}. The conditional log-likelihood $\log p_{\theta}(\rvx_t^0 | \rvc_t)$ can be lower-bounded using the ELBO:
$$
\log p_{\theta}(\rvx_t^0 | \rvc_t) \ge \mathbb{E}_{q(\rvx_t^{1:K}|\rvx_t^0)} \left[ \log p_{\theta}(\rvx_t^{K}|\rvc_t) + \sum_{k=1}^{K} \log \frac{p_{\theta}(\rvx_t^{k-1}|\rvx_t^k, \rvc_t)}{q(\rvx_t^{k-1}|\rvx_t^k, \rvx_t^0)} \right]
$$
Assuming $p_\theta$ satisfies Markov Property (i.e., $p_{\theta}(\rvx_t^{k-1} | \rvx_t^k, \dots, \rvx_t^K, \rvc_t) = p_{\theta}(\rvx_t^{k-1} | \rvx_t^k, \rvc_t)$), which is a standard structural assumption for diffusion models, the ELBO can be rewritten as:
\begin{align*}
\log p_{\theta}(\rvx_t^0 | \rvc_t) \ge & \underbrace{\mathbb{E}_{q(\rvx_t^1|\rvx_t^0)}[\log p_{\theta}(\rvx_t^0|\rvx_t^1, \rvc_t)]}_{L_0} \\
& - \sum_{k=2}^{K} \underbrace{\mathbb{E}_{q(\rvx_t^k|\rvx_t^0)}[D_{KL}(q(\rvx_t^{k-1}|\rvx_t^k, \rvx_t^0) || p_{\theta}(\rvx_t^{k-1}|\rvx_t^k, \rvc_t))]}_{L_{k-1}} \\
& - \underbrace{D_{KL}(q(\rvx_t^{K}|\rvx_t^0) || p_{\theta}(\rvx_t^{K}|\rvc_t))}_{L_{K}},
\end{align*}
This inequality holds with equality if and only if the model's true posterior over the latent diffusion path, $p_{\theta}(\rvx_t^{1:K}|\rvx_t^0, \rvc_t)$, is identical to the approximate posterior used to derive the ELBO, which is the forward noising process $q(\rvx_t^{1:K}|\rvx_t^0)$. This bound can also include an additive constant $C(\rvx_t^0, \rvc_t)$ which does not depend on the model parameters $\theta$ and is thus typically omitted when focusing on terms relevant to parameter optimization.

To maximize $\log p_{\theta}(\rvx_t^0 | \rvc_t)$, we aim to maximize this lower bound by optimizing $L_0$ (i.e., maximizing this term) and each $L_{k-1}$ term (i.e., minimizing these $D_{KL}$ terms, as they appear with a negative sign). The term $L_{K}$ is often treated as a constant (or absorbed into $C(\rvx_t^0, \rvc_t)$) if $p_{\theta}(\rvx_t^{K}|\rvc_t)$ is set to a standard Gaussian $\mathcal{N}(0, \mathbf{I})$ and $\valpha_{K} \approx 0$.

We parameterize the reverse process $p_{\theta}(\rvx_t^{k-1}|\rvx_t^k, \rvc_t)$ as a Gaussian:
$$
p_{\theta}(\rvx_t^{k-1}|\rvx_t^k, \rvc_t) = \mathcal{N}(\rvx_t^{k-1}; \boldsymbol{\mu}_{\theta}(\rvx_t^k, k, \rvc_t), \sigma_k^2 \mathbf{I})
$$
The true posterior step $q(\rvx_t^{k-1}|\rvx_t^k, \rvx_t^0)$ is also Gaussian:
$$
q(\rvx_t^{k-1}|\rvx_t^k, \rvx_t^0) = \mathcal{N}(\rvx_t^{k-1}; \tilde{\boldsymbol{\mu}}_k(\rvx_t^k, \rvx_t^0), \tilde{\sigma}_k^2 \mathbf{I})
$$
where $\tilde{\boldsymbol{\mu}}_k(\rvx_t^k, \rvx_t^0) = \frac{\sqrt{\valpha_{k-1}}(1-\alpha_k)}{1-\valpha_k}\rvx_t^0 + \frac{\sqrt{\alpha_k}(1-\valpha_{k-1})}{1-\valpha_k}\rvx_t^k$ and $\tilde{\sigma}_k^2 = \frac{1-\valpha_{k-1}}{1-\valpha_k}(1-\alpha_k)$ is the variance.

For an $\rvx^0$-prediction model, denoted as $\epsilon_\theta(\rvx_t^k, k, \rvc_t)$ in the main paper, that aims to predict $\rvx_t^0$ from the noisy input $\rvx_t^k$ and context $\rvc_t$, the mean of the reverse model $\boldsymbol{\mu}_{\theta}$ can be expressed as:
$$
\boldsymbol{\mu}_{\theta}(\rvx_t^k, k, \rvc_t) = \frac{\sqrt{\valpha_{k-1}}(1-\alpha_k)}{1-\valpha_k}\epsilon_\theta(\rvx_t^k, k, \rvc_t) + \frac{\sqrt{\alpha_k}(1-\valpha_{k-1})}{1-\valpha_k}\rvx_t^k
$$
Choosing $\sigma_k^2 = \tilde{\sigma}_k^2$, the KL divergence term $L_{k-1}$ simplifies to:
\begin{align*}
L_{k-1} &= \mathbb{E}_{q(\rvx_t^k|\rvx_t^0)} \left[ \frac{1}{2\sigma_k^2} \left\| \tilde{\boldsymbol{\mu}}_k(\rvx_t^k, \rvx_t^0) - \boldsymbol{\mu}_{\theta}(\rvx_t^k, k, \rvc_t) \right\|^2 \right] + C_k' \\
&= \mathbb{E}_{\rvx_t^0, \boldsymbol{\epsilon}} \left[ \frac{1}{2\sigma_k^2} \left( \frac{\sqrt{\valpha_{k-1}}(1-\alpha_k)}{1-\valpha_k} \right)^2 \left\| \rvx_t^0 - \epsilon_\theta(\sqrt{\valpha_k}\rvx_t^0 + \sqrt{1-\valpha_k}\boldsymbol{\epsilon}, k, \rvc_t) \right\|^2 \right] + C_k'
\end{align*}
where $C_k'$ are constants not depending on $\theta$. The expectation $\mathbb{E}_{\rvx_t^0, \boldsymbol{\epsilon}}$ denotes averaging over clean data $\rvx_t^0$ and the noise $\boldsymbol{\epsilon}$ used to construct $\rvx_t^k$.
Thus, maximizing the ELBO contribution from $-L_{k-1}$ is equivalent to minimizing the following weighted MSE term:
\begin{align}
\mathbb{E}_{\rvx_t^0, \boldsymbol{\epsilon}, \rvc_t} \left[ w(k) \left\| \rvx_t^0 - \epsilon_\theta(\sqrt{\valpha_k}\rvx_t^0 + \sqrt{1-\valpha_k}\boldsymbol{\epsilon}, k, \rvc_t) \right\|^2 \right]
\label{eq: MSE}
\end{align}

where $w(k) = \frac{1}{2\sigma_k^2} \left( \frac{\sqrt{\valpha_{k-1}}(1-\alpha_k)}{1-\valpha_k} \right)^2$ is a positive weighting factor.

The term $L_0 = \mathbb{E}_{q(\rvx_t^1|\rvx_t^0)}[\log p_{\theta}(\rvx_t^0|\rvx_t^1, \rvc_t)]$ can also be made proportional to an MSE if $p_{\theta}(\rvx_t^0|\rvx_t^1, \rvc_t)$ is a Gaussian centered at $\epsilon_\theta(\rvx_t^1, 1, \rvc_t)$:
$$
\log p_{\theta}(\rvx_t^0|\rvx_t^1, \rvc_t) = -\frac{1}{2\sigma_1^2} \left\| \rvx_t^0 - \epsilon_\theta(\rvx_t^1, 1, \rvc_t) \right\|^2 + \text{const}
$$
Maximizing $L_0$ is then equivalent to minimizing this MSE.

The diffusion model $\epsilon_\theta$ is typically trained by minimizing a simplified objective (e.g., ~\citep{ho2020denoising}), often an unweighted or equally weighted sum of these MSE terms over uniformly sampled diffusion steps $k \in [1, K]$ and data $\rvx_t^0$:
$$
\Lsimple(\theta) = \mathbb{E}_{k \sim U[1,K], \rvx_t^0, \boldsymbol{\epsilon}, \rvc_t} \left[ \left\| \rvx_t^0 - \epsilon_\theta(\sqrt{\valpha_k}\rvx_t^0 + \sqrt{1-\valpha_k}\boldsymbol{\epsilon}, k, \rvc_t) \right\|^2 \right]
$$
This simplification is justified by arguing that reweighting terms $w(k)$ in Equation~\ref{eq: MSE} can be absorbed into the network or do not significantly alter the optimal solution for expressive models, allowing $w(k)$ to be effectively set to 1.

The Denoise-and-Refine losses are:
$$
\Lprior = \mathbb{E}_{\rvx_t^0, \boldsymbol{\epsilon}, \hat{\rvc}_t^{\text{prior}}} \left[ \left\| \rvx_t^0 - \epsilon_\theta(\rvx_t^{k_{i}}, k_{i}, \hat{\rvc}_t^{\text{prior}}) \right\|^2 \right]
$$
$$
\Lpost = \mathbb{E}_{\rvx_t^0, \boldsymbol{\epsilon}, \hat{\rvc}_t^{\text{post}}} \left[ \left\| \rvx_t^0 - \epsilon_\theta(\rvx_t^{k_{i}}, k_{i}, \hat{\rvc}_t^{\text{post}}) \right\|^2 \right]
$$
where $\rvx_t^{k_{i}} = \sqrt{\valpha_{k_{i}}}\rvx_t^0 + \sqrt{1-\valpha_{k_{i}}}\boldsymbol{\epsilon}$, and $k_{i}$ is the specific input noise level for the observation $\rvx_t$ determined by the causal denoising
schedule $k_{i} = \frac{i}{T}K$. These losses, $\Lprior$ and $\Lpost$, are specific instances of the simplified MSE loss objective in \eqref{eq: MSE} with $w(k_i) \approx 1$, conditioned on the inferred contexts $\hat{\rvc}_t^{\text{prior}}$ and $\hat{\rvc}_t^{\text{post}}$ respectively. Consequently, minimizing these MSE losses directly optimizes the corresponding terms in the ELBO for $\log p_{\theta}(\rvx_t^0 | \rvc_t)$.

Therefore, we have proven that minimizing $\Lprior$ and $\Lpost$ as defined in the Denoise-and-Refine mechanism serves to maximize a variational lower bound on the conditional log-likelihood $\log p_{\theta}(\rvx_t^0 | \rvc_t)$. The underlying diffusion model $\epsilon_\theta(\cdot, k, \cdot)$ is trained to be proficient at denoising from a range of noise levels $k$, as captured by objectives such as $\Lsimple$. The specific monotonically increasing noise schedule $k_{i}$ used in $\Lprior$ and $\Lpost$ represents a particular instance from this range of noise levels. Thus, these objectives are theoretically grounded in the principles of variational inference for diffusion models, adapted to conditioning on the inferred latent context $\rvc_t$ and applied at specific noise levels relevant to the autoregressive denoising process of \mcall{}.

\subsection{Assumption Verification}
{Here, we test whether Assumption~\ref{asp:inj} and Assumption~\ref{asp:spec-dec} hold in practice, explain why we view them as mild, and, importantly, analyze what happens when they fail. We use the Cheetah environment, where the latent context corresponds to a time-varying wind speed $f_w = 5 + m \sin(nt)$ that perturbs the agent’s dynamics. We sweep over combinations of $(m,n)$ to address two questions: (1) whether the setting used in the paper,
$(m.n)=(5,0.5)$, indeed satisfies these assumptions; and (2) how violations of the assumptions affect our method and the associated analyses.}
\begin{figure*}[t]
\centering
\includegraphics[width=\linewidth]{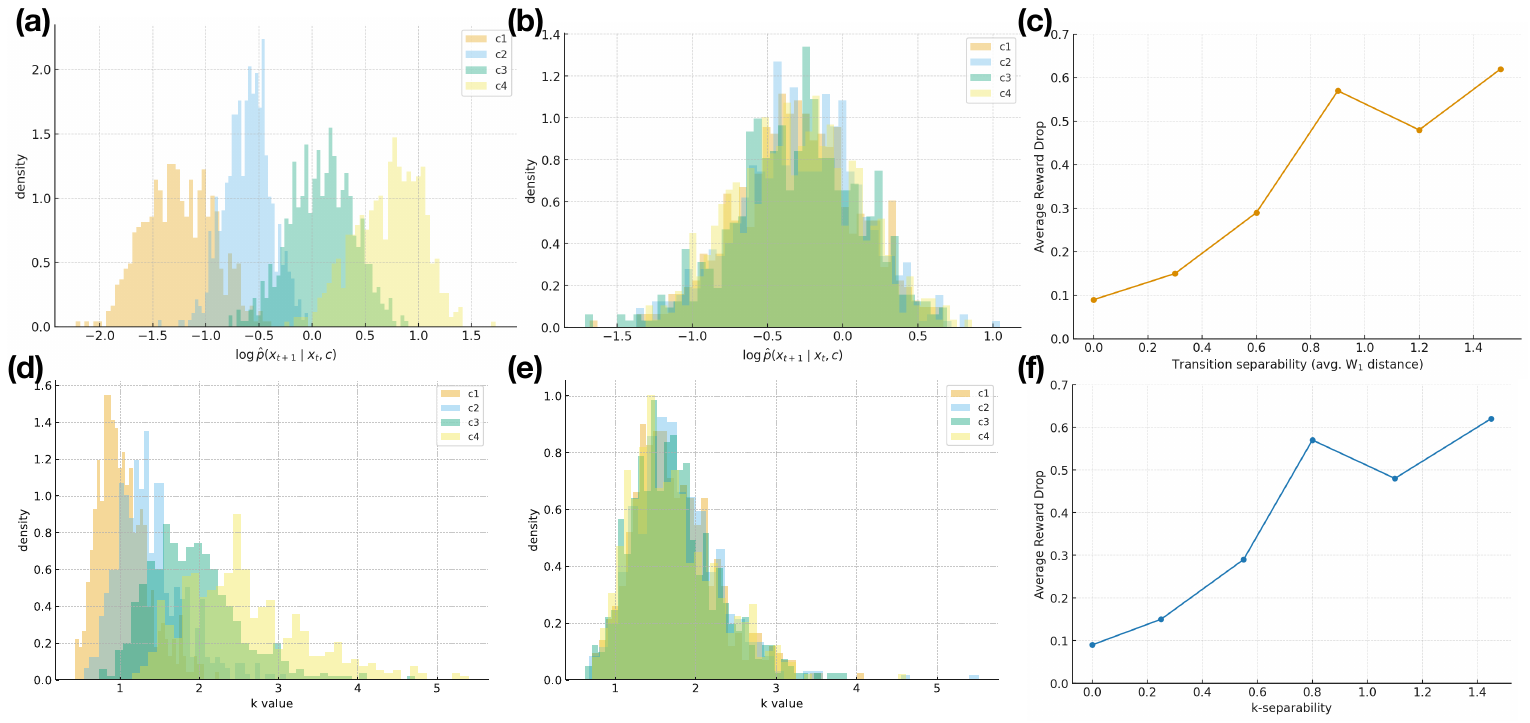}
\caption{
\textbf{Verification of the assumptions}. 
(a) Transition separability in Cheetah under the hyperparameter setting $(m,n)=(5,0.5)$. 
(b) Transition separability under a weak–context setting $(m,n)=(0.2,0.2)$, where the context barely affects the dynamics. 
(c) Average reward drop when planning \emph{with} vs.\ \emph{without} conditioning on $c$, plotted against the transition separability. 
(d) $k$--distributions for $(5,0.5)$, 
(e) $k$--distributions for $(0.2,0.2)$, 
(f) reward drop versus $k$--separability.
}
\label{fig:verify}
\end{figure*}

\subsubsection{About Assumption~\ref{asp:inj}}\label{app:verify-2}
{To evaluate Assumption~\ref{asp:inj}, which requires that the conditional dynamics 
$P(\rvx_{t+1}\mid \rvx_t, \rvc_t)$ be injective in the context variable $\rvc_t$, 
we perform an empirical test to determine whether different contexts induce measurably different transition dynamics. 
Given a fitted probabilistic dynamics model $\hat p(\rvx_{t+1}\mid \rvx_t, \rvc_t)$, 
we estimate the distribution of next states under each context $\rvc \in \{\rvc_1,\dots,\rvc_M\}$ by drawing samples from the replay buffer and computing 
$\hat p(\rvx_{t+1}\mid \rvx_t, \rvc)$. 
For every pair of contexts $(\rvc_i,\rvc_j)$, we quantify the difference between their induced transition distributions using the 1-Wasserstein distance:
\begin{equation}
\label{eq:inj_w1}
\mathrm{Inj}(\rvc_i,\rvc_j)
= W_1\!\big( p(\rvx_{t+1}\mid \rvx_t,\rvc_i),\, p(\rvx_{t+1}\mid \rvx_t,\rvc_j) \big).
\end{equation}
Large values of $\mathrm{Inj}(\rvc_i,\rvc_j)$ indicate that distinct contexts lead to distinct transition kernels, consistent with injectivity, 
while values near zero suggest that different contexts produce nearly indistinguishable dynamics. 
When $(m,n)=(5,0.5)$ (Fig.~\ref{fig:verify}(a)), we observe consistently non-zero Wasserstein distances across contexts, 
indicating that $P(\rvx_{t+1}\mid \rvx_t,\rvc_t)$ is context-injective in the regime studied. 
In contrast, when we reduce the context variation $(m,n)=(0.2, 0.2)$ (Fig.~\ref{fig:verify}(b)), 
the distances are toward zero, showing the failure mode of the assumption. 
This shows that Assumption~\ref{asp:inj} is mild in the latent-aware decision-making.}

\subsubsection{About Assumption~\ref{asp:spec-dec}}\label{app:verify-3}
 {We provide an empirical test of the spectral ratio $k$ to examine Assumption~\ref{asp:spec-dec} in the RL setting. 
Using the dynamics model on Cheetah $\hat p(\rvx_t\mid \rvx_{t-1},\rvc_t)$, we compute
\begin{equation}
\label{eq:k_ratio}
k(\rvx_t,\bar \rvx_t,\rvx_{t-1},\bar \rvx_{t-1},\rvc_t)
= \frac{\hat p(\rvx_t \mid \rvx_{t-1}, \rvc_t)\,\hat p(\bar \rvx_t \mid \bar \rvx_{t-1}, \rvc_t)}
{\hat p(\bar \rvx_t \mid \rvx_{t-1}, \rvc_t)\,\hat p(\rvx_t \mid \bar \rvx_{t-1}, \rvc_t)} .
\end{equation}
We draw transitions $(\rvx_{t-1},\rvx_t)$ from the replay buffer (600 samples) and form cross-paired transitions $(\bar \rvx_{t-1},\bar \rvx_t)$ by swapping endpoints across trajectories. 
For each context $\rvc\in\{\rvc_1,\dots,\rvc_M\}$, this yields an empirical distribution of $k(\cdot\,;c)$. 
We then quantify how well $k$ separates contexts using the 1-Wasserstein distance between pairs of $k$-distributions, i.e.,
\begin{equation}
\label{eq:k_sep}
\mathrm{Sep}(\rvc_i,\rvc_j)=W_1\!\big(p(k\mid \rvc_i),\,p(k\mid \rvc_j)\big).
\end{equation}
When Assumption~\ref{asp:spec-dec} holds, $k$ remains bounded and its distribution varies across contexts. 
Empirically, we observe clear multi-modal separation across contexts in the paper's setting ($(m,n)=(5,0.5)$, Fig.~\ref{fig:verify}(d)), whereas in regimes where the dynamics become less context-dependent, the $k$-distributions overlap heavily. }

 {When $k$ is not distinguishable across contexts $c$ ($(m,n)=(0.2,0.2)$, Fig~\ref{fig:verify}(e)), it implies that $c$ does not exert a noticeable effect on the transition dynamics. In this regime, explicitly modeling the context is unnecessary, since the environment effectively behaves as a single-context system. 
Hence, we believe Assumption~\ref{asp:spec-dec} is mild in our main regime and also clarifies the failure mode when it is violated.}

\subsubsection{Policy Learning under Different Separability} \label{app:verify-4}

 {When the conditional transition $P(x_{t+1}\mid x_t,c)$ is not injective in $c$ or $k$ is nearly the same  for different $c$, 
different contexts induce nearly identical transition kernels. This means the context is not identifiable from the dynamics and does not meaningfully alter the environment; in such cases, explicitly modeling $c$ brings little benefit for policy learning. Figures~\ref{fig:verify}(c) and (f) illustrate this effect. We vary $(m,n)$ to change the strength of the latent wind context, and compare policy performance when planning with the ground-truth context $c$ versus ignoring $c$. We then plot the resulting performance gap (reward-drop ratio) against transition separability and $k$--separability. The gap shrinks when separability is small, indicating that when both the transition and $k$ are weakly context-dependent, modeling $c$ is unnecessary. Overall, these results verify that Assumption~\ref{asp:inj} and Assumption~\ref{asp:spec-dec} are not only mild in our setting, but also clarify why modeling the latent context is important precisely in regimes where the dynamics are strongly context-dependent.}

\section{Summary on Different MDPs}
\label{sec:mdps}
Our work considers a contextual POMDP setting with an evolving latent process, which naturally relates to several established MDP formulations, including contextual MDPs~\citep{hallak2015contextual}, hidden-parameter MDPs (HiP-MDPs)~\citep{doshi2016hidden}, and their variants. In this section, we provide formal definitions of these models and discuss their relationships and distinctions.

\subsection{Contexutal MDPs}
A contextual Markov decision process (CMDP)~\citep{hallak2015contextual} is defined by the tuple \( \langle \mathcal{C}, \mathcal{S}, \mathcal{A}, \mathcal{M} \rangle \), where \( \mathcal{C} \) is the context space, \( \mathcal{S} \) is the state space, and \( \mathcal{A} \) is the action space. The mapping \( \mathcal{M} \) assigns to each context \( c \in \mathcal{C} \) a set of MDP parameters \( \mathcal{M}(c) = \{ R^c, T^c \} \), where \( R^c \) and \( T^c \) are the reward and transition functions associated with context \( c \).

\citet{sodhani2021multi} and \citet{liang2024dynamite} extend the CMDP framework to settings in which the context variable \( \rvc \) evolves according to its own Markovian dynamics $p(\rvc_{t+1} \mid \rvc_{t})$, closely aligning with our formulation of a latent process evolving over time.

 \subsection{ Hidden-Parameter MDPs}
Hidden-Parameter MDPs (HiP-MDPs)~\citep{doshi2016hidden} are defined by the tuple \( \mathcal{M} = \langle \mathcal{S}, \mathcal{A}, \Theta, \mathcal{T}, \mathcal{R}, \gamma, P_{\Theta} \rangle \), where \( \mathcal{S} \) is the state space, \( \mathcal{A} \) is the action space, and \( \Theta \) is the space of task-specific latent parameters. For each \( \theta \in \Theta \), the transition and reward functions are given by \( \mathcal{T}_\theta: \mathcal{S} \times \mathcal{A} \to \mathcal{P}(\mathcal{S}) \) and \( \mathcal{R}_\theta: \mathcal{S} \times \mathcal{A} \to \mathbb{R} \), respectively. The parameter \( \theta \) is sampled from a prior distribution \( P_{\Theta} \) at the beginning of an episode and remains fixed during the episode. The discount factor is denoted by \( \gamma \in [0,1) \).
This framework defines a family of MDPs indexed by the latent parameter \( \theta \), with each \( \theta \) inducing a different set of dynamics and reward functions. It can be seen as a special case of a contextual MDP where the context is latent and fixed per episode.\citet{xie21c} further generalize this framework by allowing the task parameter \( \theta \) to evolve dynamically across episodes, rather than being fixed. 

Bayes-Adaptive MDPs (BAMDPs) are closely related to both HiP-MDPs and contextual MDPs (CMDPs). In BAMDPs, the agent maintains a posterior distribution over MDPs based on its interaction history. Specifically, it maintains a belief \( b_t(R, T) = p(R, T \mid \boldsymbol{\tau}_{:t}) \), where \( \tau_{:t} = \{\rvs_0, \rva_0, r_0, \dots, \rvs_t\} \) denotes the trajectory observed up to time \( t \). This belief captures the agent’s uncertainty about the underlying transition and reward functions.

The transition and reward functions can then be defined in expectation over this posterior, effectively conditioning decision-making on the belief \( b_t \). When the environment is driven by hidden contextual variables or latent task parameters, such as in CMDPs or HiP-MDPs—this belief can be interpreted as a distribution over these latent variables. In this view, BAMDPs provide a non-parametric framework for reasoning over hidden structure, while approaches like ours explicitly model such latent variables and infer their posterior distributions using amortized inference. Both aim to enable adaptive planning and learning under uncertainty, but differ in how latent structure is represented and inferred.

\subsection{Discussions and Comparisons}
The key distinction between contextual MDPs and hidden-parameter MDPs lies in how the latent factors are represented: contextual MDPs explicitly treat them as latent variables, while HiP-MDPs model them implicitly as parameters governing the transition and reward functions. In our work, we adopt the contextual MDP perspective, where the latent process is modeled as a random variable that evolves over time.

However, our identification theory, focused on recovering the posterior distribution over latent variables, also applies to the HiP-MDP setting. Once the posterior over the hidden parameters is identified, the corresponding transition and reward functions can be recovered as well. 

Additionally, our framework, which models a factorization over observed states and latent variables, is conceptually related to factored MDPs~\citep{guestrin2003efficient}. In a factored MDP, the state space \( \mathcal{S} \) is represented as a set of variables \( \mathcal{S} = \{s^{(1)}, s^{(2)}, \dots, s^{(n)}\} \), and the transition and reward functions are decomposed over these factors:
\begin{equation*}
T(\rvs' \mid \rvs, \rva) = \prod_{i=1}^n T_i\left(s'^{(i)} \mid \text{Pa}_T^{(i)}(\rvs, \rva)\right),
\quad
R(\rvs, \rva) = \sum_{j=1}^m R_j\left(\text{Pa}_R^{(j)}(\rvs, \rva)\right),
\end{equation*}
where \( \text{Pa}_T^{(i)} \) and \( \text{Pa}_R^{(j)} \) denote the parent variables (i.e., dependencies) for each transition and reward component, respectively. our framework, while not relying on an explicit graphical structure, shares conceptual similarities with factored MDPs~\citep{guestrin2003efficient} through its coarse-grained factorization over observed states and latent variables. Specifically, we distinguish between latent variables that affect the transition dynamics and those that affect the reward function. Formally, we express the generative process as:
\begin{equation*}
T(\rvs_{t+1} \mid \rvs_t, \rva_t, \rvc_t^{\text{s}}), \quad 
R(r_t \mid \rvs_t, \rva_t, \rvc_t^{\text{r}}),
\end{equation*}
where \( \rvc_t^{\text{s}} \) and \( \rvc_t^{\text{r}} \) are distinct (or potentially overlapping) latent factors that influence transitions and rewards, respectively. This separation enables flexible modeling of partially observable environments where different unobserved processes govern the dynamics and task objectives.

\section{Details on \mcall{}}\label{app:method}

\begin{figure*}[h]
  \centering
  \includegraphics[width=0.5\textwidth]{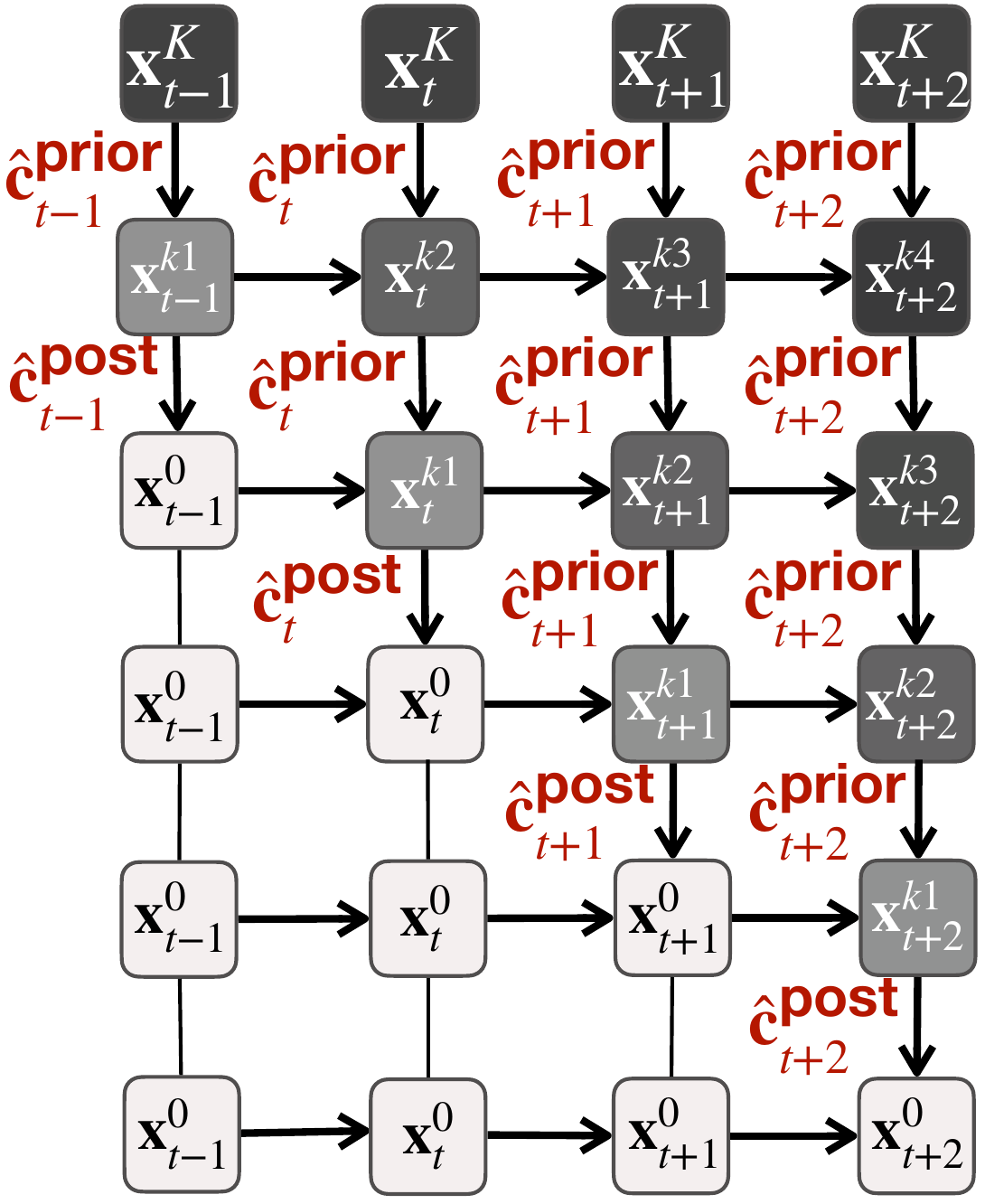}
  \caption{An illustration of the zig-zag sampling process with a block of 4 time steps. $\downarrow$ and $\mid$ indicate denoising and identity mapping, respectively. }
  \label{fig:app_inference}
\end{figure*}

\subsection{Full Algorithm and Results}\label{app:algorithm}
Our framework consists of two stages: latent factor identification and diffusion-based planning or policy learning. Below, we provide the algorithmic pseudocode for both stages. Specifically, Algorithm~\ref{alg:stage1} describes Stage 1: latent factor identification, while Algorithms~\ref{alg:stage2_planner} and~\ref{alg:stage2_policy} correspond to \mcall\texttt{-Planner} and \mcall\texttt{-Policy}, respectively. 

For clarity, we omit the detailed step-by-step procedures for denoise-and-refine and zig-zag sampling (Lines 7–8, 11, and 19–22 in Algorithm~\ref{alg:stage2_planner}; Lines 6–7 and 13 in Algorithm~\ref{alg:stage2_policy}), as these are fully described in Section~\ref{sec: causal diffusion process}. For \mcall\texttt{-Policy}, we show a Diffusion Policy (DP)-based algorithm, which provides a general framework for multi-step action generation. In the IDQL-based variant, both the action execution horizon and observation horizon are set to 1, corresponding to single-step policy inference conditioned only on the current observation.

\begin{algorithm}[th]
    \caption{Latent Factor Identification.}
    \label{alg:stage1}
    \begin{algorithmic}[1]
        \STATE{\bfseries Input:}  offline dataset $\mathcal{D}$
        \STATE Randomly initialize  decoder $p_\theta(\rvs_{t+1}, r_{t} \mid \rvs, \rva, \rvc)$,\\ encoder $q_{\psi}(\rvc_{t} \mid \rvs_{t-T_x:t+1}, \rva_{t-T_x:t+1}, r_{t-T_x:t+1})$ and prior network $p_{\phi}(\rvc_{t} \mid \rvc_{t-1})$,
        \WHILE{not done}
        \STATE Sample batches of trajectories from $\mathcal{D}$ 
        \STATE Compute ELBO and update $\theta, \psi, \phi$
        \ENDWHILE
    \end{algorithmic}
\end{algorithm}

\begin{algorithm}[th]
    \caption{\mcall{\texttt{-Planner}}.}
    \label{alg:stage2_planner}
    \begin{algorithmic}[1]
        \STATE{\bfseries Input:} \texttt{Env}, offline dataset $\mathcal{D}$, pre-trained encoder $q_\psi$ and prior network $p_\phi$\\ observation horizon $T_o$, planning horizon $T_p$, 
        action execution horizon $T_a$, condition $\rvy$\\
        \item[] {\color{darkred} // \textbf{Training}}
        \STATE Initialize  noise predictor $\epsilon_\theta$, inverse dynamics model $f_\phi$
        \WHILE{not done}
        \STATE Sample $\rvx_{t-T_o:t+T_p}$ from $\mathcal{D}$
        \STATE Sample $\hat{\rvc}^\text{prior}_{t:t+T_p}$ and  $\hat{\rvc}^\text{post}_{t:t+T_p-2}$ from $p_\phi$ and $q_\psi$
        \IF{using inverse dynamics model}
        \STATE Train Causal Diffusion Model (noise predictor $\epsilon_\theta$) with $\rvx_{t-T_o:t}$, $\hat{\rvc}^\text{prior}_{t-T_o:t}$, and   $\hat{\rvc}^\text{post}_{t-T_o:t}$ and other conditions $\rvy$,  target outputs are $\rvs_{t+1:t+T_p}$
        \STATE Train encoder $q_\psi$ with the contrastive improvement loss $\mathcal{L_\text{contrast}}$
        \STATE Train Inverse Dynamics Model $f_\phi$ to generate actions $\rva_{t+1:t+T_p}$
        \ELSE 
        \STATE Train Causal Diffusion Model (noise predictor $\epsilon_\theta$) with $\rvx_{t-T_o:t}$, $\hat{\rvc}^\text{prior}_{t-T_o:t}$, and   $\hat{\rvc}^\text{post}_{t-T_o:t}$ and other conditions $\rvy$,  target outputs are $\{\rvs_{t+1:t+T_p}, \rva_{t+1:t+T_p}\}$
        \STATE Train encoder $q_\psi$ with the contrastive improvement loss $\mathcal{L_\text{contrast}}$
        \ENDIF 
       \ENDWHILE

\item[] {\color{darkred} // \textbf{Execution}}
\STATE Initialize environment: \( s_0 \sim \texttt{Env.reset()} \), set \( t \leftarrow 0 \)
\WHILE{not done}
    \item[] {\color{gray} // Observe and infer latent factors}
    \STATE Observe recent trajectory \( \rvx_{t-T_o:t} \)
    \STATE Sample latent variables $\hat{\rvc}^\text{prior}_{t:t+T_p}$ from $p_\phi$
    
    \item[] {\color{gray} // Generate candidate trajectory}
    \IF{using inverse dynamics model}
        \STATE Generate future states (zig-zag sampling) \( \hat{\rvs}_{t+1:t+T_p} \) conditioned on \( \rvx_{t-T_o:t} \), \( \hat{\rvc}^{\text{prior}}_{t:t+T_p} \), and \( \rvy \) via learned noise predictor  $\epsilon_\theta$
        \STATE Generate actions \( \hat{\rva}_{t+1:t+T_p} \leftarrow f_\phi(\hat{\rvs}_{t+1:t+T_p}, \hat{\rvs}_{t:t+T_p-1}) \)
    \ELSE
        \STATE Generate future trajectory \( \{\hat{\rvs}_{t+1:t+T_p}, \hat{\rva}_{t+1:t+T_p}\} \) conditioned on \( \rvx_{t-T_o:t} \), \( \hat{\rvc}^{\text{prior}}_{t:t+T_p} \), and \( \rvy \) via learned noise predictor  $\epsilon_\theta$
    \ENDIF
    
    \item[] {\color{gray} // Execute action(s) in environment}
    \FOR{each step \( i = 1 \) to \( T_a \)}
        \STATE Execute \( \hat{\rva}_{t+i} \) in \texttt{Env}, observe \( s_{t+i+1}, r_{t+i} \)
        \STATE Append \( (s_{t+i}, \hat{a}_{t+i}, r_{t+i}) \) to trajectory buffer
    \ENDFOR
    \STATE Update \( t \leftarrow t + T_a \)
\ENDWHILE
    \end{algorithmic}
\end{algorithm}

\begin{algorithm}[th]
    \caption{\mcall\texttt{-Policy (DP-based)}}
    \label{alg:stage2_policy}
    \begin{algorithmic}[1]
        \STATE{\bfseries Input:} \texttt{Env}, offline dataset $\mathcal{D}$, pre-trained encoder $q_\psi$ and prior network $p_\phi$\\ observation horizon $T_o$, action generation horizon $T_p$, action execution horizon $T_a$, condition $\rvy$\\
        \item[] {\color{darkred} // \textbf{Training}}
        \STATE Initialize noise predictor \( \epsilon_\theta \)
        \WHILE{not done}
            \STATE Sample \( \rvx_{t-T_o:t+T_p} \) from \( \mathcal{D} \)
            \STATE Sample latent variables \( \hat{\rvc}^\text{prior}_{t:t+T_p} \sim p_\phi \), \( \hat{\rvc}^\text{post}_{t:t+T_p-2} \sim q_\psi \)
            \STATE Train causal diffusion model (noise predictor \( \epsilon_\theta \)) to generate actions \( \rva_{t+1:t+T_p} \), conditioned on \( \rvx_{t-T_o:t} \), \( \hat{\rvc}^\text{prior}_{t:t+T_p} \), \( \hat{\rvc}^\text{post}_{t:t+T_p-2} \), and \( \rvy \)
           \STATE Train encoder $q_\psi$ with the contrastive improvement loss $\mathcal{L_\text{contrast}}$
        \ENDWHILE

        \item[] {\color{darkred} // \textbf{Execution}}
        \STATE Initialize environment: \( s_0 \sim \texttt{Env.reset()} \), set \( t \leftarrow 0 \)
        \WHILE{not done}
            \item[] {\color{gray} // Observe and infer latent factors}
            \STATE Observe recent trajectory \( \rvx_{t-T_o:t} \)
            \STATE Sample latent variables \( \hat{\rvc}^\text{prior}_{t:t+T_p} \sim p_\phi \) 
            \item[] {\color{gray} // Generate actions  using causal diffusion model}
            \STATE Generate actions (zig-zag sampling) \( \hat{\rva}_{t+1:t+T_p} \)  conditioned on \( \rvx_{t-T_o:t} \), \( \hat{\rvc}_{t:t+T_p} \), and \( \rvy \) via learned noise predictor  $\epsilon_\theta$

            \item[] {\color{gray} // Execute action(s) in environment}
            \FOR{each step \( i = 1 \) to \( T_a \)}
                \STATE Execute \( \hat{\rva}_{t+i} \) in \texttt{Env}, observe \( s_{t+i+1}, r_{t+i} \)
                \STATE Append \( (s_{t+i}, \hat{a}_{t+i}, r_{t+i}) \) to trajectory buffer
            \ENDFOR
            \STATE Update \( t \leftarrow t + T_a \)
        \ENDWHILE
    \end{algorithmic}
\end{algorithm}

Additionally, we provide the full results for all experiments: Table~\ref{tab:app:actionfree} reports results for the action-free setting; Tables~\ref{tab:latent_res_1_app} and~\ref{tab:latent_res_2_app} present results for environments with latent factors affecting dynamics and rewards; and Tables~\ref{tab:non_latent_res_1_app},~\ref{tab:non_latent_res_2_app},~\ref{tab:non_latent_res_3_app}, and~\ref{tab:non_latent_res_4_app} summarize results for environments without explicitly modeled latent factors.

\begin{table*}[h]
\caption{\textbf{Results (success rate) on action-free demonstrations.} Here, AF and SubOpt indicate using Action-free and suboptimal demonstrations on Robomimic tasks, respectively (following the settings in LDP~\citep{xie2025latent}). }
\label{tab:app:actionfree}
\begin{center}
\renewcommand{\arraystretch}{1.15}
\setlength{\tabcolsep}{5pt}
\begin{tabular}{l|cccc}
\toprule
\textbf{Environment}
& \makecell[c]{LDP (AF)}
& \makecell[c]{Ours (AF)} 
& \makecell[c]{LDP (AF, SubOpt)} 
& \makecell[c]{Ours (AF, SubOpt)} 

\\
\midrule
Lift  &
0.67\Std{0.01} &
\textbf{0.78}\Std{0.05} &
\textbf{1.00}\Std{0.00} &
{0.98}\Std{0.02} 
\\
Can  &
0.78\Std{0.04} &
\textbf{0.85}\Std{0.07}  &
\textbf{0.98}\Std{0.00} &
\textbf{0.98}\Std{0.02} \\
Square  &
0.47\Std{0.03} &
\textbf{0.54}\Std{0.05} &
0.83\Std{0.01} &
\textbf{0.89}\Std{0.03} 
\\
\bottomrule
\end{tabular}
\end{center}\end{table*}

\begin{table*}[th]
\caption{\textbf{Results (average returns) on {\mcall}-Planner with latent factors that affects dynamics and rewards.} $\rvc^s$ and $\rvc^r$ indicate the changes on dynamics and reward, $E$ and $S$ represent the episodic and time-step changes. The results are with 5 random seeds. The bold ones are the best-performing ones, excluding meta-learning and oracle ones.}
\label{tab:latent_res_1_app}
\begin{center}
\renewcommand{\arraystretch}{1.15}
\setlength{\tabcolsep}{5pt}

\begin{adjustbox}{angle=90}
\begin{tabular}{l|ccccccccccc}
\toprule
\textbf{Environment}
& \makecell[c]{Diffuser} 
& \makecell[c]{DF}
& \makecell[c]{MetaDiffuser}
& \makecell[c]{Diffuser\\ + DynaMITE} 
& \makecell[c]{Diffuser\\ + LILAC} 
& \makecell[c]{Ours}
& \makecell[c]{Ours\\ + Meta} 
& \makecell[c]{Oracle}
\\
\midrule
Cheetah-Wind-E ($\rvc^s$) &
-120.4 \Std{12.7} &
-105.8 \Std{9.6} &
-89.7 \Std{6.5} &
-79.2 \Std{11.0} &
-95.3\Std{7.4} &
$\textbf{-68.9}$\Std{7.6} &
${-62.4}$\Std{3.9} &
${-57.8}$\Std{6.7} 
\\
Cheetah-Wind-S ($\rvc^s$)   &  -148.5\Std{9.8} &
-102.0\Std{10.2} &
-106.8\Std{11.4} &
-94.3\Std{9.6} &
-105.6\Std{14.5} &
\textbf{-73.5}\Std{8.7} &
{-65.3}\Std{11.2} &
{-58.1}\Std{9.0} 
\\
\midrule
Cheetah-Dir-E ($\rvc^r$)   &
850.8 \Std{54.2} &
902.1 \Std{45.8} &
912.5\Std{37.9} &
930.4\Std{29.5} &
908.5\Std{37.6} &
\textbf{943.3}\Std{25.6} &
{949.8}\Std{24.1} &
{962.1}\Std{21.9}\\
Cheetah-Vel-E ($\rvc^r$)   &
-102.4\Std{18.2} &
-85.6\Std{18.3} &
-69.2\Std{7.5} &
-76.3\Std{11.7} &
-62.6\Std{11.1} &
\textbf{-45.8}\Std{9.5} &
{-39.2}\Std{7.6} &
{-38.3}\Std{8.9}\\
Ant-Dir-E ($\rvc^r$)   &
188.6\Std{39.2} &
195.4\Std{47.0} &
245.9\Std{41.0} &
262.8\Std{27.5} &
229.4\Std{32.6} &
\textbf{285.3}\Std{24.5} &
{296.4}\Std{22.2} &
{300.7}\Std{23.6} \\
\bottomrule
\end{tabular}
\end{adjustbox}
\end{center}\end{table*}
\begin{table*}[th]
\caption{\textbf{Results (average returns) on {\mcall}-Policy with latent factors.} $\rvc^s$ and $\rvc^r$ indicate the changes on dynamics and reward, $E$ and $S$ represent the episodic and time-step changes. The results are with 5 random seeds. The bold ones are the best-performing ones, excluding meta-learning and oracle ones.}
\label{tab:latent_res_2_app}
\begin{center}
\renewcommand{\arraystretch}{1.15}
\setlength{\tabcolsep}{5pt}
\begin{adjustbox}{angle=90}
\begin{tabular}{l|cccccccc}
\toprule
\textbf{Environment}
& \makecell[c]{DP}
& \makecell[c]{DP\\ + DynaMITE} 
& \makecell[c]{Ours \\ + DP}
& \makecell[c]{Ours \\ + DP (Oracle)}
& \makecell[c]{IDQL}
& \makecell[c]{IDQL\\ + DynaMITE} 
& \makecell[c]{Ours \\ + IDQL}
& \makecell[c]{Ours \\ + IDQL (Oracle)}
\\
\midrule
Cheetah-Wind-E ($\rvc^s$) &
-104.8 \Std{10.9} &
-72.2\Std{5.9} &
\textbf{-58.5}\Std{4.6} &
{-52.0}\Std{3.5} &
-97.5\Std{9.4} & 
-59.0\Std{11.2} & 
\textbf{-48.5}\Std{7.9} &
{-41.6}\Std{6.2}\\
Cheetah-Wind-S ($\rvc^s$)   &
-120.6\Std{11.5} &
-76.5\Std{15.6} &
\textbf{-52.9}\Std{9.8} &
{-42.3}\Std{6.7} &
-87.8\Std{12.2} &
-63.4\Std{6.7} &
\textbf{-48.0}\Std{7.2} & 
{-44.7}\Std{6.1}
\\
\midrule
Cheetah-Dir-E ($\rvc^r$)   &
892.5\Std{60.8} &
949.6\Std{36.1} &
\textbf{960.7}\Std{40.2} &
{972.4}\Std{37.5} &
902.4\Std{45.2} &
938.6\Std{49.4} &
\textbf{965.0}\Std{37.5} &
{969.8}\Std{39.2}\\
Cheetah-Vel-E ($\rvc^r$)   & 
-87.9\Std{6.5} &
-72.7\Std{5.8} &
\textbf{-41.0}\Std{7.2} &
{-39.8}\Std{6.7} &
-80.2\Std{11.4} &
-59.4\Std{6.5} &
\textbf{-38.6}\Std{7.7} &
{-33.8}\Std{6.5}\\
Ant-Dir-E ($\rvc^r$)  &
182.5\Std{41.2} &
275.2\Std{27.0} &
\textbf{290.4}\Std{49.4} &
{312.5}\Std{37.2} &
204.6\Std{25.6} &
269.3\Std{29.5} &
\textbf{295.8}\Std{32.7} &
{309.6}\Std{25.4}\\
\bottomrule
\end{tabular}\end{adjustbox}
\end{center}\end{table*}

\begin{table*}[h]
\caption{\textbf{Results (success rate (\%)) on {\mcall}-Planner without explicit latent factors on Franka-kitchen environment.} The results are with 5 random seeds.}
\label{tab:non_latent_res_1_app}
\begin{center}
\small
\renewcommand{\arraystretch}{1.15}
\setlength{\tabcolsep}{5pt}
\begin{tabular}{l|ccccc}
\toprule
\textbf{Environment}
& \makecell[c]{Diffuser} 
& \makecell[c]{DD}
& \makecell[c]{DF}
& \makecell[c]{LDCQ} 
& \makecell[c]{Ours (DD)} 
\\
\midrule
Mixed  &
52.6 \Std{2.3} &
\textbf{75.2} \Std{1.4} &
73.7 \Std{1.9} &
73.3 \Std{0.5} &
74.6\Std{1.6} \\
Partial    &  55.8\Std{1.9} &
57.3\Std{1.2} &
68.6\Std{2.4} &
67.8\Std{0.8} &
\textbf{70.1}\Std{1.3}  \\
\bottomrule
\end{tabular}
\end{center}\end{table*}

\begin{table*}[h]
\caption{\textbf{Results on {\mcall}-Planner without explicit latent factors on Maze-2D environment}. The results are averaged across 5 random seeds.  }
\label{tab:non_latent_res_2_app}
\begin{center}
\small
\renewcommand{\arraystretch}{1.15}
\setlength{\tabcolsep}{5pt}
\begin{tabular}{l|ccccc}
\toprule
\textbf{Environment}
& \makecell[c]{Diffuser} 
& \makecell[c]{DD}
& \makecell[c]{DF}
& \makecell[c]{LDCQ} 
& \makecell[c]{Ours (DD)} 
\\
\midrule
umaze & 
113.5 \Std{2.8} &
114.8 \Std{3.2} &
116.7 \Std{2.0} &
134.2 \Std{4.1} &
\textbf{148.6}\Std{3.7} \\
medium   &
121.5 \Std{5.6} &
129.6 \Std{2.9} &
\textbf{149.4}\Std{7.5} &
125.3\Std{2.5} &
148.6\Std{3.1}  \\
large   &
123.0\Std{4.8} &
131.5\Std{4.2} &
159.0\Std{2.7} &
150.1\Std{2.9} &
\textbf{161.4}\Std{3.2} \\
\bottomrule
\end{tabular}
\end{center}\end{table*}

\begin{table*}[h]
\caption{\textbf{Results on {\mcall}-Planner without explicit latent factors on Walker environment}. The results are averaged across 5 random seeds.}
\label{tab:non_latent_res_3_app}
\begin{center}
\small
\renewcommand{\arraystretch}{1.15}
\setlength{\tabcolsep}{5pt}
\begin{tabular}{l|ccccc}
\toprule
\textbf{Environment}
& \makecell[c]{Diffuser} 
& \makecell[c]{DD}
& \makecell[c]{DF}
& \makecell[c]{LDCQ} 
& \makecell[c]{Ours (DD)} 
\\
\midrule
medium-expert &
106.2 \Std{0.7} &
108.8 \Std{2.0} &
105.4 \Std{3.2} &
109.3 \Std{0.4} &
\textbf{115.7} \Std{2.1}  \\
medium   &  79.6\Std{9.8} &
82.5\Std{1.6} &
66.2\Std{1.9} &
69.4\Std{2.4} &
\textbf{83.6}\Std{3.5}  \\
medium-replay   &
70.6\Std{0.6} &
75.0 \Std{3.2} &
72.2\Std{2.6} &
68.5\Std{4.3} &
74.3\Std{2.8} \\
\bottomrule
\end{tabular}
\end{center}\end{table*}

\begin{table*}[h]
\caption{\textbf{Results on {\mcall}-Policy without explicit latent factors on Libero environment}. The results are averaged across 5 random seeds.}
\label{tab:non_latent_res_4_app}
\begin{center}
\small
\renewcommand{\arraystretch}{1.15}
\setlength{\tabcolsep}{5pt}
\begin{tabular}{l|cc}
%
%
\toprule
\textbf{Environment}
& \makecell[c]{DP} 
& \makecell[c]{Ours (DP)}
\\
\midrule
Spatial &
{78.3} \Std{3.9} &
\textbf{79.2} \Std{4.2} \\
Object  &  92.5\Std{2.6} &
\textbf{93.4}\Std{2.8} \\
Long   &
50.5 \Std{7.2} &
\textbf{62.6} \Std{4.9} \\
\bottomrule
\end{tabular}
\end{center}\end{table*}

\subsection{Architecture Choices and Hyper-parameters} \label{app:detail_arch}
We detail the architectural design choices and hyperparameter settings used for model components, loss functions, and training procedures across all \mcall{} variants under different environments and benchmarks.
 
\subsubsection{Latent Factor Identification}\label{app:vae}
\paragraph{Architectures} We use a variational autoencoder (VAE)~\citep{kingma2013auto} to optimize the evidence lower bound (ELBO). The same architectural design is used across all variants of \mcall{} and all benchmark settings.

For the encoder, we first embed states, actions, and rewards using separate MLPs with ReLU activations. The resulting embeddings are concatenated and passed through a two-layer MLP (each layer of size 64) followed by a GRU. The GRU output is used to parameterize a Gaussian distribution from which the latent variables are sampled.

The state and reward decoders are implemented as separate MLPs, each consisting of two fully connected layers of size 64 with ReLU activations. For the prior network, we use the output of the previous step's latent distribution embedding (shared GRU) and feed it into a two-layer MLP (each layer of size 32) to predict the parameters of the prior distribution.  {For the dimensionality of latents, we choose $20$ for Cheetah, Walker, Ant, Maze; $64$ for Robomimic, Kitchen, Libero.}

\paragraph{Loss Function}
At each time step \( t \), we optimize the following losses:

\[
\mathcal{L}_{\text{ELBO}, t} = 
\underbrace{
\mathbb{E}_{q_\psi(\rvc_t \mid \rvx_{t-T_x:t+1})} 
\left[ -\log p_\theta(\rvx_t \mid \rvx_{t-1}, \rvc_t) \right]
}_{\text{Reconstruction loss}}
+
\underbrace{
D_{\mathrm{KL}}\left( 
q_\psi(\rvc_t \mid \rvx_{t-T_x:t+1}) \,\|\, p_\phi(\rvc_t \mid \rvc_{t-1}) 
\right)
}_{\text{KL regularization}}.
\]

Here, \( \rvx_t \) may include different components depending on the setting (e.g., \( \rvx_t = \{\rvs_t, \rva_t\} \) or \( \rvx_t = \rvs_t \)), and \( \rvc_t \) denotes the latent context variable inferred from a temporal block of observations. The first term encourages accurate reconstruction of the current observation \( \rvx_t \) conditioned on its immediate past and the latent \( \rvc_t \), while the second term regularizes the posterior to remain close to the learned prior \( p_\phi(\rvc_t \mid \rvc_{t-1}) \).

We implement the ELBO loss as a weighted combination of the reconstruction loss and the KL divergence: 
\begin{equation*}
  \mathcal{L}_{\text{ELBO}} = \sum_{t=1}^{T-2} \left[ 
\| \hat{\rvx}_t - \rvx_t \|_2^2 
+ \lambda_{\text{KL}} \cdot D_{\mathrm{KL}}\left( 
q_\psi(\rvc_t \mid \rvx_{t-T_x:t+1}) \,\|\, p_\phi(\rvc_t \mid \rvc_{t-1}) 
\right)
\right],  
\end{equation*}
where \( \hat{\rvx}_t \) is the model's reconstruction of the observation \( \rvx_t \), and \( \lambda_{\text{KL}} \) is weighting coefficient. The reconstruction is computed using mean squared error (MSE), and the KL divergence is computed in closed form for Gaussian posteriors and priors. The hyperparameter 
$\lambda_{\text{KL}} $ is set to be $0.01$ and the learning rate is set to be $3e-4$.

\subsubsection{Planner} \label{app:detail_arch_planner}

\begin{table}[t]
\centering
\scriptsize
\caption{Planner configurations for type (i): full trajectory generation and type (ii): state-only generation with inverse dynamics.}
\label{tab:planner_config}
\begin{tabular}{l|l|l}
\toprule
\textbf{Component} & \textbf{Type (i): Full Trajectory} & \textbf{Type (ii): State-Only } \\
\midrule
\textbf{Model Backbone} & 1D U-Net~\citep{ronneberger2015u} & Transformer (DiT) \\
\textbf{Architecture} & Kernel size: 5; channels: (1,2,2,2); base: 32 & Hidden dim: 256; head dim: 32 \\
\textbf{\# DiT Blocks} & -- & 2 (Walker, Kitchen, Maze2D), 8 (LIBERO) \\
\textbf{Optimizer} & Adam, lr = \(3 \times 10^{-4}\) & Adam, lr = \(3 \times 10^{-4}\) \\
\textbf{Batch Size} & 64 & 128 \\
\textbf{Training Steps} & 1M & 1M \\
\textbf{Diffusion Timesteps} & 150 & 200 \\
\textbf{Observation Horizon $T_o$} & 10 & 4 (Kitchen), 2 (LIBERO), 10 (others) \\
\textbf{Planning Horizon \(T_p\)} & 16 (Cheetah), 32 (Ant) & 16 (Kitchen), 10 (LIBERO), 32 (others) \\
\textbf{Execution Horizon $T_o$} & 1 & 8 (Kitchen, LIBERO), 10 (others) \\
\textbf{Guidance} & CG, \( \omega = 1.5 \) & CFG \\
\textbf{Inverse Dynamics Model} & -- & 2-layer embed (64), 3-layer MLP (128), 1M steps \\
\textbf{Refinement Loss Cofficient} & 0.1 & 0.1 \\
\bottomrule
\end{tabular}
\end{table}

For the planner, we consider two scenarios: (i) generating both states and actions, and (ii) generating states only. For the former, we build upon the Diffuser framework~\citep{janner2022planning}, which directly models full trajectories. For the latter, we adopt the Decision Diffuser (DD) framework~\citep{ajay2022conditional}, where the model generates future states and uses an inverse dynamics model to recover the corresponding actions via inverse dynamics model. 

For type (i) (full state-action trajectory generation), we apply our method to the Cheetah and Ant environments. For the noise predictor, we use a 1D U-Net~\citep{ronneberger2015u} with a kernel size of 5, channel multipliers set to (1, 2, 2, 2), and a base channel width of 32. The model is trained using the Adam optimizer~\citep{kingma2014adam} with a learning rate of \( 3 \times 10^{-4} \), a batch size of 64, and for 1 million training steps.
We adopt classifier guidance (CG)~\citep{ho2020denoising} with gradient guidance on computed return, with a guidance scale \( \omega = 1.5 \). The observation horizon is set to 10 for both environments. The planning horizon \( T_p \) is set to 16 for Cheetah and 32 for Ant, with an action execution horizon of 1. 
These hyperparameters are kept consistent across baselines, including Diffuser, DF, MetaDiffuser, and Diffuser combined with LILAC and DynaMITE for the Cheetah and Ant experiments (those in  Table~\ref{tab:latent_res_1} and Appendix Table~\ref{tab:latent_res_1_app}). For other components (e.g., VAE) in LDCQ, we employ all the hyperparameters in their original implementation~\citep{venkatraman2024reasoning}. 

For type (ii) (state-only generation with inverse dynamics), we use a Transformer-based noise predictor with a hidden dimension of 256 and a head dimension of 32. The architecture includes 2 DiT blocks for Walker, Kitchen, and Maze2D, and 8 DiT blocks for LIBERO.The model is trained using the Adam optimizer~\citep{kingma2014adam} with a learning rate of \( 3 \times 10^{-4} \), a batch size of 128, and for 1 million training steps. The number of diffusion timesteps is 500.
The observation horizon is set to 4 for Kitchen, 2 for LIBERO, and 10 for the other environments. The planning horizon \( T_p \) is set to 16 for Kitchen, 10 for LIBERO, and 32 for the others. The action execution horizon is 8 for both Kitchen and LIBERO, and 10 for the remaining environments.
For the inverse dynamics model, we use an MLP-based diffusion model consisting of a 3-layer MLP with 128 hidden units, preceded by a 2-layer embedding module with 64 hidden units. This model is trained for 1 million gradient steps. 

For both cases, we set the coefficient of the contrastive improvement loss $\mathcal{L}_{\text{contrast}} = \max\{0, \mathcal{L}_{\text{prior}} - \mathcal{L}_{\text{post}}\}$ to 0.1. The key hyper-parameters are summarized in Table~\ref{tab:planner_config}. 

\subsubsection{Policy} \label{app:detail_arch_policy}
For the DP-based policy, we adopt the same architecture as the planner described earlier for Cheetah, Maze2D, Kitchen, Ant, and Walker. For LIBERO, we use a Transformer-based noise predictor with a decoder architecture comprising 12 layers, 12 attention heads, and a hidden embedding dimension of 768. Following DP~\citep{chi2023diffusion}, we apply dropout with a rate of 0.1 to both the input embeddings and attention weights. The number of diffusion timesteps is 500.
When conditioning is used, we incorporate a Transformer encoder with 4 layers to encode the condition tokens, which include a sinusoidal timestep embedding and projected observed trajectory tokens (all mapped to the same embedding dimension). In this encoder-decoder setup, causal masking is applied to ensure autoregressive generation. In the unconditioned case, we prepend the sinusoidal timestep embedding to the input sequence and use a BERT-style encoder-only Transformer. 
All environments (Cheetah, Ant, Kitchen, Maze2D, Walker, and LIBERO) are trained using the AdamW optimizer with a learning rate of \( 10^{-4} \), weight decay \( 10^{-3} \), \( \beta_1 = 0.9 \), and \( \beta_2 = 0.95 \). Layer normalization is applied before each Transformer block for stability. The observation, planning, and action horizons follow the same settings used for the planner in each environment.

For the IDQL-based policy, we align all hyperparameters for Cheetah and Ant with the original IDQL implementation, using an observation, planning, and action horizon of 1. Hence, in IDQL-based ones, we do not consider autoregressive modeling. 
Similarly, for both cases, we use consider the coefficient before the contrastive improvement loss as 0.1. 

\subsubsection{Hyperparameters of Contrastive Improvement Loss}
We set $\lambda_\text{prior}$, $\lambda_\text{rel}$ to be fixed as $0.1$ across all settings. $m$ is set to be the $0.05 \times \mathcal{L}_\text{prior}$ during the beginning of each epoch. 

\subsection{Connection to Bayesian Filtering}\label{app:sec-bs}
In the absence of explicitly designed latent variables, our model can be interpreted as a form of \textit{Bayesian filtering}~\citep{chen2003bayesian}. Under a general formulation of the hidden Markov model (HMM)~\citep{rabiner1986introduction} with an additional latent dependency on observation (\( \rvc \rightarrow \rvx \)), the latent process over \( \rvc \) captures the underlying stochasticity present in the demonstration data, which arises from both the environment dynamics and the behavior policy. In this view, the latent variable acts as a compact and expressive representation that summarizes the uncertainty in past observations, thereby improving the prediction of future observations. This, in turn, facilitates more robust policy learning and planning in the general settings. 

\section{Extended Related Works} \label{app:rl}
\subsection{Diffusion Model-based Decision-making} 

Recent advances use diffusion models as the planner and policy for both reinforcement learning (RL) and imitation learning (IL). RL agent aims to learn a policy that maximizes cumulative rewards through interaction with an environment~\citep{sutton1998reinforcement}. The agent observes a sequence of transitions \( (\mathbf{s}_t, \mathbf{a}_t, r_t, \mathbf{s}_{t+1}) \), where \( \mathbf{s}_t \in \mathcal{S} \) denotes the state, \( \mathbf{a}_t \in \mathcal{A} \) the action, \( r_t \in \mathbb{R} \) the received reward, and \( \mathbf{s}_{t+1} \) the next state. The goal is to learn a policy \( \pi(\mathbf{a} \mid \mathbf{s}) \) that maximizes the expected return:
$\pi^* = \arg\max_\pi \mathbb{E}_{\pi} \left[ \sum_{t=0}^\infty \gamma^t r_t \right],
$
where \( \gamma \in [0,1) \) is the discount factor.
In contrast, IL~\citep{hussein2017imitation} focuses on learning policies from expert demonstrations, often without access to the reward signal. A common approach is behavior cloning (BC)~\citep{pomerleau1991efficient}, which formulates IL as a supervised learning problem by maximizing the likelihood of expert actions given observed states, i.e., learning a policy \( \pi(\mathbf{a} \mid \mathbf{s}) \) that closely imitates the expert policy \( \pi_e(\mathbf{a} \mid \mathbf{s}) \).

\textbf{Diffusion Planner }  
Diffusion-based planning methods are commonly used to approximate the sequence of future states and actions from a given current state. By leveraging the conditional generation capabilities of diffusion models—such as guidance techniques~\citep{dhariwal2021diffusion, ho2022classifier}—these methods can generate plans (i.e., state trajectories) that satisfy desired properties, such as maximizing expected rewards. 
Taking Denoising Diffusion Probabilistic Models
(DDPM~\citep{ho2020denoising})-based approaches as an example, these methods learn a generative model over expert trajectories \( \tau = \{(\mathbf{s}_0, \mathbf{a}_0), \dots, (\mathbf{s}_T, \mathbf{a}_T)\} \) by modeling a forward-noising process:
$q(\mathbf{x}^k \mid \mathbf{x}^{k-1}) = \mathcal{N}(\mathbf{x}^k; \sqrt{\alpha_k} \, \mathbf{x}^{k-1}, (1 - \alpha_k) \mathbf{I})$,
and a parameterized denoising model $p_\theta(\mathbf{x}^{k-1} \mid \mathbf{x}^k)$ to reverse the process. Here, $k$ denotes the diffusion step, $\mathbf{x}^0$ is a clean sub-sequence sampled from the expert trajectory $\tau$, and $\alpha_k$ controls the variance schedule at step $k$.

During inference, trajectories are generated by starting from Gaussian noise and iteratively denoising through the learned reverse process. This generation can be optionally conditioned on the initial state or other guidance signals $\mathbf{y}$, such as rewards, goals, or other constraints: $\hat{\tau} \sim p_\theta(\tau \mid \mathbf{s}_0, \mathbf{y})$. 

These methods generally fall into two main categories: (1) learning a joint distribution over state-action trajectories, as in Diffuser~\citep{janner2022planning}, or (2) learning only state trajectories via diffusion and using an inverse dynamics model to recover actions, as in Decision Diffuser (DD)~\citep{ajay2022conditional}.
Beyond these, several variants extend diffusion-based planning in different directions. For example, Latent Diffuser~\citep{li2024efficient} plans in a high-level latent skill space to improve generalization and LDP~\citep{xie2025latent} plans with high-level latent actions directly from high-dimensional action-free demonstrations. Other approaches incorporate multi-task context to enhance adaptation and performance in unseen tasks, including MetaDiffuser~\citep{ni2023metadiffuser}, AdaptDiffuser~\citep{liang2023adaptdiffuser}, and MTDiff-p~\citep{he2023diffusion}. In addition, recent efforts have explored various extensions of diffusion planning, such as ensuring safety during generation~\citep{xiao2025safediffuser}, handling multi-agent scenarios~\citep{jiang2023motiondiffuser, ajay2023compositional}, learning  skills~\citep{liang2024skilldiffuser}, and application in RL from human feedback (RLHF)~\citep{dong2024aligndiff}. 

\textbf{Diffusion Policy} In contrast to diffusion-based planners, Diffusion Policy methods directly parameterize the policy $\pi_\theta(\rva \mid \rvs)$ using diffusion models. For example, Diffusion Policy~\citep{chi2023diffusion} uses a diffusion model to generate actions with expressive, multimodal distributions. DPPO~\citep{ren2025diffusion} extends this idea by modeling a two-layer MDP structure, where the inner MDP represents the denoising process and the outer MDP corresponds to the environment. This framework enables fine-tuning of diffusion-based policies in RL settings. Another line of work integrates diffusion models with model-free methods for offline RL by using diffusion models as to model the action distributions~\citep{wang2022diffusion, hansenestruch2023idql, chen2023boosting, lu2023contrastive}.


Recent explorations have also aimed to unify diffusion-based planning and policy learning within a single framework. For example, the Unified Video Action model (UVA)~\citep{li2025unified} and Unified World Models (UWM)~\citep{zhu2025unified} leverage diffusion models to jointly model planning and action generation, demonstrating scalability on large-scale robotic tasks with pre-training.
In a similar spirit, {\mcall} provides a general framework that can be integrated into both diffusion planners and diffusion-based policies. However, {\mcall} differs in its explicit modeling of latent factors that influence the data generation process. By incorporating latent identification directly into the diffusion process, {\mcall} enables more structured, context-aware decision-making in partially observable and dynamically changing environments.

\subsection{Latent Belief State Learning in POMDP}
\label{app:latent_rl}
In partially observable Markov decision processes (POMDPs), single-step observations are typically insufficient for making optimal decisions. A common strategy to overcome this limitation involves encoding an agent’s history, encoding past observations and actions into a belief state that captures a distribution over latent environmental states. Although such belief representations can, in theory, support optimal policy derivation~\citep{kaelbling1998planning, hauskrecht2000value, gangwani2020learning}, their exact computation depends on full knowledge of the transition and observation models. This requirement quickly becomes intractable in high-dimensional settings.

To address this, recent work has focused on learning approximate belief representations directly from data. Notable approaches include those using recurrent neural networks~\citep{guo2018neural} and variational inference methods~\citep{igl2018deep, gregor2018temporal}, which enable agents to encode temporal structure and uncertainty into compact latent embeddings. These representations are then used to inform downstream policy learning, optimizing for cumulative rewards.

This direction also aligns with developments in meta-reinforcement learning and non-stationarity, where belief states or Bayesian embeddings are used to capture hidden task contexts. Agents trained across a distribution of tasks can use these latent variables to infer new environments and adapt quickly~\citep{zintgraf2021varibad, nguyen2021probabilistic, huang2021adarl, rakelly2019efficient, xie21c, feng2022factored, feng2023learning, liang2024dynamite, cao2025towards, wang2025modeling, feng2026learning}. For example, MetaDiffuser~\citep{ni2023metadiffuser} incorporates task context as conditioning input to diffusion-based decision models.  {Similarly, \citet{pertsch2021accelerating} and \citet{zeng2023goal} use similar variational objectives (ELBO loss) to learn latent skill priors and predictive information for RL, where these latents greatly help policy learning.} 

Our approach diverges from these by offering theoretical guarantees on the identifiability of latent factors from minimal temporal observations. Rather than depending on diverse multi-environment data, we introduce a framework that captures the full data generation process in RL using diffusion models. In contrast to MetaDiffuser, which assumes static task-level context, our model treats the latent context as a dynamic, time-evolving process that governs both environment transitions and agent behavior, capturing the underlying temporal structure of RL trajectories more faithfully.

\subsection{Autoregressive Diffusion Models}
To model temporal consistency and dynamics in sequential data such as videos and audios, recent work has incorporated autoregressive structures into diffusion models. These approaches differ in how they condition on prior time steps during generation and can be categorized into two main categories.
\textbf{(1) Conditioning on clean (denoised) inputs} (\citep{zheng2024open, gao2024vid, blattmann2023stable}). At each time step $t$, the denoising model is conditioned on the previously denoised outputs $\{\mathbf{x}_{<t}^{0}\}$:
$p_\theta(\mathbf{x}_t^{k-1} \mid \mathbf{x}_t^{k}, \mathbf{x}_{<t}^{0})$,
where $\mathbf{x}_t^{k}$ is the current noisy input, and $\mathbf{x}_{<t}^{0}$ denotes the clean (fully denoised) observations from earlier time steps.
\textbf{(2) Conditioning on noisy inputs} (\citep{ho2022video, chen2024diffusion, xie2024progressive, magi1}).  These methods instead condition on previous time steps at their corresponding noise levels. This setting can be further divided into two cases:
(a) \textit{fully noisy conditioning}~\citep{ho2022video}: the model conditions on all prior time steps at the same noise level $k$: $p_\theta(\mathbf{x}_{<t}^{k-1}, \mathbf{x}_t^{k-1} \mid \mathbf{x}_t^{k}, \mathbf{x}_{<t}^{k}, )$.
(b) \textit{partially noisy conditioning}: each previous time step $i < t$ is conditioned at its own noise level $k_i$, which may vary over time:
$p_\theta(\mathbf{x}_0^{k_0-1},\mathbf{x}_1^{k_1-1}, \ldots, \mathbf{x}_T^{k_T-1}  \mid \mathbf{x}_0^{k_0},\mathbf{x}_1^{k_1}, \ldots, \mathbf{x}_T^{k_T})$. Specifically, Diffusion Forcing (DF)~\citep{chen2024diffusion} proposes a general framework in which each time step $\mathbf{x}_t$ assigns an independent noise level. In contrast, other works adopt time-dependent noise schedules that vary with the temporal index~\citep{xie2024progressive, magi1, wu2023ar}.

To model the causal generative process of RL trajectories, our approach also employs time-dependent noise scheduling to capture temporal dynamics. However, unlike prior work, we further integrate the identification of latent factors directly into the denoising process. This is achieved through a structured reinforcement step during training and a zig-zag inference procedure at test time, enabling our model to more faithfully recover the underlying causal structure in sequential decision-making.

\subsection{Summary}
To sum up, we compare our approach with representative diffusion- and meta-learning–based baselines (Table \ref{tab:baseline_cmp}). Diffuser, DP, IDQL, and DD do not model or infer latent contexts; DF adopts autoregressive denoising but still lacks context inference. Meta-Diffuser, LILAC, and DynaMITE learn latents via meta-learning but omit our minimal–sufficient block design and backward refinement. LDCQ and LDP model only high-level latent actions/skills without explicit context identification. In contrast, our method jointly models latent factors, employs full autoregressive denoising with zig-zag sampling, and introduces a backward refinement mechanism that enables identifiable latent contexts.

\begin{table}[H]\label{tab:baseline_cmp}
\centering
\caption{Comparison with representative baselines on whether they model latent contexts, use autoregressive (AR) denoising, and enforce minimal \& sufficient observation blocks.}
\label{tab:baseline_cmp}
\setlength{\tabcolsep}{6pt}
\begin{tabular}{lccc}
\toprule
\textbf{Method} & \textbf{Latent Factors} & \textbf{AR Denoising} & \textbf{Min. \& Suff. Obs.} \\
\midrule
\textbf{Ours} & Yes (dyn., rew., act.) & Yes & Yes (refine, zig-zag) \\
Diffuser / DP / DD / IDQL & No & No & No \\
DF & No & Yes & No \\
Meta-Diffuser / LILAC / DynaMITE & Yes (dyn., rew.\ only) & No & No \\
LDCQ & Yes (hi-level act.) & No & No \\
LDP & Yes (hi-level act.) & No & No \\
\bottomrule
\end{tabular}
\end{table}

\section{Benchmark Settings and Illustrations}
\label{app:env}
\subsection{Latent Change Factors Design}
\label{app:cf}
We consider the latent change factors on dynamics and rewards. 
We consider the Half-Cheetah and Ant environments from the OpenAI Gym suite, which are widely used MuJoCo locomotion benchmarks~\citep{brockman2016openai} for evaluating continuous control algorithms. In Half-Cheetah, the agent is a planar bipedal robot with a 17-dimensional state space and a 6-dimensional continuous action space, where the goal is to move forward by applying torques to six actuated joints. In Ant, a quadrupedal robot operates in a 3D space with a 111-dimensional state space and an 8-dimensional action space, requiring more complex coordination across its four legs. In both environments, the reward encourages forward velocity while penalizing excessive control inputs and, in the case of Ant, also promotes stable contact with the ground. We consider variants of the Half-Cheetah environment to study changes in dynamics, specifically \textbf{Cheetah-Wind-E} and \textbf{Cheetah-Wind-S}, which introduce external wind forces applied to the agent. In \textbf{Cheetah-Wind-E}, an opposing wind force is applied at the beginning of each episode and remains constant throughout, defined as 
\( f_w = 10 + 5 \sin(0.8i) \), where \( i \) is the episode index. For this case, since $\rvc$ change over episode, we use data from several consecutive episodes to estimate it. 
In \textbf{Cheetah-Wind-S}, the wind force varies at every time step according to the same formula 
\( f_w = 5 + 5 \sin(0.5t) \), with \( t \) now representing the time step in each episode. We also consider variations in the reward function. In \textbf{Cheetah-Dir-E}, the reward depends on a time-varying goal direction, requiring the agent to alternate between moving forward and backward. Specifically, the reward at episode \( t \) is defined as
\[
r_t = d_t \cdot v_t - 0.1 \|\mathbf{a}_t\|^2,
\]
where \( v_t \) is the agent’s forward velocity, \( \mathbf{a}_t \) is the action vector (torques applied), and \( d_t \in \{-1, +1\} \) indicates the target direction at time \( t \). The direction signal \( d_t \) changes, giving a non-stationary reward function that challenges the policy to adapt to shifting goals. Specifically, we consider \[
d_t = \sigma(5 \cdot \sin(2\pi t / 200)),
\]
where \( \sigma(\cdot) \) denotes the sigmoid function, \( \alpha \) controls the sharpness of the transition, and \( T \) determines the switching period. This formulation induces a smooth periodic change in the preferred direction of movement, requiring the policy to adapt to gradually shifting objectives.

We also consider a directional reward variant for the \textbf{Ant} environment, denoted as \textbf{Ant-Dir-E}, where the agent is required to alternate its movement direction over time. The reward function at time step \( t \) is defined as
\[
r_t = (2d_t - 1) \cdot v_t^x - 0.1 \|\mathbf{a}_t\|^2,
\]
where \( v_t^x \) is the velocity of the agent's torso along the x-axis (forward direction), \( \mathbf{a}_t \) is the 8-dimensional action vector, and \( d_t \in [0, 1] \) is a smooth directional signal. Similarly, we define \( d_t \) as:
\[
d_t = \sigma(5 \cdot \sin(2\pi t / 200)),
\]
where \( \sigma(\cdot) \) denotes the sigmoid function. This formulation causes the preferred movement direction to alternate approximately every 100 steps. Notably, for these settings with periodic changes (i.e., where latent factors do not evolve at every timestep), we estimate the latent variables periodically and perform refinement in the causal diffusion model only when changes are detected. This follows the same overall framework, but operates at a coarser temporal resolution aligned with the latent change frequency.


\subsection{Overview on Other Benchmarks}
\begin{figure*}[t]
    \centering
\includegraphics[width=\linewidth]{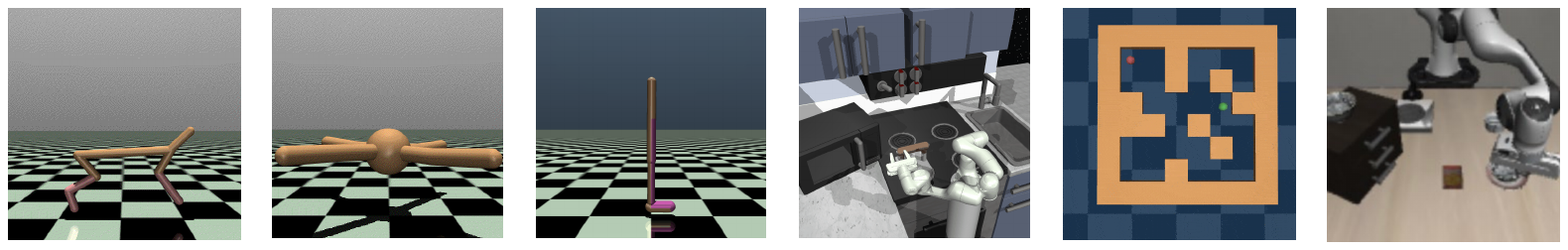}
    \caption{\textbf{Illustrations of the Benchmarks.} From left to right: Half-Cheetah, Ant, Walker, Franka-Kitchen, Maze2D, and LIBERO.}
    \label{fig:app-bench}
\end{figure*}

\begin{figure*}[t]
    \centering
\includegraphics[width=0.6\linewidth]{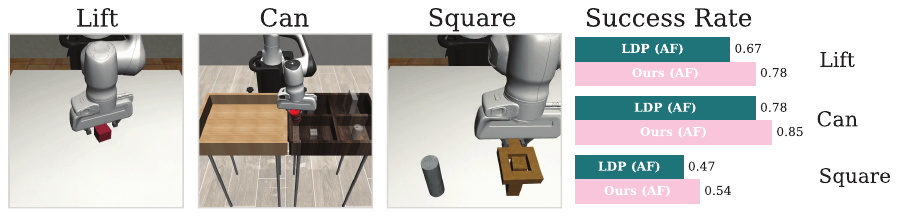}
    \caption{\textbf{Illustrations of RoboMimic Benchmark.}}
    \label{fig:app-bench-2}
\end{figure*}

Fig.~\ref{fig:app-bench}-\ref{fig:app-bench-2} give the illustrations on the used benchmarks. Specifically, other than Cheetah and Ant we introduced before, for others, we consider the basic settings in offline RL. Specifically, 

\paragraph{Maze2D.} Maze2D tasks focus on goal-directed navigation in a 2D plane, where the agent must traverse a maze-like environment to reach specified targets. These settings are designed to evaluate an agent’s ability to reason spatially and follow optimal trajectories based solely on positional and velocity observations.

\paragraph{Franka-Kitchen.} The Franka-Kitchen environment~\citep{gupta2019relay} involves a robotic arm interacting with a series of articulated objects in a realistic kitchen setting. Tasks are composed of multiple stages, such as opening doors or toggling switches, and are intended to assess an agent’s capability in handling long-horizon, multi-step manipulation.

\paragraph{Walker.} The Walker2D environment features a two-legged robot that must learn to walk and balance using continuous torque control. The agent's objective is to maintain forward motion while remaining upright, which requires learning dynamic stability and coordination.

\paragraph{LIBERO~\citep{liu2024libero}.} The Libero benchmark offers a diverse set of continual learning tasks focused on object manipulation and generalization:
\begin{itemize}
  \item \textbf{LIBERO-Object:} The robot is required to manipulate a variety of novel objects through pick-and-place operations. Each task introduces previously unseen objects, encouraging the agent to incrementally build knowledge about object-specific properties and behaviors.
  
  \item \textbf{LIBERO-Goal:} All tasks share a common object set and spatial layout, but vary in goal specifications. This setup tests the agent’s ability to continually adapt to new task intents and motion targets without changes in the visual scene.
  
  \item \textbf{LIBERO-Spatial:} Tasks involve repositioning a bowl onto different plate locations. Although the objects remain fixed, the spatial configurations vary across tasks, requiring the robot to incrementally acquire relational spatial understanding.
\end{itemize}

\paragraph{RoboMimic.} RoboMimic~\citep{robomimic2021} provides a set of manipulation tasks based on human teleoperation demonstrations, varying in difficulty and required precision:
\begin{itemize}
  \item \textbf{Lift:} The robot arm is tasked with lifting a small cube off the table. This task serves as a foundational manipulation scenario focused on grasping and vertical motion.
  
  \item \textbf{Can:} The robot must retrieve a cylindrical can from a cluttered bin and place it into a designated smaller container. This task introduces greater complexity due to object shape and the need for accurate placement.
  
  \item \textbf{Square:} A fine-grained insertion task where the robot picks up a square nut and places it onto a vertical rod. This is the most challenging of the three, requiring precise alignment and control for successful completion.
\end{itemize}


\section{Other Details on \mcall{}
}\label{app: detailed_baselines}
\subsection{Latent Action Planner}
\label{app:latent_action}
For the latent action planner, we align our settings with those used in LDP~\citep{xie2025latent}, specifically focusing on learning directly from image-based demonstrations. We first use a variational autoencoder (VAE) to extract latent representations \( \rvz \) from raw images via image encoders. An inverse dynamics model is then trained to recover actions \( \rva_t \) from pairs of latent states \( (\rvz_t, \rvz_{t+1}) \). A planner is subsequently trained to forecast future latents \( \rvz \).  {Hence, the objective function is $\mathcal{L}_{\mathrm{IDM}}(\xi, \mathbf{z})=\mathbb{E}_{t, \epsilon}\left[\left\|\epsilon_{\xi}\left(\hat{a}_k ; \rvc_k, \mathbf{z}_k, \mathbf{z}_{k+1}, t\right)-\epsilon\right\|^2\right]$, where where $k$ is the time step and $t$ is the diffusion step.}

In our framework, we treat the latent factors \( \rvc \) as high-level latent actions that influence the evolution of \( \rvz \). These latent factors are jointly used with \( \rvz \) to perform both inverse dynamics modeling and latent forecasting, enabling structured planning in the latent space.

We follow the experimental settings established in LDP~\citep{xie2025latent}. Specifically, we use expert demonstrations alongside action-free and suboptimal demonstrations. All hyperparameters and architectural choices for the diffusion models are kept identical to those used in the original LDP implementation. We also directly utilize the pre-trained image encoder provided by LDP. The only modification in our framework is the introduction of an additional latent factor \( \rvc \) trained by our latent factor identification stage, which is incorporated into the model to enhance latent action planning.

\subsection{Noise Scheduling}
In the autoregressive setting, we consider a monotonic increasing denoising schedule \( \{k_1, \ldots, k_T\} \). In practice, we use a linear schedule where \( k_i = \frac{i}{T}K \), with \( K \) denoting the maximum diffusion step used in both training and sampling. We segment the sequence into temporal blocks of length \( T_x + 1 \) ($T_x=T_o$ in all settings), and slide the time window forward by one step at a time. 
This design ensures that the denoising steps progressively increase across the block, aligning the diffusion process with the underlying temporal structure. Such a schedule encourages early steps to rely more on strong priors and later steps to refine based on more contextual information. Additionally, for better illustration, Fig.~\ref{fig:app_inference} provides a detailed illustration of the zig-zag sampling process within a temporal block of 4 timesteps.

\section{Specific Design Choices for Baselines} \label{app:method}
For all baselines, unless otherwise specified, we use the same set of diffusion parameters detailed in Appendix~\ref{app:detail_arch_planner}–\ref{app:detail_arch_policy}. Below, we provide additional details on how specific methods are evaluated. While their diffusion backbones remain consistent as in Appendix~\ref{app:detail_arch_planner}–\ref{app:detail_arch_policy}, these methods include custom design choices and method-specific hyperparameters that are evaluated accordingly.

\subsection{Details on LILAC and DynaMITE}\label{app:sec_baseline_ns}

In these settings, we extend both LILAC and DynaMITE by incorporating a context encoder to infer latent context variables \( \rvc_t \), following their respective designs. Both methods learn belief state embeddings from historical observations. For a fair comparison, we use the same latent identification network architecture as in our framework, but modify the inputs according to each method's assumptions.

Specifically, LILAC and DynaMITE condition their inference networks solely on the historical trajectory \( \rvx_{1:t} \), without access to current and future information. Additionally, consistent with their original implementations, we do not include a separate prior head on top of the GRU; both methods share the encoder for posterior inference and prior prediction. And the primary difference (in implementation) between these two methods lies in the temporal context used: LILAC maintains the full belief over the entire history, i.e., it conditions on \( \rvx_{1:t} \) to infer \( \rvc_{t+1} \), while DynaMITE uses only the most recent context, i.e., it infers \( \rvc_{t+1} \) based solely on \( \rvx_{t} \). 

All other hyperparameters are aligned with those used in our Stage 1 training. The estimated context variables are then provided as additional conditioning inputs to the diffusion-based models.

\subsection{Details on Diffusion Forcing}\label{app:sec_baseline_df}
For Diffusion Forcing, we adopt the same autoregressive noise schedule as in our method, which accounts for causal uncertainty, similarly to the formulation in Eq.~D.1 of~\citep{chen2024diffusion}, to ensure a fair comparison. Additionally, we use the Monte Carlo Guidance (MCG) mechanism introduced in~\citep{chen2024diffusion} for Maze2D, following the original setup. For all other environments, we use the same classifier guidance scheme as the other baselines to maintain consistency in evaluation.

\section{Ablation Analysis}\label{app:full_results}
\subsection{Training/Inference Time Analysis}\label{app:compute}
We conduct all experiments on 4× NVIDIA A100 or 8× RTX 4090 GPUs, depending on the model scale and environment requirements.
The main computational overhead in our framework arises from two components: (i) the latent factor identification network, and (ii) the denoise-and-refine steps in the diffusion model.
During sampling, the additional cost comes from zig-zag latent exploration and latent variable sampling.
However, these steps do not substantially increase either training or inference time.

To quantify this, we report the training and inference speed of our method compared to the base models DD and DP across all environments (Table \ref{tab:compute_overhead}).
Our framework introduces only a moderate computational overhead — typically 1.2–1.3× the runtime of vanilla diffusion backbones, corresponding to roughly 20–30\% extra training time and inference latency.
This cost can be further reduced through parallel latent sampling, lightweight context encoders, or refinement only at inference.
Moreover, we additionally evaluate a Picard-accelerated variant (Table~\ref{tab:picard_speed}, \cite{shih2023parallel}), where iterative refinement is parallelized by conditioning each denoising step on previously denoised nodes.
With Picard iteration, inference time drops to 0.7–0.8× of our default iterative sampler while maintaining comparable performance, demonstrating the potential for further acceleration.

\begin{table}[H]
\centering
\caption{Training and inference time comparison for \mcall{\texttt{-planning}} and \mcall{\texttt{-policy}} variants. We report absolute times (sec/epoch or sec/rollout) and relative overheads.}
\label{tab:compute_overhead}
\begin{tabular}{l|cc|cc}
\toprule
\multirow{2}{*}{\textbf{Environment}} &
\multicolumn{2}{c|}{\textbf{Training Time (sec/epoch)}} &
\multicolumn{2}{c}{ {\textbf{Inference Latency  (ms)}}} \\
& Ours vs DD & Ours vs DP & Ours vs DD & Ours vs DP \\
\midrule
Cheetah & 72.1 / 60.1 (1.20) & 69.8 / 58.4 (1.20) & 182 / 114 (1.16) & 160 / 125 (1.28) \\
Ant & 79.5 / 64.3 (1.24) & 76.0 / 62.0 (1.23) &  148 / 125 (1.19) & 172 / 139 (1.24) \\
Walker & 85.3 / 67.1 (1.27) & 81.5 / 64.2 (1.27) & 182 / 144 (1.28) & 170 / 130 (1.31) \\
Maze2D & 90.2 / 72.0 (1.25) & 88.3 / 69.2 (1.28) & 184 / 149 (1.24) & 196 / 152 (1.29) \\
Libero & 104.0 / 81.0 (1.28) & 102.1 / 78.0 (1.31) & 209 / 169 (1.24) & 219 / 162 (1.35) \\
Kitchen & 117.8 / 88.1 (1.34) & 115.3 / 85.0 (1.36) & 228 / 180 (1.27) & 211 / 168 (1.26) \\
\bottomrule
\end{tabular}
\end{table}

\begin{table}[H]
\centering
\caption{Picard-accelerated inference. }
\label{tab:picard_speed}
\begin{tabular}{lcc}
\toprule
\textbf{Environment} & \textbf{Ours (sec)} & \textbf{Ours+Picard (sec)} \\
\midrule
Cheetah & 1.51 & 1.15  \\
Ant     & 1.67 & 1.25 \\
Walker  & 1.83 & 1.40  \\
Maze2D  & 1.94 & 1.47  \\
Libero  & 2.18 & 1.62 \\
Kitchen & 2.45 & 1.84  \\
\bottomrule
\end{tabular}
\end{table}

\subsection{Ablation Results}\label{app:full_ab}
\subsubsection{Full Results Supplement to Table~\ref{tab:ab}}
Table~\ref{tab:ab-app} presents the full ablation results across all environments, as a supplement to Table~\ref{tab:ab}. Overall, the results highlight the importance of the two key components in causal diffusion modeling: latent identification and autoregressive diffusion, both of which are critical for performance.
\begin{table*}[h]
\caption{\textbf{Ablation on Design Choices.} We conduct ablation studies across a diverse set of tasks, including: Cheetah-Wind-S with a planner-based approach (denoted as Cheetah-1 in the table), Cheetah-Wind-S with a diffusion policy (Cheetah-2), Ant-Dir-E (policy, IDQL-based), Maze2D-Large (planner), Walker2D-Medium-Expert (planner), Kitchen-Partial (planner), LIBERO-Object (diffusion policy), and RoboMimic-Can.}
\label{tab:ab-app}
\begin{center}
\small
\renewcommand{\arraystretch}{1.15}
\setlength{\tabcolsep}{5pt}
\begin{tabular}{l|cccccccc}
\toprule
\textbf{Cases}
& \makecell[c]{Cheetah-1}
& \makecell[c]{Cheetah-2}
& \makecell[c]{Ant}
& \makecell[c]{Maze2D}
& \makecell[c]{Walker}
& \makecell[c]{Kitchen}
& \makecell[c]{RoboMimic} 
& \makecell[c]{LIBERO} 
\\
\midrule
Original  & \textbf{-73.5}
& \textbf{-52.9}
& \textbf{295.8}
& \textbf{161.4}
& \textbf{115.7}
& \textbf{0.70}
& \textbf{0.85}
& \textbf{93.4}\\
w/o refine  & -82.0 
& -60.7
& 261.2
& 156.5
& 107.4
& 0.63
& 0.78
& 90.2\\
w/o zig-zag  & -91.6 
& -56.1
& 258.3
& 147.6
& 107.9
& 0.59
& 0.75
& 91.6\\
same NS  & -89.7
& -62.4
& 259.7
& 140.1
& 105.8
& 0.56
& 0.72
& 85.2\\
random NS  & -84.6 
& -62.9
& 266.4
& 146.3
& 109.1
& 0.61
& 0.76
& 88.5\\
\bottomrule
\end{tabular}
\end{center}\end{table*}

\subsubsection{Noise Schedule: Linear vs.\ Logistic vs.\ Sigmoid}
\label{app:noise_ablation}
We adopt a linear noise schedule by default since any monotonic, bounded schedule suffices to model the data-generation process and linear is simple and stable in practice. To validate this choice, we compare linear, logistic, and sigmoid schedules on three representative tasks. As shown in Table~\ref{tab:noise_sched_ablation}, performance remains stable across schedules with no significant differences, supporting our default choice.

\begin{table}[H]
\centering
\caption{Ablation on noise schedules. ``Performance'' is the task score (higher is better for Maze2D/Kitchen; lower magnitude negative is better for Cheetah as per the benchmark).}
\label{tab:noise_sched_ablation}
\setlength{\tabcolsep}{10pt}
\begin{tabular}{lcc}
\toprule
\textbf{Task} & \textbf{Schedule}  & \textbf{Performance} \\
\midrule
Cheetah      & linear    & $-68.9$ \\
             & logistic  & $-63.6$ \\
             & sigmoid   & $-70.4$ \\
\midrule
Maze2D       & linear    & $161.4$ \\
             & logistic  & $157.6$ \\
             & sigmoid   & $168.5$ \\
\midrule
Franka-Kitchen & linear    & $0.70$ \\
               & logistic  & $0.72$ \\
               & sigmoid  & $0.66$ \\
\bottomrule
\end{tabular}
\end{table}

\subsubsection{Effect of Temporal Block Length on Latent Identification}\label{ab_bl}
We further analyze the impact of temporal block length on latent identification. As shown in Fig.~\ref{fig:app-identification}, the results are consistent with findings reported in the main paper. When the number of observations is insufficient (e.g., \( \leq 4 \)), identification performance degrades. Performance improves when the block length is in a moderate range (5–20), indicating that sufficient temporal context is beneficial. However, using overly long blocks (\( > 20 \)) introduces redundancy and increases optimization difficulty, which in turn harms performance.
\begin{figure*}[th]
    \centering
\includegraphics[width=0.8\linewidth]{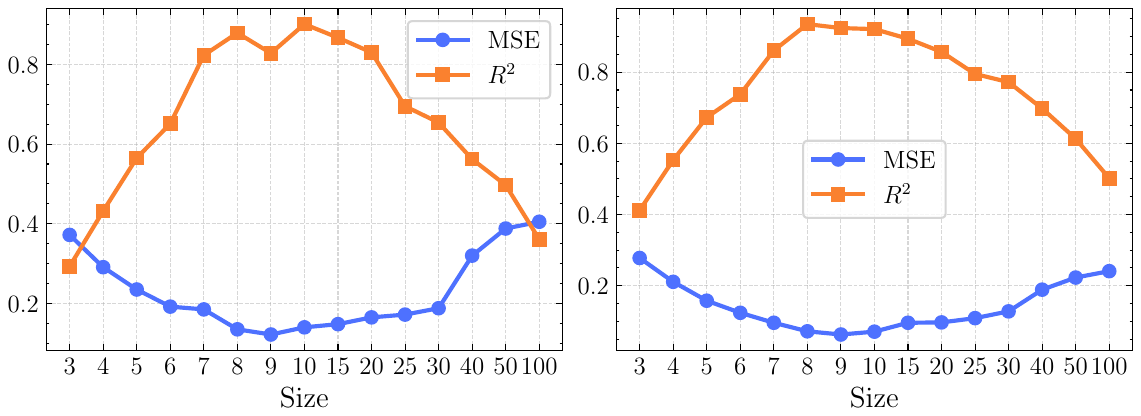}
    \caption{\textbf{Identification results (MSE of linear probing and $R^2$) versus the length of temporal blocks.} Left: Cheetah with time-varying wind; Right: Cheetah with time-varying rewards.}
    \label{fig:app-identification}
\end{figure*}
\paragraph{Clustering}
We assess whether the learned latent space organizes states by the underlying context on the Cheetah wind-change task, where the ground-truth latent evolves as $f_w(t)=5+5\sin(0.5t)$. We sample 1000 time steps, discretize $f_w(t)$ into five equal-frequency bins to define target clusters, embed the corresponding observations into the 20-dimensional learned latent representation, and run $k$-means with $k=5$. We compare our method with LILAC and DynaMIE, together with an ablation that without refinement. Results are given in Fig.~\ref{fig:app-cluster}. 

\begin{figure*}[h]
    \centering
\includegraphics[width=0.8\linewidth]{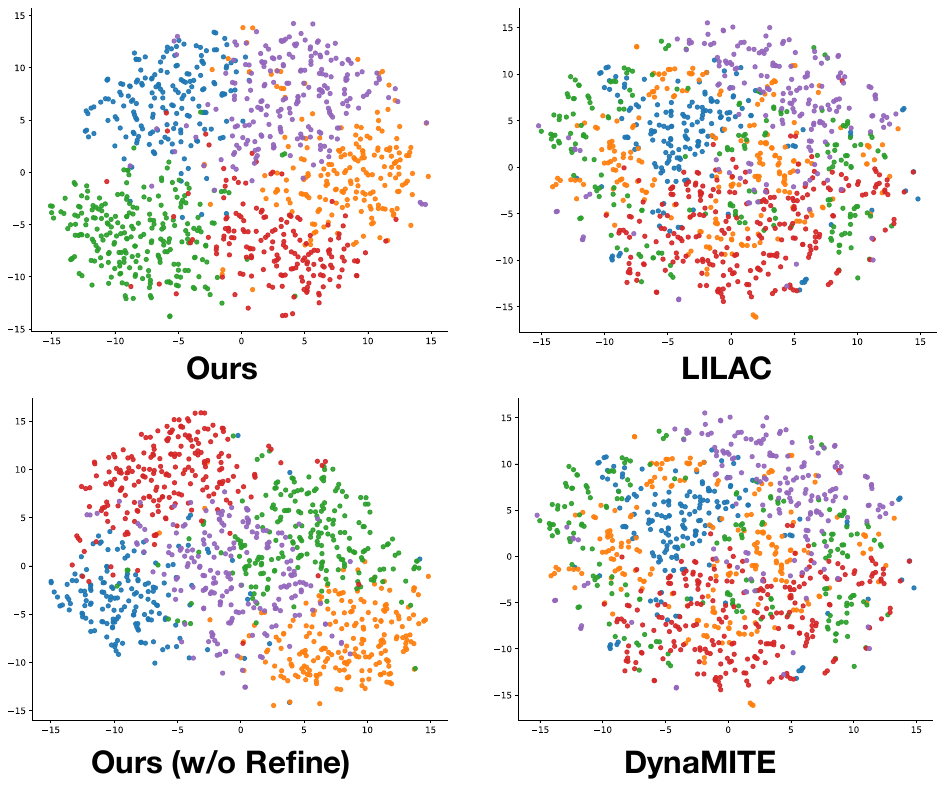}
    \caption{Clustering (t-SNE)results on Cheetah wind-change.}s
    \label{fig:app-cluster}
\end{figure*}

\subsubsection{Latent Probing: Effect of Backward Refinement and Zig--Zag}
\label{app:refine_zigzag_ablation}

To test whether backward refinement and zig--zag primarily help by correcting posterior mismatch, we perform a latent probing analysis on \textsc{Cheetah} with changing wind. We linearly map the learned latent representation to the ground-truth wind variable using a simple least-squares probe (trained on a subset of blocks and evaluated on held-out blocks). Table~\ref{tab:probe_refine_zigzag_full} reports the mean squared error (MSE) of this probe for three variants: (i) the full model with backward refinement and zig–zag; (ii) without refinement; and (iii) without zig–zag.

\begin{table}[H]
\centering
\caption{Linear probing MSE for recovering the ground-truth wind latent on \textsc{Cheetah} (changing wind). Lower is better.}
\label{tab:probe_refine_zigzag_full}
\setlength{\tabcolsep}{10pt}
\begin{tabular}{lc}
\toprule
\textbf{Variant} & \textbf{MSE} \\
\midrule
Full (with refinement + zig--zag) & \textbf{0.18} \\
w/o refinement                    & 0.28 \\
w/o zig--zag                      & 0.23 \\
\bottomrule
\end{tabular} \label{tab:probing}
\end{table}

\paragraph{Analysis.}
The full model achieves the lowest MSE, indicating more accurate recovery of the latent wind. Removing backward refinement yields the largest degradation (0.18 $\to$ 0.28), consistent with the role of refinement in letting future evidence within a block update the latent posterior and reduce temporal lag. Disabling zig--zag also harms accuracy (0.18 $\to$ 0.23), suggesting that alternating conditioning helps align the denoising trajectory with the latent dynamics rather than purely following the forward temporal pass. Together, these results support our claim that both components reduce posterior mismatch and improve latent identifiability, which in turn benefits planning and control in settings with evolving hidden factors.

\subsubsection{On the Effect of Planning and Execution Horizons: Long-horizon Planning} \label{ab_lh}
\begin{figure*}[th]
    \centering
\includegraphics[width=0.8\linewidth]{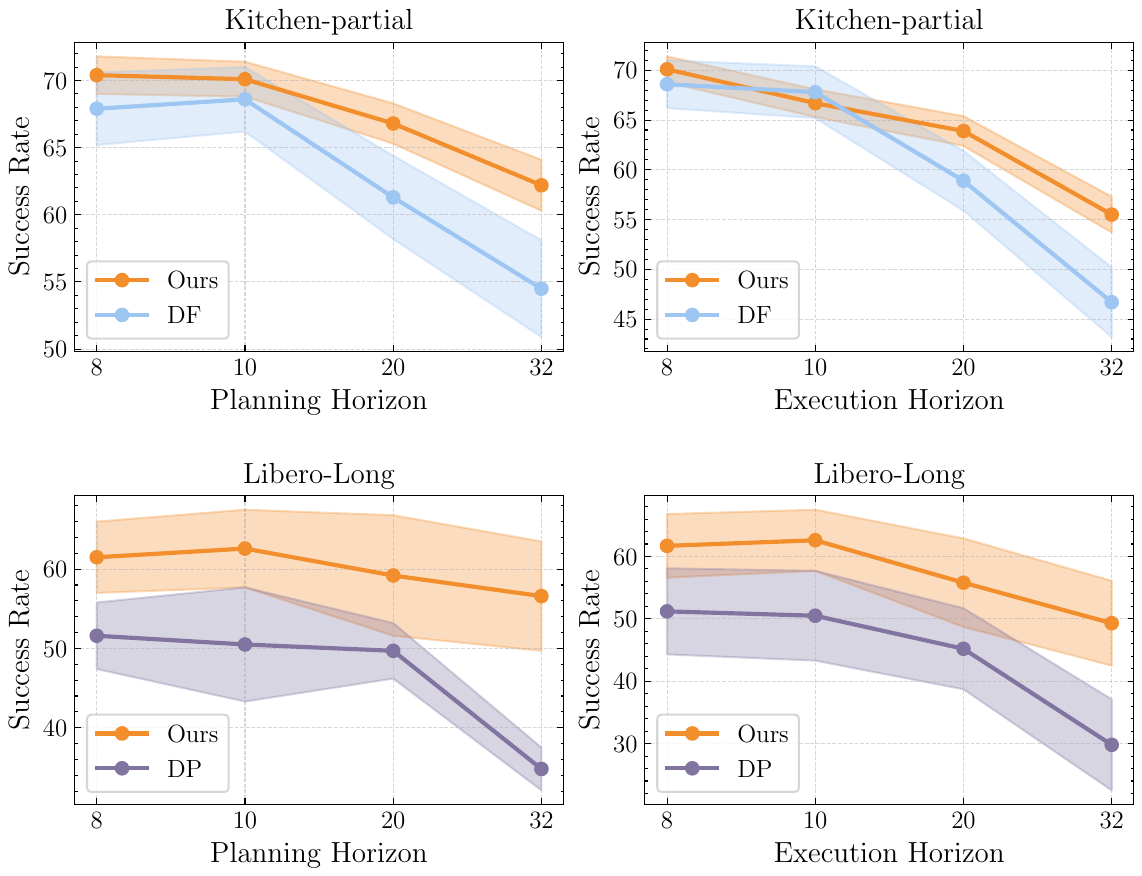}
    \caption{\textbf{Results with different planning and execution horizons.} We evaluate on Kitchen-partial and Libero-Long experiments.}
    \label{fig:app-horizon}
\end{figure*}

We study the robustness of our approach under increased planning and execution horizons ($T_p$ and $T_a$). Specifically, we evaluate on two challenging tasks—Franka-Kitchen-Partial and Libero-Long, where the original settings are Kitchen ($T_p=16$, $T_a=8$) and Libero ($T_p=10$, $T_a=8$). Results are in Fig.~\ref{fig:app-horizon}. When we increase these horizons, we observe that the baselines, DP and DF, suffer significant performance drops. In contrast, \mcall maintains relatively high performance. This demonstrates that modeling the underlying causal generative process, through autoregressive structure and latent representations, enables better \textbf{long-horizon planning}. Although we do not explicitly impose latent variables, our model implicitly learns representations that can track stochasticity and support smooth control.

\clearpage

\end{document}